\def\eqref#1{equation~\ref{#1}}
\def\1{\bm{1}}
\def\rc{{\textnormal{c}}}
\def\re{{\textnormal{e}}}
\def\rf{{\textnormal{f}}}
\def\rmW{{\mathbf{W}}}
\def\rmX{{\mathbf{X}}}
\def\vx{{\bm{x}}}
\def\vz{{\bm{z}}}
\DeclareMathAlphabet{\mathsfit}{\encodingdefault}{\sfdefault}{m}{sl}
\SetMathAlphabet{\mathsfit}{bold}{\encodingdefault}{\sfdefault}{bx}{n}
\def\gE{{\mathcal{E}}}
\def\gF{{\mathcal{F}}}
\def\gH{{\mathcal{H}}}
\def\gU{{\mathcal{U}}}
\def\gX{{\mathcal{X}}}
\def\gZ{{\mathcal{Z}}}
\newcommand{\E}{\mathbb{E}}
\newcommand{\R}{\mathbb{R}}
\DeclareMathOperator*{\argmax}{arg\,max}
\DeclareMathOperator*{\argmin}{arg\,min}
\newcommand{\indep}{\perp \!\!\! \perp}
\NewDocumentCommand{\shurui}
{ mO{} }{\textcolor{orange}{\textsuperscript{\textit{Shurui}}\textsf{\textbf{\small[#1]}}}}
\NewDocumentCommand{\xiner}
{ mO{} }{\textcolor{red}{\textsuperscript{\textit{Xiner}}\textsf{\textbf{\small[#1]}}}}
\NewDocumentCommand{\rebuttal}
{ mO{} }{#1}
\theoremstyle{plain}
\newtheorem{theorem}{Theorem}[section]
\newtheorem{proposition}[theorem]{Proposition}
\newtheorem{lemma}[theorem]{Lemma}
\theoremstyle{definition}
\newtheorem{definition}[theorem]{Definition}
\theoremstyle{remark}
\DeclareRobustCommand\onedot{\futurelet\@let@token\@onedot}
\def\@onedot{\ifx\@let@token.\else.\null\fi\xspace}
\def\eg{\emph{e.g}\onedot} 
\def\ie{\emph{i.e}\onedot}
\newcommand{\printfnsymbol}[1]{%
  \textsuperscript{\@fnsymbol{#1}}%
}
\newcommand{\cmark}{\ding{51}}  
\newcommand{\xmark}{\ding{55}}  
\NewDocumentCommand{\shuiwang}
{ mO{} }{\textcolor{blue}{\textsuperscript{\textit{Shuiwang Ji}}\textsf{\textbf{\small[#1]}}}}
\icmltitlerunning{Discovering Physics Laws of Dynamical Systems via Invariant Function Learning}
\begin{document}

\twocolumn[
\icmltitle{Discovering Physics Laws of Dynamical Systems via Invariant Function Learning}



\icmlsetsymbol{equal}{*}

\begin{icmlauthorlist}
\icmlauthor{Shurui Gui}{tamu}
\icmlauthor{Xiner Li}{tamu}
\icmlauthor{Shuiwang Ji}{tamu}
\end{icmlauthorlist}

\icmlaffiliation{tamu}{Department of Computer Science 
  \& Engineering,
  Texas A\&M University,
  College Station, Texas, USA}

\icmlcorrespondingauthor{Shuiwang Ji}{sji@tamu.edu}

\icmlkeywords{Machine Learning, ICML}

\vskip 0.3in
]



\printAffiliationsAndNotice{}  

\begin{abstract}

We consider learning underlying laws of dynamical systems governed by ordinary differential equations (ODE). A key challenge is how to discover intrinsic dynamics across multiple environments while circumventing environment-specific mechanisms. Unlike prior work, we tackle more complex environments where changes extend beyond function coefficients to entirely different function forms. For example, we demonstrate the discovery of ideal pendulum's natural motion $\alpha^2 \sin{\theta_t}$ by observing pendulum dynamics in different environments, such as the damped environment $\alpha^2 \sin(\theta_t) - \rho \omega_t$ and powered environment $\alpha^2 \sin(\theta_t) + \rho \frac{\omega_t}{\left|\omega_t\right|}$. Here, we formulate this problem as an \emph{invariant function learning} task and propose a new method, known as \textbf{D}isentanglement of \textbf{I}nvariant \textbf{F}unctions (DIF), that is grounded in causal analysis. We propose a causal graph and design an encoder-decoder hypernetwork that explicitly disentangles invariant functions from environment-specific dynamics. The discovery of invariant functions is guaranteed by our information-based principle that enforces the independence between extracted invariant functions and environments. Quantitative comparisons with meta-learning and invariant learning baselines on three ODE systems demonstrate the effectiveness and efficiency of our method. Furthermore, symbolic regression explanation results highlight the ability of our framework to uncover intrinsic laws. 
Our code has been released as part of the AIRS library (\href{https://github.com/divelab/AIRS/tree/main/OpenODE/DIF}{https://github.com/divelab/AIRS/}).

\end{abstract}

\section{Introduction}

\begin{figure*}[t]
    \centering
    \resizebox{0.8\textwidth}{!}{\includegraphics[width=1\textwidth]{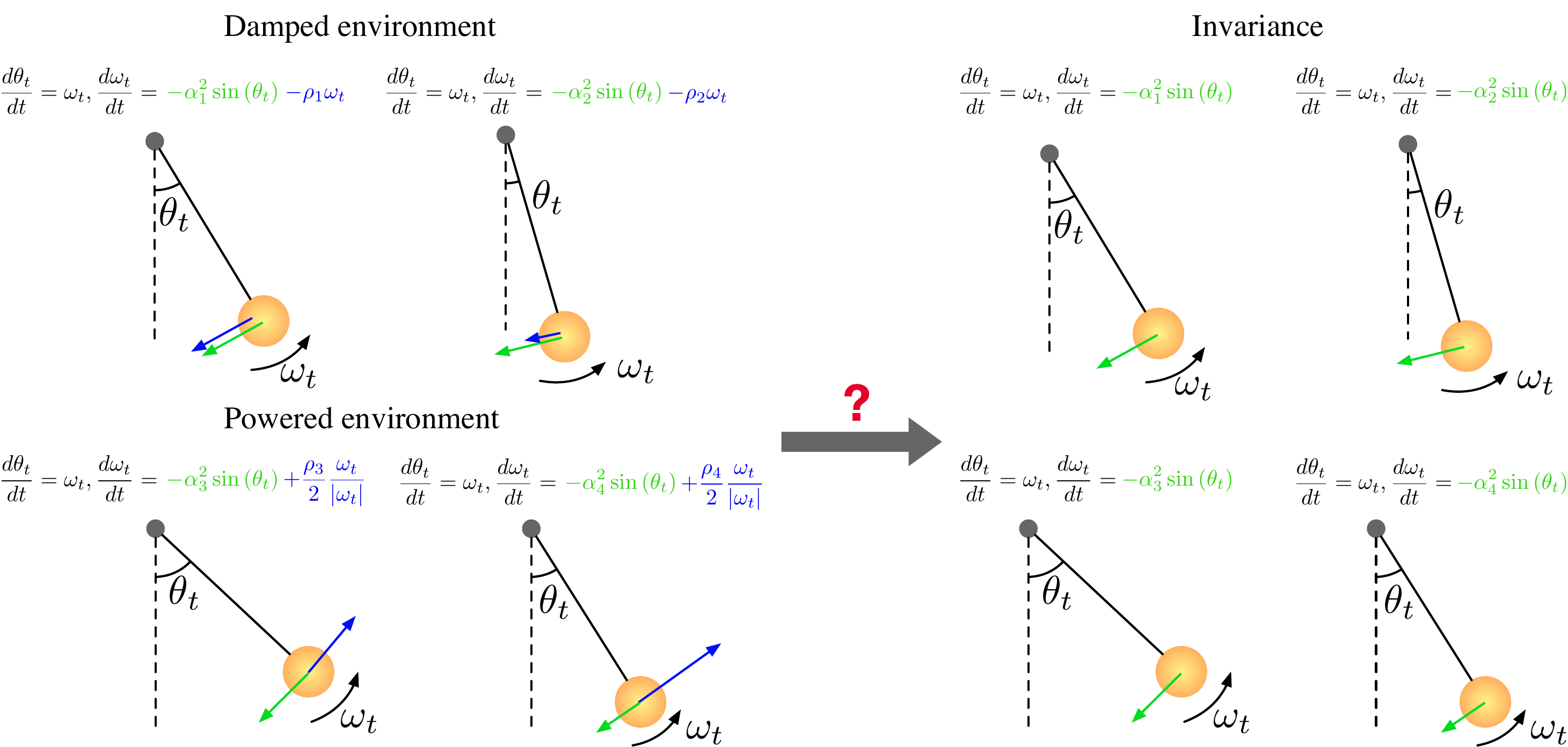}}
    \caption{ \textbf{Multi-environment Pendulum ODE systems.} In this example, ODEs with different coefficients and function forms are used to extract their corresponding invariant functions (green).}\label{fig:pendulum_example}
\end{figure*}

Deep neural networks~\citep{goodfellow2016deep} have been widely used for predicting dynamical systems~\citep{aussem1999dynamical,singh2012predictive,wang2016data,lusch2018deep,yeo2019deep,giannakis2019data}. Numerous efforts~\citep{Kirchmeyer2022coda,wang2022meta,moulimetaphysica} have focused on modeling dynamical systems by forecasting future states from past observations, often emphasizing rapid adaptation to new systems or improved model architectures. However, an important scenario in scientific discovery has been overlooked, \ie, , which aims to identify shared motion patterns across dynamical systems observed in multiple environments. This task not only facilitates scientific equation discovery but also holds potential for advancing the understanding and extraction of physical laws from observational data, such as images and videos, where physical laws are highly entangled. As the first step in invariant function learning, this paper focuses on ordinary differential equation (ODE) systems.\footnote{This work introduces a new perspective and focuses only on ODE systems. While the method has the potential to be extended to PDE systems, it is not directly applicable to them due to their continuous nature and the absence of multi-environment datasets designed by domain experts currently. Please refer to Appx.~\ref{app:faq} for frequently asked questions.}

The need for invariant function learning arises because data collected is often observed under varying environments and entangled with multiple factors. For instance, the oscillation of a simple pendulum~\citep{yin2021augmenting} is commonly influenced by air friction; a prey-predator system~\citep{ahmad1993nonautonomous} can be affected by limited resources. These factors significantly hinder deep models from learning the true and invariant dynamics. Instead of capturing invariant dynamical patterns, deep models tend to be sensitive to trivial information and spurious correlations, leading to failures in identifying the true and isolated mechanisms. In light of this challenge, we explore an innovative setting called \emph{invariant function learning}, which aims to extract intrinsic mechanisms from data observed in multiple environments. Unlike prior work, we aim to tackle broader and more complex environments where changes extend beyond function coefficients to entirely different function forms. For example, we target the discovery of ideal pendulum's natural motion $\alpha^2 \sin{\theta_t}$ by observing pendulum dynamics in different environments such as the damped environment $\alpha^2 \sin(\theta_t) - \rho \omega_t$ and powered environment $\alpha^2 \sin(\theta_t) + \rho \frac{\omega_t}{\left|\omega_t\right|}$, as shown in our motivation example~\ref{fig:pendulum_example}.

Invariant function learning presents two key challenges. Firstly, invariant mechanisms are not isolated entities, and being intertwined with varying initial conditions, system parameters, and time makes them difficult to define or disentangle. Secondly, existing invariant learning techniques~\citep{arjovsky2019invariant,lu2021invariant,rosenfeld2020risks, peters2016causal}, which are primarily designed for categorical tasks, do not extend effortlessly to dynamical systems, requiring the design of new invariant principles.
To overcome these challenges, we formulate our invariant function learning framework as a causal graph where functions are parameterized and isolated to be learned and disentangled. Furthermore, we propose an invariant function learning principle and the implementation of an encoder-decoder hypernetwork~\citep{ha2016hypernetworks} to identify the true invariant mechanism. Specifically, our contributions are listed as follows. (1) We introduce a new task, invariant function learning, aimed at scientific discovery. (2) We formulate its framework with causal foundations. (3) We propose an invariant function learning principle to identify true invariant mechanisms and design a method, Disentanglement of Invariant Function (DIF), with hypernetwork implementation. (4) To facilitate comprehensive benchmarking, we propose multi-environment ODE datasets and design several new baselines by adapting existing meta-learning and invariant learning techniques to our function learning framework.


\section{Invariant Function Learning for Dynamical System}

\label{background_and_problem_formulation}
In this section, we first provide the background on ODEs, followed by the introduction and formulation of our invariant function learning task, along with the causal analyses that underpin our proposal.
\subsection{Ordinary Differential Equation Dynamical System}


We describe a dynamical system using an ordinary differential equation (ODE) as:
\begin{equation}
    \frac{d\vx_t}{dt} = f(\vx_t),
\end{equation}
where $\vx_t\in \mathcal{X} \subseteq \R ^{d}$ includes $d$ hidden states of the system at time $t$. $f \in \mathcal{F}: \mathcal{X} \mapsto T\mathcal{X}$ is the derivative function of the dynamical system mapping the hidden states to their tangent space, where $\mathcal{F}$ is the function space containing functions that describe all dynamical systems with $d$ hidden states.

\begin{table*}[t!]
    \centering
    \caption{\textbf{Comparison of environments with previous works in the pendulum ODE system.} We list examples from 2 environments, the pendulum states and coefficients distribution within one environment, and inference time targets. The \textcolor{blue}{blue} factors are changing across different environments. Full function environments are provided in Appx.~\ref{app:datasets}.}
    \resizebox{0.8\textwidth}{!}{
    \begin{tabular}{l|ccccc}
    \toprule
        Type & Environment $e=1$ & Environment $e=2$ & Distribution & Inference \\ 
    \midrule
        Coefficient environment & $f_1 = - \textcolor{blue}{\alpha^2_1} \sin(\theta_t) - \textcolor{blue}{\rho_1} \omega_t$ & $f_2 = - \textcolor{blue}{\alpha^2_2} \sin(\theta_t) - \textcolor{blue}{\rho_2} \omega_t$ & $p(\bm{\theta}_0, \bm{\omega}_0)$ & $f_3 = - \textcolor{blue}{\alpha^2_3} \sin(\theta_t) - \textcolor{blue}{\rho_3} \omega_t$  \\ 
        Function environment & $\rf_1 = - \bm{\alpha}^2 \sin(\theta_t) \textcolor{blue}{- \bm{\rho} \omega_t}$ & $\rf_2 = - \bm{\alpha}^2 \sin(\theta_t) \textcolor{blue}{+ \bm{\rho} \frac{\omega_t}{\left|\omega_t\right|}}$ & $p(\bm{\theta}_0, \bm{\omega}_0, \bm{\alpha}, \bm{\rho})$ & $\rf_c = - \bm{\alpha}^2 \sin(\theta_t)$  \\ 
    \bottomrule
    \end{tabular}}
    \label{tab:environment_comparison}
\end{table*}

Given proper time discretization, we consider $T$ time steps denoted as $t=t_0, t_1, \ldots, t_{T-1}$. The corresponding $T$-length trajectory can be written as a matrix $X=[\vx_{t_0}, \vx_{t_1}, \ldots, \vx_{t_{T-1}}]\in \R^{d\times T}$. Given the system hidden states before a certain time step $T_c\in \mathbb{N}$, denoted as $X_p=X_{:,0:T_c}=[\vx_{t_0}, \ldots, \vx_{t_{T_c - 1}}]\in \R^{d\times T_c}$, the forecasting task aims to predict the future trajectory $X_{:, T_c:T}=[\vx_{t_{T_c}}, \ldots, \vx_{t_{T-1}}]$. 
For theoretical analysis, we represent random variables in boldface, \eg, the matrix-valued random variable corresponding to $X$ is denoted as $\rmX$; the function $f$ is a realization of the function variable $\rf$ from a given function space. Full notations are detailed in \rebuttal{Tab.~\ref{tab:notation}.}



\subsection{Invariant Function Learning}
\label{sec:invariant_function_learning}

In this paper, we introduce a new task, \textbf{invariant function learning (IFL)}. Specifically, given the prior distribution of trajectories, denoted as $p(\rmX)$, we consider a scenario where trajectories are observed under multiple environments. 
The trajectories observed in an environment $e \in \gE$ are sampled from the conditional distribution $p(\rmX|\re=e)$. Given a trajectory $X$ from environment $e$, our goal is to discover its invariant function $f_c$, which generates the invariant trajectory $X^c$. $f_c$ and $X^c$ only include the shared mechanisms across all environments, thus capturing the underlying natural laws unaffected by environmental factors. For instance, as illustrated in Fig.~\ref{fig:pendulum_example}, in the case of a pendulum system with varying environmental effects such as frictions or power, the goal is to extract the natural motion of an ideal pendulum when excluding these external influences. A set of specific examples for the task is provided below, more examples are available in Appx.~\ref{app:datasets}.
    
     


\textbf{\rebuttal{Function environments.}} The environments in this paper are different from those defined in CoDA~\citep{Kirchmeyer2022coda}, LEAD~\citep{yin2021leads}, and MetaphysiCa~\citep{moulimetaphysica}. As shown in Tab.~\ref{tab:environment_comparison}, the environments/tasks used in previous works, namely, coefficient environments, are defined by the changes on the function coefficients $\alpha$ and $\rho$, { \ie, each environment contains only one function while different environments include functions with different coefficients. In contrast, we consider more complex cases and define environments as the interventions on function forms, \ie, each environment can contain functions with the same function form and different coefficients, while different environments differ in function forms. Specifically, in Tab.~\ref{tab:environment_comparison}, while coefficient environment 1 consists of a single function $f_1$, our function environment 1 includes all the functions in form of $- \bm{\alpha}^2 \sin(\theta_t) - \bm{\rho} \omega_t$ where $\bm{\alpha}\sim p(\bm{\alpha})$, $\bm{\rho}\sim p(\bm{\rho})$. Here, we model these functions as a function random variable $\rf_1$.}

\textbf{\rebuttal{Challenges.}} However, extracting such invariant dynamics presents two significant challenges. Firstly, invariant mechanisms are intertwined with varying initial conditions, system parameters, and time, making them particularly difficult to isolate or define. Secondly, in dynamical systems, state values and their derivatives evolve over time, meaning that there is no single \emph{invariant representation} fixed across time steps. This requires defining invariant factors in a function space, where functions can cover dynamical states. Conventional invariant learning techniques are not directly applicable in this context, as they are typically not designed to capture invariance in function spaces. These challenges necessitate the development of new function representations and the development of a novel invariant learning principle tailored to dynamical systems.

\subsubsection{Causality-based Definitions} 
\label{subsec:causal_perspective}


\begin{figure}[t!]
    \centering
    \resizebox{0.8\linewidth}{!}{\includegraphics[width=1\textwidth]{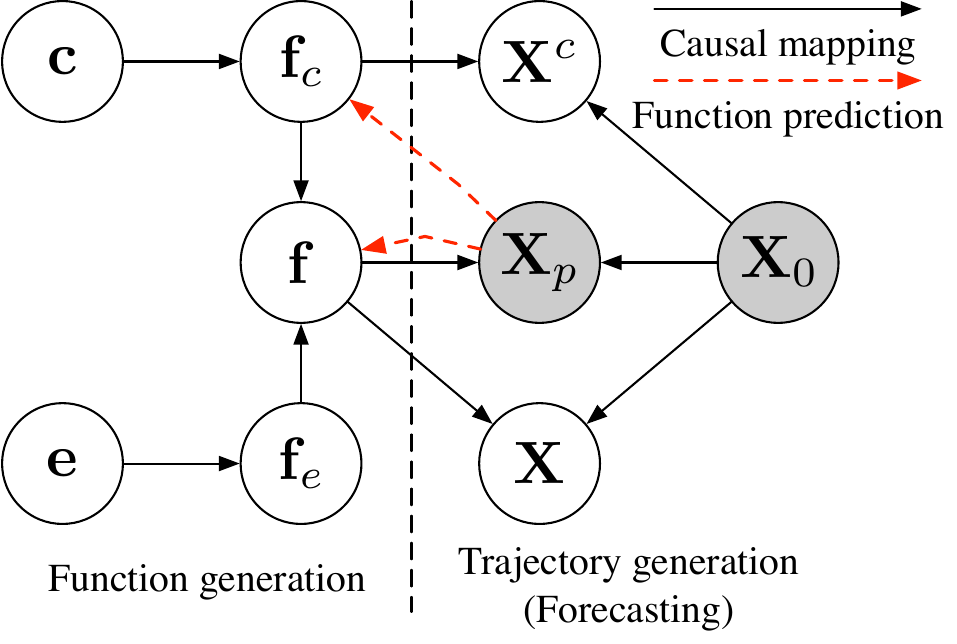}}
    \caption{\textbf{Structural causal model.} { The causal data generation process includes two phases: function generation and trajectory generation, which correspond to our two learning phases in parentheses, namely, function prediction and forecasting. The gray nodes in the causal graph indicate observable variables.}}
    \label{fig:causal_graph}
    \vspace{-0.5cm}
\end{figure}

\begin{figure*}[!t]
    \centering
    \includegraphics[width=0.8\linewidth]{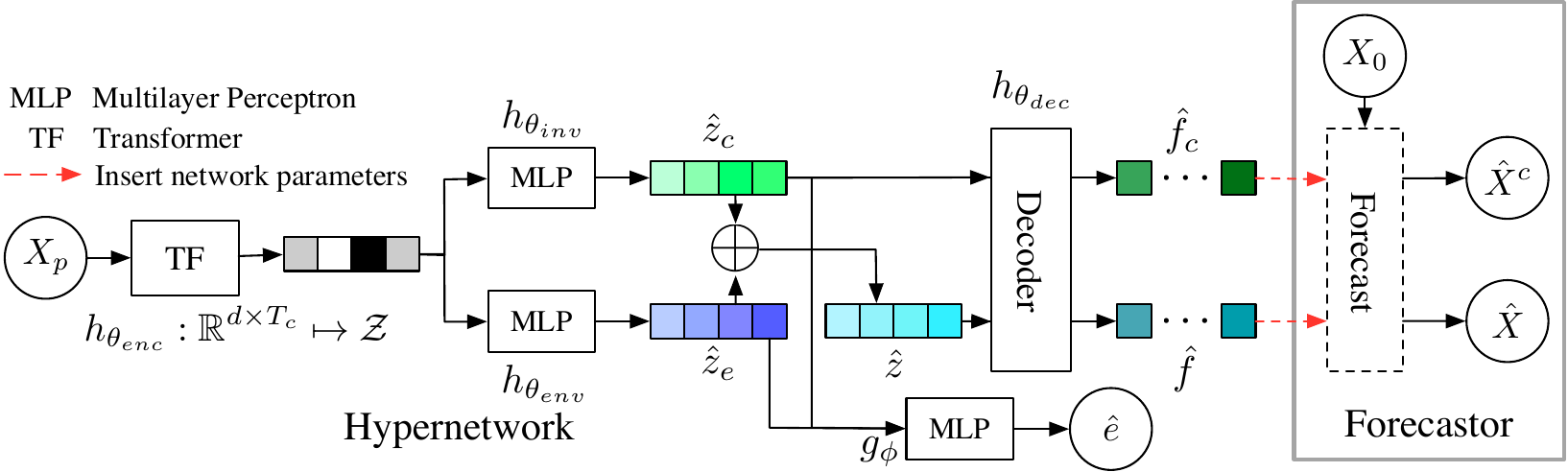}
    \caption{\textbf{DIF framework.}  $\hat{e}$ denotes outputs of the discriminator $g_\phi$ introduced in Sec.~\ref{subsubsec:implementation_of_invariant_function_learning_principle}.}
    \label{fig:method_framework}
    \vspace{-0.2cm}
\end{figure*}

\textbf{First challenge: causal formulation.} In light of the first challenge, we aim to formulate the invariant function learning and the dynamical system forecasting problem from a causal perspective. { As shown in Fig.~\ref{fig:causal_graph}, we formulate the trajectory data generation process as a Structural Causal Model (SCM)~\citep{pearl2009causality}, where $\rc$, $\re$, and $\rmX_0$ are exogenous variables. All endogenous variables, except the function composition step $\rf_c, \rf_e \rightarrow \rf$, are generated with extra random noises to model complex real-world scenarios. Please refer to Appx.~\ref{app:structural_causal_model} for more details.} The optimization goal of the forecasting task is to estimate the true distribution $p(\rmX)$. As can be observed from the causal graph, our function learning framework can be considered as two phases, namely, \rebuttal{\textit{function prediction} and \textit{forecasting}}. For the function prediction phase, similar to inverse problems~\citep{lu2021physics}, given the observed trajectory $\rmX_p$, the target is to reversely infer the invariant derivative function $\rf$ that can represent the dynamics of the system, \ie, fitting $p(\rf|\rmX_p)$. 
Intuitively, taking Fig.~\ref{fig:pendulum_example} as an example, this phase aims at the reasoning of the function basis $\sin (\theta_t)$, $-\omega_t$, $\frac{\omega_t}{|\omega_t|}$, and the coefficients $\alpha, \rho$. After obtaining the derivative function $\hat{\rf}$, the forecasting phase feeds it into a numerical integrator with the initial condition $\rmX_0$ for the $\rmX$ forecasting, which can be demonstrated as $p(\rmX|\hat{\rf}, \rmX_0)$. Note that the bold font variables are random variables instead of individual realizations.


\rebuttal{\textbf{Second challenge: function disentanglement.}} {To handle the second challenge of extracting invariant mechanisms in dynamical systems, we aim to define invariant representation in the function space, which requires the access to intermediate functions and the disentanglement formulations.
With the two-phase function prediction to forecasting process, we explicitly expose the derivative function, allowing us to isolate the function prediction process and disentangle the learning of invariant functions. In the invariant function learning problem, we decompose the exposed function variable $\rf$ into an invariant function variable $\rf_c$ and an environment-specific function variable $\rf_e$, as in Fig.~\ref{fig:causal_graph}.} These two functions are caused by the exogenous factors $\rc$ and $\re$, respectively. Intuitively, the $\rc$ variable includes the common and invariant mechanism, while $\re$ is the environment variable determined by the observational environment of the trajectory, \eg, the pendulum dynamics can be observed in different mediums (environments), such as air and water. 

Formally, the invariant function learning target is the discovery of $\rf_c$, which eliminates the effect of environments and obtains the invariant mechanism. However, from the d-separation perspective~\citep{pearl2009causality}, since $X_p$ is the descendent of the collider $\rf$ of $\rf_c$ and $\rf_e$, given $X_p$, $\rf_c$ and $\rf_e$ are correlated/biased and not distinguishable, preventing the direct fitting of $\rf_c$ given $X_p$. Therefore, we aim to identify this intermediate hidden variable by characterizing the unique properties of the prior distribution of $p(\rf_c)$. Theoretically, since $\rf$ is the collider between $\rf_c$ and $\re$, $\rf_c$ is expected to be independent of $\re$. In addition, among all the functions that are independent of $\re$, $\rf_c$ should be the most informative with respect to the observed trajectories, which will be proved in Thm.~\ref{thm:invariant_function_learning_principle}. This forms the foundation of invariant function learning.

\section{Disentanglement of Invariant Function}

Following the two-phase function learning framework, we now propose the first method for IFL, \textbf{D}isentanglement of \textbf{I}nvariant \textbf{F}unction (DIF), with hypernetwork-based implementations of the two corresponding networks. For the forecasting network, similar to traditional representation learning tasks, we aim to learn a function $f\in\gF: \R^d \mapsto \R^d$. For the function prediction, however, it requires learning a function that returns a function, $h\in \gH: \R^{d\times T_c} \mapsto \gF$, \ie, learning a hyper-function, which is enabled using a hypernetwork.

\subsection{Hypernetwork Design}\label{subsection:hypernetwork_design}

\rebuttal{\textbf{Function prediction.}} To quantify the objective of the hyper-function, we approximate its output function as a neural network with $m$ parameters. The function space $\gF$ consists of all possible neural networks with $m$ parameters, and a function $f\in \gF$ can be represented as a vector in $\R^m$. Thus this parameterization process introduces a hypernetwork structure~\citep{ha2016hypernetworks} into the implementation, as shown in Fig.~\ref{fig:method_framework}. Note that since our parameterization transfers functions into the real number space, it is now possible to apply invariant learning techniques such as IRM~\citep{arjovsky2019invariant} and VREx~\citep{krueger2021out}, where invariant (function) representations need to be extracted. In the following sections, we use $\gF$ and omit $\R^m$ for simplicity.


Practically, since the number of parameters in a network is generally large, we propose to compress the invariant function representations into hidden representations, thus forming an encoder-decoder framework. Specifically, as shown in the Fig.~\ref{fig:method_framework}, the trajectory encoder is a transformer-based network with positional embedding design, denoted as $h_{\theta_\text{enc}}:\R^{d\times T_c} \mapsto \gZ$. Given the hidden representation from the encoder, we further encode an invariant function embedding $\hat{z}_c\in \gZ$ and an environment function embedding $\hat{z}_e \in \gZ$ using two multilayer perceptrons (MLPs), denoted as $h_{\theta_{inv}}$ and $h_{\theta_{env}}$, respectively. Then, aligning with our causal graph as Fig.~\ref{fig:causal_graph}, we combine $\hat{z}_c$ and $\hat{z}_e$ by summing them as the function representation $\hat{z} \in \gZ$, which can be used for full dynamics prediction. 
Finally, we learn an decoder MLP $h_{\theta_{dec}}$ to decode the function representation into $m$-dimensional neural network parameters, \ie, $\hat{f}_c, \hat{f} \in \gF$. 

To facilitate theoretical analysis, we simplify the notations and denote the function prediction process in the hypernetwork as two functions $\hat{f}=h_\theta(X_p)$ and $\hat{f}_c=h_{\theta_c}(X_p)$, where $h_\theta, h_{\theta_c}: \R^{d\times T_c} \mapsto \gF$; $\theta=\{\theta_{enc}, \theta_{inv}, \theta_{env}, \theta_{dec}\}$; $\theta_c=\{\theta_{enc}, \theta_{inv}, \theta_{dec}\}$. In addition, we slightly abuse the notations of $h_\theta$ and $h_{\theta_c}$ on random variables $\rmX_p$ for simplicity, producing $\hat{\rf}=h_\theta(\rmX_p)$ and $\hat{\rf}_c=h_{\theta_c}(\rmX_p)$, respectively. 

\rebuttal{\textbf{Forecasting.} Given the produced neural network function $\hat{f}$, we apply a numerical integrator as our forecastor, a function $g_{int}$ that takes a derivative function $\hat{f}$ and initial states $X_0$ as inputs, to obtain $\hat{X}=g_{int}(\hat{f}, X_0) + \epsilon$ where $\epsilon$ is sampled from a Gaussian noise $\mathcal{N}\left(\mathbf{X} ; 0, \sigma^2 I\right)$ introduced by calculation deviations. This forecasting formulation enables the following probability modeling, where we obtain the forecasting given realizations $X_0$ and $\hat{f}$ as a Gaussian distribution $\mathcal{N}\left(\mathbf{X} ; g_{int}(\hat{f}, X_0), \sigma^2 I\right)$ denoted as $p(\rmX|\hat{f}, X_0)$. Therefore, in probability modeling, $\hat{X}$ is sampled from $p(\rmX|\hat{f}, X_0)$. It is worth noting that unlike in inference time and analyses, it is time-consuming to use numerical integrators during training; therefore, we follow~\cite{moulimetaphysica} and train the model by fitting the derivative $\hat{\frac{d X_t}{dt}}=\hat{f}(X_t)$ with numerical derivatives from the ground-truth $X$ instead. For simplicity, we denote $\hat{f}(\cdot)$ as a neural network based derivative function with parameters $\hat{f}\in \R^m$.} 

\subsection{Discovery of Invariant Function}
With the above hypernetwork design, we can now propose the discovery of the invariant function $\rf_c$. Following the independence and information properties of $\rf_c$ discussed in Sec.~\ref{subsec:causal_perspective}, we achieve invariant function learning with the following theorem, which is our main theoretical result.

{\begin{theorem}[Invariant function learning principle]\label{thm:invariant_function_learning_principle}
    Given the causal graph in Fig.~\ref{fig:causal_graph}, and the predicted function random variable $\hat{\rf}_c=h_{\theta_c}(\rmX_p)$, it follows that the true invariant function random variable $\rf_c$ can be inferred from $h_{\theta^*_c}(\rmX_p)$, where the optimal solution $\theta^*_c$ is obtained through the following optimization:
    \begin{equation}
        \theta^*_c = \argmax_{\theta_c} I(h_{\theta_c}(\rmX_p); \rf|\rmX_0)\ \ s.t.\ \ h_{\theta_c}(\rmX_p) \indep \re,
    \end{equation}
    where $I(\cdot;\cdot)$ is mutual information that measures the information overlap between the predicted invariant function random variable $h_{\theta_c}(\rmX_p)$ and the true full-dynamics function random variable $\rf$.
\end{theorem}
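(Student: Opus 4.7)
The plan is to prove Theorem~\ref{thm:invariant_function_learning_principle} in two parts: (i) \emph{feasibility}, that the ground truth $\rf_c$ itself lies in the constraint set so the optimum is well-defined, and (ii) \emph{optimality}, that $\rf_c$ uniquely achieves the maximum of the mutual information objective. The entire argument is driven by the d-separations read off from the causal graph in Fig.~\ref{fig:causal_graph}, together with the chain rule and data-processing inequality for mutual information.

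\textbf{Feasibility.} First I would verify that $\rf_c \indep \re$. In the SCM, $\rc$ and $\re$ are independent exogenous variables and $\rf_c$ is produced from $\rc$ (plus a noise independent of $\re$), hence $\rf_c \indep \re$. Under an identifiability assumption that $\rf_c$ can be written as a deterministic map $h_{\theta_c}(\rmX_p)$ when environments are diverse enough to distinguish invariant from environment-specific components (as informally discussed in Sec.~\ref{sec:invariant_function_learning}), some $\theta_c$ realizes $\rf_c$ and thus lies in the constraint set.

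\textbf{Optimality.} Next, for any feasible $Z := h_{\theta_c}(\rmX_p)$, I would exploit the fact that $\rf$ is a deterministic function of $(\rf_c, \rf_e)$ in the causal graph (no noise at the composition step) to rewrite the objective using the chain rule:
\begin{equation}
I(Z; \rf \mid \rmX_0) = I(Z; \rf_c, \rf_e \mid \rmX_0) = I(Z; \rf_e \mid \rmX_0) + I(Z; \rf_c \mid \rf_e, \rmX_0).
\end{equation}
The constraint $Z \indep \re$, together with $\rmX_0 \indep \re$ and the fact that the $\re$-driven factor of $\rf_e$ carries all information $Z$ can legitimately share with $\rf_e$, drives the first term to zero. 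The second term is bounded above by $I(\rmX_p; \rf_c \mid \rmX_0)$ by data processing (since $Z = h_{\theta_c}(\rmX_p)$), with equality when $Z$ is a sufficient statistic for $\rf_c$ given $(\rmX_p, \rmX_0)$; because $\rf_c \indep \rf_e$ by independence of the exogenous parents, $\rf_c$ is itself such a sufficient statistic and therefore attains the bound.

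\textbf{Main obstacle.} The delicate step is promoting $Z \indep \re$ to $I(Z; \rf_e \mid \rmX_0) = 0$. If $\rf_e$ were a deterministic function of $\re$, the implication would be immediate, but the SCM permits an independent noise at the $\re \to \rf_e$ step that $Z$ could in principle absorb through $\rmX_p$ while still being marginally independent of $\re$. I would close this gap either by (a) assuming this noise is unidentifiable from $\rmX_p$ alone, so that any $\rmX_p$-measurable quantity depending on it must also depend on $\re$, or (b) strengthening the stated constraint to conditional independence given $\rmX_0$, which matches the conditioning in the objective. Once this is handled, everything else reduces to routine chain-rule and data-processing manipulations.
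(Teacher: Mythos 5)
Your feasibility step plus your chain-rule upper bound only reproduce one half of what the theorem asserts, namely the half the paper calls \emph{existence}: the paper proves it by the contrapositive of essentially your computation (assuming a feasible $\rf_c'$ with larger objective, rewriting $I(\cdot;\rf|\rmX_0)$ via entropies, using $H(\rf|\rf_c)=H(\rf_e)$ and $H(\rf|\rf_c')\ge H(\rf_e|\rf_c')$, and contradicting the independence constraint), so up to direction your optimality argument and the paper's existence argument are the same. You also deserve credit for explicitly flagging the delicate promotion of $h_{\theta_c}(\rmX_p)\indep\re$ to independence from $\rf_e$ despite the exogenous noise $\epsilon_e$ in $\rf_e:=g_e(\re,\epsilon_e)$; the paper's proof uses this promotion silently (it treats $\rf_c'\indep\re$ as giving $H(\rf_e)=H(\rf_e|\rf_c')$ without comment), so your proposed extra assumption is addressing a real, unacknowledged step.

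The genuine gap is the identification direction. The theorem claims that $\rf_c$ can be recovered from $h_{\theta_c^*}(\rmX_p)$ for \emph{any} optimal $\theta_c^*$ (the appendix restatement even asserts $\rf_c=h_{\theta_c^*}(\rmX_p)$), and showing that $\rf_c$ attains the maximum does not give this. A feasible maximizer need not coincide with $\rf_c$: for instance $Z=(\rf_c,g(\epsilon_p))$, with $\epsilon_p$ the observation noise entering $\rmX_p$, is still independent of $\re$ and achieves the same objective value; moreover, matching the maximal value in your decomposition only forces $H(\rf_c\mid Z,\rf_e,\rmX_0)=0$, i.e., $\rf_c$ is recoverable from $Z$ \emph{together with} $\rf_e$, not from $Z$ alone. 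The paper devotes the entire second half of its proof (\emph{uniqueness}) to closing exactly this: from equality of objectives it derives $H(\rf_c\mid\rf_c')=0$, then invokes the surrounding MSE/forecasting-optimality condition to force $H(\rf_c'\mid\rf_c)=0$, and finally argues the resulting bijection must be the identity, giving $\rf_c'=\rf_c$ on the support of $p(\rmX)$. Your proposal has no counterpart to this step — the word "uniquely" in your plan is not backed by any argument — so as written it establishes that $\rf_c$ is \emph{a} solution of the constrained optimization, not that solving the optimization identifies $\rf_c$.
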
}
The proof in Appx.~\ref{proof:invariant_function_learning_principle} shows that the optimal solution is both necessary and sufficient to identify the true invariant function variable available\footnote{In the theorem, we consider any possible function $h$ under our causal assumption instead of a specific neural network realization.}. Therefore, Thm.~\ref{thm:invariant_function_learning_principle} establish guarantees and conditions for the function output $\hat{\rf}_c$ of the hypernetwork to be the invariant function $\rf_c$, fulfilling the goal of the IFL task. 




\subsubsection{Implementation of Invariant Function Learning Principle} \label{subsubsec:implementation_of_invariant_function_learning_principle}
Following Thm.~\ref{thm:invariant_function_learning_principle}, next we introduce the implementation and optimization process of our proposed networks.
We first train the encoder and decoder of our hypernetwork by approximating the trajectory distribution $p(\rmX)$, parameterized as $p(\rmX|h_\theta(\rmX_p), \rmX_0)$, where we apply the cross-entropy minimization. Given that our supervision signals only come from the ground-truth trajectories, we introduce a simple lemma for our optimization processes. The proof is provided in Appx.~\ref{proof:ODE_cross_entropy_minimizaion}.
{\begin{lemma}[ODE cross-entropy minimization\label{lemma:ODE_cross_entropy_minimizaion}]
    Given forecasting model $p(\rmX|h_\theta(\rmX_p), \rmX_0)$, it follows that the cross-entropy minimization between the data distribution $p(\rmX)$ and $p(\rmX|h_\theta(\rmX_p), \rmX_0)$ is equivalent to minimizing mean square error $\min_\theta \E_{X\sim p}\|X - \hat{X}\|^2_2$, where $\hat{X}$ is sampled from $p(\rmX|h_\theta(X_p), X_0)$.
\end{lemma}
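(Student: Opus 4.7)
The plan is to expand the cross-entropy using the explicit Gaussian form of the forecasting model that was already established in the hypernetwork design, and then show that the $\theta$-dependent part of the negative log-likelihood is, up to a positive constant and $\theta$-independent terms, exactly the squared error. This is a standard Gaussian-MLE-equals-MSE argument, specialized to the trajectory setting where $X_p = X_{:,0:T_c}$ and $X_0$ are deterministic projections of $X$, so that a single draw $X \sim p(\rmX)$ simultaneously fixes $X_p$ and $X_0$ for conditioning.

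First, I would write the cross-entropy as
\begin{equation}
    H(p, p_\theta) = -\,\E_{X\sim p(\rmX)}\bigl[\log p(X \mid h_\theta(X_p), X_0)\bigr].
\end{equation}
Second, I would substitute the Gaussian density stated earlier, $p(X \mid \hat{f}, X_0) = \mathcal{N}(X;\, g_{int}(\hat{f}, X_0),\, \sigma^2 I)$, with $\hat{f} = h_\theta(X_p)$. This gives
\begin{equation}
    -\log p(X \mid h_\theta(X_p), X_0) = \frac{1}{2\sigma^2}\bigl\|X - g_{int}(h_\theta(X_p), X_0)\bigr\|_2^2 + C,
\end{equation}
where $C$ depends only on $\sigma$ and the ambient dimension, not on $\theta$. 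Dropping $C$ and the positive factor $1/(2\sigma^2)$ yields the same argmin over $\theta$ as the cross-entropy.

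Third, I would connect the deterministic mean $g_{int}(h_\theta(X_p), X_0)$ to the sampled prediction $\hat{X} = g_{int}(h_\theta(X_p), X_0) + \epsilon$ with $\epsilon \sim \mathcal{N}(0, \sigma^2 I)$. By the standard bias-variance decomposition,
\begin{equation}
    \E_\epsilon\bigl\|X - \hat{X}\bigr\|_2^2 = \bigl\|X - g_{int}(h_\theta(X_p), X_0)\bigr\|_2^2 + d\,T\,\sigma^2,
\end{equation}
so the additional variance term is independent of $\theta$. Taking outer expectation over $X\sim p(\rmX)$ and combining with the previous step establishes that
\begin{equation}
    \argmin_\theta H(p, p_\theta) \;=\; \argmin_\theta \E_{X\sim p,\,\hat{X}\sim p_\theta}\bigl\|X - \hat{X}\bigr\|_2^2,
\end{equation}
which is the claimed equivalence.

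The argument is essentially bookkeeping, so there is no real obstacle; the only point that deserves care is ensuring the noise and normalization constants genuinely drop out of the optimization. Specifically, one must check that (i) the log-normalizer of the Gaussian does not depend on $\theta$ (true because $\sigma$ is fixed), (ii) the injection of stochastic noise $\epsilon$ into $\hat{X}$ only contributes a $\theta$-independent offset to the MSE, and (iii) the joint expectation is well-defined because $X_p$ and $X_0$ are measurable functions of $X$. These are routine verifications, making the lemma a direct consequence of the Gaussian forecasting assumption introduced in Sec.~\ref{subsection:hypernetwork_design}.
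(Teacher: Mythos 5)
Your proposal is correct and follows essentially the same route as the paper's proof: expand the cross-entropy as the Gaussian negative log-likelihood, reduce it to the squared error to the integrator mean, and show the sampled-noise contribution to $\E\|X-\hat{X}\|_2^2$ is a $\theta$-independent constant. If anything, your bias--variance decomposition handles the cross term with $\epsilon$ slightly more cleanly than the paper's expansion, but the argument is the same.
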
}
In order to discover invariant functions, we apply the invariant function learning principle, which requires maximizing the conditional mutual information between the predicted function {random variable} $\hat{\rf}_c=h_{\theta_c}(\rmX_p)$ and $\rf$. Based on the derivation of Lemma~\ref{lemma:ODE_cross_entropy_minimizaion}, we have the following proposition. 

{\begin{proposition}[ODE conditional mutual information maximization]
\label{proposition:ODE_conditional_information_maximization}
    Given forecasting model $p(\rmX|h_{\theta_c}(X_p), X_0)$, it follows that the conditional mutual information maximization $\max_{\theta_c} I(h_{\theta_c}(\rmX_p); \rf|\rmX_0)$ is equivalent to minimizing mean square error $\min_{\theta_c} \E_{\rmX\sim p}\|X - \hat{X}^c\|^2_2$, where $\hat{X}^c$ is the predicted trajectory sampled from $p(\rmX|h_{\theta_c}(X_p), X_0)$.
\end{proposition}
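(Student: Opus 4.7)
My plan is to reduce the conditional mutual information to a model cross-entropy and then invoke Lemma~\ref{lemma:ODE_cross_entropy_minimizaion} to convert it into the MSE. First, I would use the standard decomposition
\begin{equation*}
I(h_{\theta_c}(\rmX_p);\rf\mid \rmX_0)=H(\rf\mid \rmX_0)-H(\rf\mid h_{\theta_c}(\rmX_p),\rmX_0),
\end{equation*}
and observe that $H(\rf\mid\rmX_0)$ is fixed by the SCM of Fig.~\ref{fig:causal_graph} and independent of $\theta_c$, so the mutual-information maximization is equivalent to minimizing the residual entropy $H(\rf\mid h_{\theta_c}(\rmX_p),\rmX_0)$.

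Next, I would transfer this residual entropy from function space into trajectory space. The data-generating model $\rmX=g_{int}(\rf,\rmX_0)+\epsilon$ with $\epsilon\sim\mathcal{N}(\mathbf{0},\sigma^2 I)$ makes $\rf\mapsto g_{int}(\rf,\rmX_0)$ an injective reparameterization of $\rf$ conditional on $\rmX_0$, so $H(\rf\mid h_{\theta_c}(\rmX_p),\rmX_0)$ and $H(\rmX\mid h_{\theta_c}(\rmX_p),\rmX_0)$ differ only by a Jacobian term and a fixed-noise entropy, both independent of $\theta_c$. Hence it suffices to minimize $H(\rmX\mid h_{\theta_c}(\rmX_p),\rmX_0)$, which I would upper-bound by the model cross-entropy $-\E_{\rmX\sim p}[\log p(\rmX\mid h_{\theta_c}(\rmX_p),\rmX_0)]$. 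Because the forecasting model $\mathcal{N}(g_{int}(\hat{f}_c,X_0),\sigma^2 I)$ uses the same noise variance as the data model, this bound is tight, so minimizing one is equivalent to minimizing the other. Applying Lemma~\ref{lemma:ODE_cross_entropy_minimizaion} with $\theta\mapsto\theta_c$ then directly yields the equivalence with $\min_{\theta_c}\E_{\rmX\sim p}\|X-\hat{X}^c\|_2^2$.

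The main obstacle is the transfer step: because $g_{int}(\rf,\rmX_0)$ is corrupted by additive Gaussian noise, the map $\rf\mapsto\rmX$ (given $\rmX_0$) is not strictly bijective, so the equality of the two entropies only holds up to $\theta_c$-independent terms. I would resolve this either by taking the deterministic integrator limit that is used at inference, or by writing the change-of-variables Jacobian of $g_{int}(\cdot,X_0)$ explicitly and verifying that every non-$\theta_c$ piece cancels inside $\argmin_{\theta_c}$, so that both objectives share the same minimizer.
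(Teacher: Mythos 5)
Your overall skeleton matches the paper's proof: decompose $I(h_{\theta_c}(\rmX_p);\rf\mid\rmX_0)=H(\rf\mid\rmX_0)-H(\rf\mid h_{\theta_c}(\rmX_p),\rmX_0)$, drop the $\theta_c$-independent term, transfer the residual conditional entropy from function space to trajectory space, and finish via Lemma~\ref{lemma:ODE_cross_entropy_minimizaion}. The gap is precisely the transfer step that you flag as the main obstacle. Treating $\rf\mapsto g_{int}(\rf,\rmX_0)$ as an injective reparameterization so that $H(\rf\mid\cdot)$ and $H(\rmX\mid\cdot)$ "differ by a Jacobian plus a fixed-noise entropy" is not the right mechanism: with additive noise there is no change of variables between $\rf$ and $\rmX$, and neither of your proposed repairs works as stated. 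The deterministic-integrator limit is a dead end for differential entropies (as $\sigma\to 0$, $H(\rmX\mid h_{\theta_c}(\rmX_p),\rmX_0,\rf)\to-\infty$ and the bookkeeping degenerates), and there is no Jacobian of $g_{int}$ to write down because the relation between the two conditional entropies is not a change-of-variables identity at all.

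The paper closes this step exactly, with no approximation, by expanding the joint entropy $H(\rf,\rmX\mid h_{\theta_c}(\rmX_p),\rmX_0)$ with the chain rule in both orders: since $\rmX=g_{int}(\rf,\rmX_0)+\epsilon$ with independent noise, $H(\rmX\mid h_{\theta_c}(\rmX_p),\rmX_0,\rf)=H(\epsilon)$, and since $h_{\theta_c}(\rmX_p)$ is a deterministic function of $\rmX$ for fixed $\theta_c$, $H(\rf\mid h_{\theta_c}(\rmX_p),\rmX_0,\rmX)=H(\rf\mid\rmX_0,\rmX)$. Equating the two expansions gives $H(\rf\mid h_{\theta_c}(\rmX_p),\rmX_0)=H(\rmX\mid h_{\theta_c}(\rmX_p),\rmX_0)+H(\rf\mid\rmX_0,\rmX)-H(\epsilon)$, where the last two terms do not depend on $\theta_c$ --- exactly the "all non-$\theta_c$ pieces cancel" conclusion you wanted, obtained with no bijectivity assumption and with the noise handled exactly. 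From there the paper, like you, identifies minimizing $H(\rmX\mid h_{\theta_c}(\rmX_p),\rmX_0)$ with minimizing the model negative log-likelihood and invokes the lemma; your explicit framing of that step as a cross-entropy upper bound is the more careful phrasing, though your tightness claim ("same noise variance") is not quite right, since the true conditional given $(h_{\theta_c}(\rmX_p),\rmX_0)$ is a Gaussian mixture over $\rf$ rather than a single Gaussian --- a point the paper's proof also glosses over.
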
}
The proof is provided in Appx.~\ref{proof:ODE_conditional_information_maximization}. Lemma~\ref{lemma:ODE_cross_entropy_minimizaion} and Prop.~\ref{proposition:ODE_conditional_information_maximization} essentially transfers the mutual information maximization of Thm.~\ref{thm:invariant_function_learning_principle} into an implementable optimization of mean square error (MSE) loss, enabling the practical use of the invariant function learning principle.
In addition, the independence constraint in Thm.~\ref{thm:invariant_function_learning_principle} requires that the extracted functions should not be over-informative or contain biased information from environments $\re$. This independence constraint can be implemented in an adversarial way, where we require the environment prediction $P(\re|\hat{\rf}_c)$ to be as less informative as possible, \ie, minimizing the mutual information of $I(\re;\hat{\rf}_c)$. Thus we introduce an environment discriminator $g_\phi$, a.k.a., $P_\phi(\re|\hat{\rf})$, which aims to distinguish the environment of any function from $\gF$. The hypernetwork is trained adversarially to enforce $\hat{\rf}_c$ as indistinguishable as possible. { The theoretically analysis of this independence training is provided in Appx.~\ref{app:independence_training}.}

{ \textbf{Objective.} The overall optimization objective can be obtained with three training strategies. First, the training of the discriminator is conducted on both $\hat{f}_c$ and $\hat{f}_e$ to fully capture environment patterns. Second, we use the corresponding hidden representations of $\hat{f}_c$ and $\hat{f}_e$, $\hat{z}_c$ and $\hat{z}_e$, as the input of the discriminator.
Third, as we mentioned in Sec.\ref{subsection:hypernetwork_design}, during training, we fit derivatives instead of using an integrator.

\begin{equation}\label{eq:objective}
\resizebox{0.9\linewidth}{!}{$
\begin{aligned}
    &\min_{\theta} \E_{\rmX \sim p}\sum_t\left\|\frac{d X_t}{dt} - \hat{f}(X_t)\right\|^2_2 \\
    & + \lambda_c \cdot \min_{\theta_c} \E_{\rmX \sim p}\sum_t\left\|\frac{d X_t}{dt} - \hat{f}_c(X_t)\right\|^2_2  \\
    &  
    + \lambda_{dis} \cdot \left[\min_{\phi} - \E_{\rmX\sim p} \log g_\phi(\hat{z}_c) + \min_{\phi, \bar{\theta}_{e}} - \E_{\rmX\sim p} \log g_\phi(\hat{z}_e)\right] \\
    & + \lambda_{adv} \cdot \max_{\bar{\theta}_c} - \E_{\rmX\sim p}\log g_\phi(\hat{z}_c)
\end{aligned}$}
\end{equation}

where we denote $\bar{\theta}_e=\{\theta_{enc}, \theta_{env}\}$; $\bar{\theta}_c=\{\theta_{enc}, \theta_{inv}\}$. Please refer to Appx.~\ref{app:training_objectives} for more details.}

\textbf{Efficient hypernetwork implementation.} Last but not least, one of the major challenges that limits the usage of hypernetworks is the implementation complexity. \rebuttal{In this work, we propose a reference-based hypernetwork implementation to accelerate the running speed using only PyTorch~\cite{paszke2019pytorch} without re-implementing basic neural networks. The speedup compared to the naïve implementation and the vectorized functional implementation are 16.8x and 2x, respectively. Please refer to Appx.~\ref{app:efficient_hypernetwork_implementation} for implementation and experimental details.}

\section{Related Work}\label{sec:related_work}

This work is inspired by the ideas and limitations of previous research in dynamical system forecasting, meta-learning, and invariant learning.

Deep learning models are widely applied in many physical applications~\citep{lusch2018deep,yeo2019deep,kochkov2021machine,chen2018neural,becker2023predicting,d2023odeformer,seifner2024foundationalinferencemodelsdynamical} including partial differential equations (PDEs) with the focus on the multi-scale~\citep{li2020fourier,stachenfeld2021learned}, multi-resolution~\citep{kochkov2021machine,wu2022learning}, and long-term stability~\citep{li2021learning,lippe2023pde} issues. 
Operator learning and neural operators~\citep{gupta2021multiwavelet,kovachki2023neural} are popular for PDE estimations. Although the ODE dynamical system does not contain the multi-scale problem that Fourier neural operator~\citep{kovachki2023neural} tried to solve, our framework can be considered as a kind of operator learning. 

Meta-learning methods~\citep{finn2017model,rusu2018meta,li2017meta,zintgraf2019fast,perez2018film} aim to learn meta-parameters that can be used across multiple tasks, where the meta parameters are generally learned to make rapid adaptations. In previous meta-learning studies on dynamical systems~\citep{Kirchmeyer2022coda,wang2022meta, yin2021leads}, the objective was to find a meta-function that could quickly adapt to multiple new systems, where hypernetworks are only employed as low-rank adaptors for new dynamical system trajectories, similar to the idea of LoRA~\citep{hu2021lora}. Our work differs from these in two key ways. First, we focus on discovering invariant functions rather than quickly adaptable ones or static scalar~\citep{auzina2023modulated}. Second, while meta-learning methods seek to learn a single meta-function, our framework learns multiple functions, represented by an invariant function random variable $\rf_c$. This distinction stems from our more complex environment definition, detailed in Sec.~\ref{sec:invariant_function_learning}. From another aspect, learning an invariant function distribution instead of a single function can be considered as generalized meta-learning with an invariant function learning goal.

Current invariant learning methods~\citep{arjovsky2019invariant,lu2021invariant,rosenfeld2020risks,krueger2021out,sagawa2019distributionally} follow the framework of invariant risk minimization (IRM)~\citep{arjovsky2019invariant}, which was inspired by invariant causal predictor~\citep{peters2016causal}. This invariant learning framework aims to learn a hidden invariant representation or invariant causal mechanism~\cite{pearl2009causality} that generalizes across multiple environments, ensuring out-of-distribution performance. However, this approach cannot work on dynamical forecast tasks due to the lack of invariant function definition and the violation of the categorical data assumption (see Appx.~\ref{app:faq}). 
To address the issues, we introduce the causal assumption (see Fig.~\ref{fig:causal_graph}) that defines the invariant function space, and propose the corresponding invariant function learning principle and implementation.


Symbolic regression methods~\citep{brunton2016discovering, cranmer2023interpretable} and its extensions discover closed-form ODEs by fitting sparse combinations of basis functions.  Recent symbolic discovery works add neural inductive biases for graph-structured systems\citep{cranmer2020discovering, shi2022learning}.  Amortised approaches push scalability further, learning to output symbolic expressions for hybrid systems in one forward pass~\citep{liu2024amortized}. Mechanistic neural networks adopt a different framework to learn ODE representations~\citep{pervez2024mechanistic, chen2025scalable}. However, all of these methods aim to find a separate symbolic equation for each individual system; in contrast, our goal is to learn components of the dynamics that remain invariant across systems and environments, a complementary but distinct objective.

\section{Experiments}
\label{sec:experiments}

We conduct experiments to address the following research questions (RQs) and supplementary analyses (SAs). \textbf{RQ1}: Are existing meta-learning and invariant learning techniques effective for extracting invariant functions? \textbf{RQ2}: Can the proposed invariant function learning principle outperform baseline techniques? \textbf{SA1}: How do the full functions $\rf$ and the invariant functions $\rf_c$ differ in performance? 
\textbf{SA2}: Are the extracted invariant functions explainable and aligned with the true invariant mechanisms? 
\textbf{SA3}: How will performance change given different lengths of inputs and types of environments? (See Appx.~\ref{app:input_length_and_environment_analysis}) \textbf{SA4}: Is the proposed hypernetwork implementation more efficient than previous implementations? (See Appx.~\ref{app:efficient_hypernetwork_implementation})

\subsection{Datasets}

In our experiments, we introduce three multi-environment (ME) datasets, ME-Pendulum, ME-Lotka-Volterra, and ME-SIREpidemic. These three datasets are generated by simulators modified from the DampedPendulum~\citep{yin2021augmenting}, Lotka-Volterra~\citep{ahmad1993nonautonomous}, and SIREpidemic~\citep{wang2021bridging}. Specifically, each of the dataset's training sets includes four environments with 200 samples for each environment. Specifically, each environment corresponds to one specific environmental effect. ME-Pendulum contains three types of friction and one effect with external energy. ME-Lotka-Volterra modified the common predatory relationship into four modified relationships, \eg, adding resource limits. ME-SIREpidemic produces four conceptual epidemiology models with the same susceptible population to infected population relationship. In addition to the training set, we generate 200 samples with 50 samples for each environment as an in-distribution test set. Please refer to Appx.~\ref{app:datasets} for more details.

\subsection{Experimental Setup}

To quantitatively evaluate the invariant function extraction performance, we need to remove the environment-related effects to generate invariant trajectories $X^c$ as the invariant function ground-truth, \eg, we simulate new data by eliminating $-\rho \omega_t$ from $-\alpha^2 \sin (\theta_t) - \rho \omega_t$ in the ME-Pendulum dataset (Fig.~\ref{fig:pendulum_example}). To be more specific, a generated invariant trajectory $X^c$, aligning the causal graph, has the same system parameters as the corresponding biased trajectory $X$ for the invariant part controlled by $\rc$, \ie, they have the same $\alpha$ in the pendulum example. This invariant trajectory generation is being done on the in-distribution test set so that each trajectory $X$ in this test set has its special corresponding invariant trajectory ground truth $X^c$.

This test set design enables us to mimic the situation of scientific discoveries, where we only observe environment-biased data but are required to find and evaluate invariant function candidates. Specifically, given the biased $X_p$, the hypernetwork is supposed to predict the corresponding invariant derivative function $\hat{f}_c$. Then, with a numeral integrator, the output of this invariant forecaster $\hat{X}^c$ will be evaluated by comparing with the corresponding invariant trajectory ground truth $X^c$ using normalized root mean square error (NRMSE). 


\subsection{Proposed Meta-learning and Invariant Learning Baselines}

\begin{figure*}[t]
    \centering
    \resizebox{0.8\linewidth}{!}{
    $
    \begin{array}{ccc}
         \includegraphics[width=0.5\linewidth]{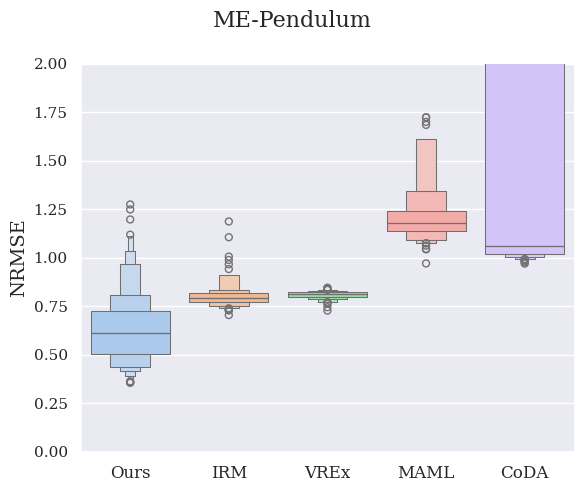} &  
         \includegraphics[width=0.5\linewidth]{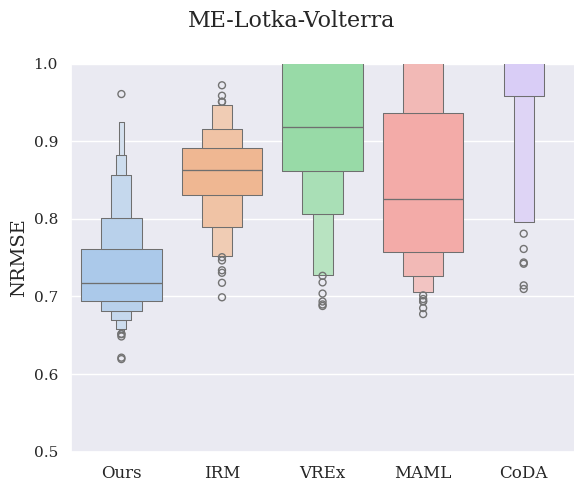} & 
         \includegraphics[width=0.5\linewidth]{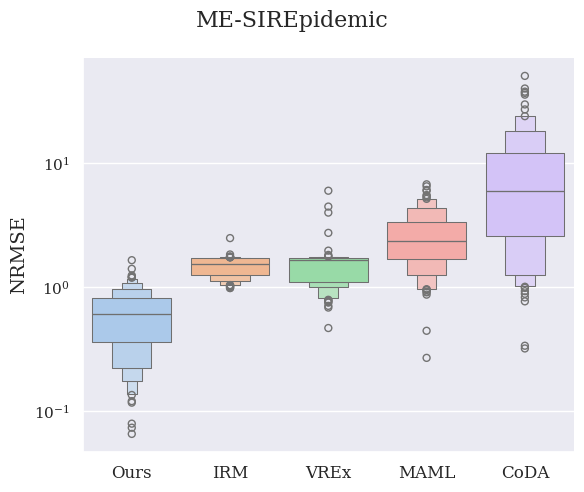}
    \end{array}
    $}
    \caption{\textbf{Invariant trajectory prediction errors} on 5 methods under 3 multi-environment ODE systems. For each method, we provide model candidates with 80+ random hyper-parameter selections in their searching spaces, \ie, more than 1200 models in the figure.}
    \label{fig:quantitative_results}
\end{figure*}

\begin{figure*}[t]
    \centering
    \begin{subfigure}[b]{0.4\linewidth}
        \includegraphics[width=1\linewidth]{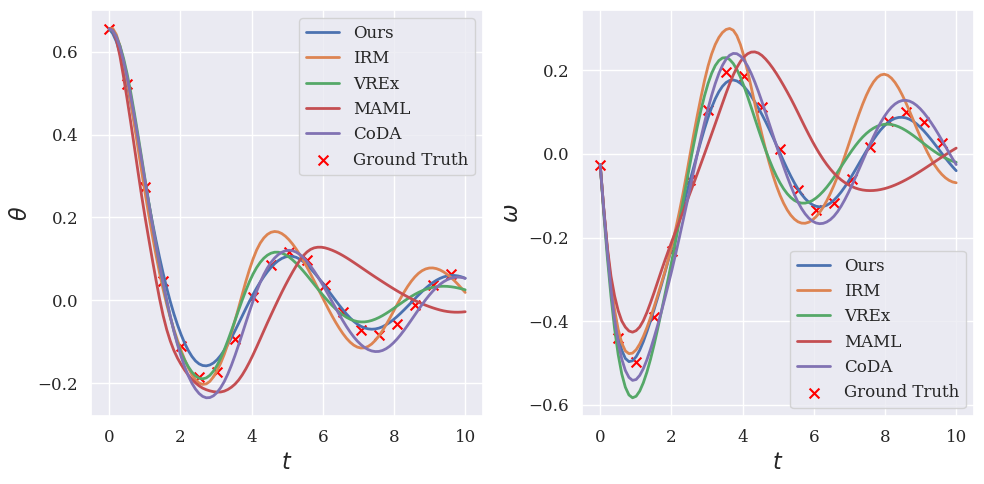}
        \caption{$X$ predictions using $\hat{f}$}
        \label{fig:results_combine_me_pendulum}
    \end{subfigure}
    \begin{subfigure}[b]{0.4\linewidth}
        \includegraphics[width=1\linewidth]{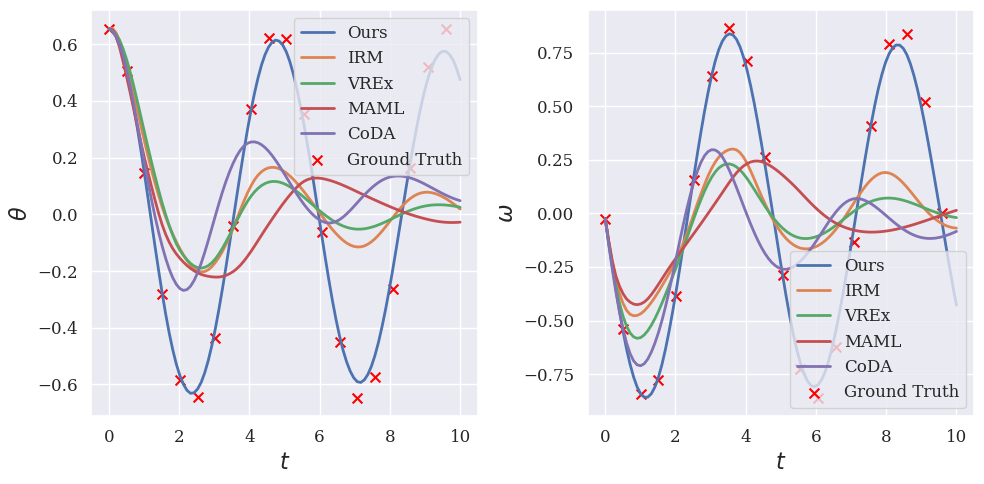}
        \caption{$X^c$ predictions using $\hat{f}_c$}
        \label{fig:results_inv_me_pendulum}
    \end{subfigure}
    \caption{\textbf{Visualization of trajectory predictions on ME-Pendulum}}
    \label{fig:results_me_pendulum}
    \vspace{-0.5cm}
\end{figure*}

General dynamical system forecasting is different from invariant function learning significantly, where they focus on how to adapt to new trajectories, which commonly requires further optimization, \eg, test-time adaptation~\citep{moulimetaphysica} or adaptations with meta information~\citep{wang2022meta,Kirchmeyer2022coda}. Unfortunately, under our scientific discovery setting, there is no extra information provided at test time, making them inapplicable to this setting. Therefore, we construct 4 new baseline settings by transplanting the techniques of previous meta-learning and invariant learning to our proposed framework detailed in Appx.~\ref{app:baselines}.

We first adopt the meta-learning baseline MAML~\citep{finn2017model}, where we use its learned meta-parameters for invariant learning to evaluate whether the fastest adapted parameter is the invariant function parameter. Our second meta-learning baseline is CoDA~\citep{Kirchmeyer2022coda}, where we replace its hypernetwork decoder with the full encoder-decoder hypernetwork in our framework to fit in our task. Aligning with the original CoDA paper, we set the dimension of the hidden representation to be 2. Similar to MAML, we eliminate the adaptation part and use only the learned meta-parameter for invariant state prediction.

For invariant learning baselines, we adopted the two most typical techniques, IRM~\citep{ahuja2021invariance} and VREx~\citep{krueger2021out}. These two techniques are applied to the proposed framework, where IRM stands for the most typical definition of invariant learning, while VREx stands for the distributionally robust optimization baseline, which can be considered as the generalization of GroupDRO~\citep{sagawa2019distributionally}.

\subsection{Quantitative Results}

Similar to other scientific discovery tasks, such as drug discovery, constructing a proper validation set is challenging. Instead, with only observational data available, we generate invariant function candidates that can be further validated in real experimental settings, \eg, through the introduction of interventions~\citep{pearl2009causality}.

To quantitatively compare these methods, we provide the corresponding hyper-parameter search spaces for each technique in Appx.~\ref{app:experimental_details} and plot the results of random hyper-parameter sampling as distributions using Boxen plots. As shown in Fig.~\ref{fig:quantitative_results}, we compare the quality of the invariant function candidates based on their median, best result, and quantiles. Specifically, the median performance of our proposed method surpasses the middle candidates of all other approaches. The performance gaps are particularly notable on the ME-Pendulum and ME-SIR-Epidemic datasets. For example, on ME-Pendulum, over 75\% of our method’s candidates outperform the best results of MAML and CoDA, and more than 93.75\% candidates of IRM and VREx. On ME-Lotka-Volterra, the median of our candidates still outperforms nearly all candidates from other methods. 
In addition, as shown in the visualizations on ME-Pendulum~\ref{fig:results_me_pendulum}, our learned invariant function $\hat{f}_c$ eliminates environmental resistances from the original trajectory (Fig.~\ref{fig:results_combine_me_pendulum}) and obtain a simple pendulum motion without attenuation (Fig.~\ref{fig:results_inv_me_pendulum}). Both quantitative and visualization results demonstrate the superior capability of our method in extracting invariant functions (\textbf{RQ2}).

\begin{table*}[!t]\centering
\caption{\textbf{Invariant function validation and symbolic regression.} NAN denotes that the result is not applicable or not of interest.}\label{tab:symbolic_regression}
\resizebox{0.8\linewidth}{!}{
\begin{tabular}{c|c|l|l|l|l|l|l}\toprule
\multirow{2}{*}{Target} &\multirow{2}{*}{Function} & \multicolumn{2}{c|}{ME-Pendulum} & \multicolumn{2}{c|}{ME-Lotka-Volterra} & \multicolumn{2}{c}{ME-SIREpidemic} \\\cmidrule{3-8}
& &NRMSE &SR Explanation &NRMSE &SR Explanation &NRMSE &SR Explanation \\
\midrule
\multirow{7}{*}{$\rmX^c$} &$\hat{\rf}_c$ &0.3561 &$\begin{aligned} \frac{d\theta_t}{dt} &= 0.99 \omega_{t} \\ \frac{d\omega_t}{dt} &= -0.97 \alpha^2 \sin{\left(\theta_t \right)}  \end{aligned}$ &0.6194 &$
\begin{aligned}
\frac{dp_t}{dt} &= 1.254 p_t -0.38 q_t p_t  \\
\frac{dq_t}{dt} &= 4.1 p_t - 0.30 q_t - \gamma
\end{aligned}
$ &0.0652 &$\begin{aligned} \frac{dS_t}{dt} &= - 1.7 S_t I_{t} \\ \frac{dI_t}{dt} &= 0.42 S_t I_{t} \\ \frac{dR_t}{dt} &= -0.0088 \end{aligned}$ \\ 
\cmidrule{2-8}
&$\hat{\rf}$ &0.7884 &$\begin{aligned} \frac{d\theta_t}{dt} &= \omega_{t} \cos{\left(\frac{\omega_{t}}{e^{\alpha}} \right)} \\ \frac{d\omega_t}{dt} &= \theta_t \alpha \left(- \alpha + \rho\right) \end{aligned}$ &0.7919 &$
\begin{aligned}
\frac{dp_t}{dt} &= -0.76 p_t \\
\frac{dq_t}{dt} &= \frac{p_t}{0.36} - \gamma
\end{aligned}
$ &0.9867 &$\begin{aligned} \frac{dS_t}{dt} &= -0.24 \beta I_{t} S_t - 1.2 \\ \frac{dI_t}{dt} &= 0.40 S_t \\ \frac{dR_t}{dt} &= 0.66 \gamma \end{aligned}$ \\ 
\midrule
\multirow{2}{*}{$\rmX$ } &$\hat{\rf}_c$ &0.7994 &NAN &0.6912 &NAN &0.7641 &NAN \\
\cmidrule{2-8}
&$\hat{\rf}$ &0.1700 &NAN &0.3881 &NAN &0.0212 &NAN \\
\midrule
\multicolumn{2}{c|}{$\rf_c$ GT} & NAN & $\begin{aligned}
    \frac{d\theta_t}{dt} &= \omega_{t}  \\ 
    \frac{d\omega_t}{dt} &= - \alpha^2 \sin{\left(\theta_t \right)}
    \end{aligned}$ & NAN& $\begin{aligned}
        \frac{dp_t}{dt} &= \alpha p_t - \beta p_t q_t \\ 
        \frac{dq_t}{dt} &= \delta p_t q_t - \gamma q_t
        \end{aligned}$ & NAN& $\begin{aligned} \frac{dS_t}{dt} &= - \beta \frac{S_t I_{t}}{S_t + I_t + R_t} \\ \frac{dI_t}{dt} &= \beta \frac{S_t I_{t}}{S_t + I_t + R_t} \\ \frac{dR_t}{dt} &= 0 \end{aligned}$  \\
\bottomrule
\end{tabular}}
\end{table*}

To address the first research question (\textbf{RQ1}), we observe that the invariant learning techniques, IRM and VREx, are generally more stable than the meta-learning baselines. Although IRM and VREx do not surpass MAML on ME-Lotka-Volterra, they outperform MAML on 2 out of 3 datasets and are consistently better than CoDA. However, when compared to our proposed method, the best function candidates from these invariant learning techniques are suboptimal. This confirms that the general invariant learning principles fall short in the context of invariant function extraction, aligning with the discussions in Sec.~\ref{sec:related_work}.

\subsection{Full Function v.s. Invariant Function}\label{subsec:full_vs_inv}

To analyze \textbf{SA1}, we benchmark our best invariant function learning models on the invariant state ground truth $\rmX^c$ and the multi-environment state ground truth $\rmX$, comparing their results using the predicted invariant function $\hat{\rf}_c$ and the full function $\hat{\rf}$. As shown in Tab.~\ref{tab:symbolic_regression}, the performance on $\rmX^c$ using $\hat{\rf}_c$ represents the core results of our invariant function learning approach. In contrast, the predictions on $\rmX^c$ using $\hat{\rf}$ serve as a baseline for the \emph{ablation study}, where no invariant function learning principle is applied. Additionally, the prediction error on $\rmX$ using $\hat{\rf}_c$ implies the environmental information eliminated by the invariant function learning principle, while the NRMSEs on $\rmX$ using $\hat{\rf}$ reflect standard in-distribution (ID) test errors. We observe that the ME-Lotka-Volterra dataset is the most challenging, with an NRMSE of 0.3881 in the ID test. This result is consistent with general deep learning outcomes in \cite{moulimetaphysica}, given that ME-Lotka-Volterra is more complex than its original version.

As expected, $\hat{\rf}$ performs well on $\rmX$, while $\hat{\rf}_c$ excels in predicting $\rmX^c$. In our ablation study, we compare the performance of invariant state predictions $\rmX^c$ across all datasets. The predicted invariant functions $\hat{\rf}_c$ significantly outperform the full predicted functions $\hat{\rf}$ in terms of NRMSE, validating the effectiveness of the proposed invariant function learning principle.\rebuttal{ Furthermore, we conduct more strict ablation study with independent $\hat{\rf}$ and $\hat{\rf}_c$ training in Appx.~\ref{app:ablation_study}.}

\subsection{Symbolic Regression Explanation}\label{subsec:symbolic_regression_explanation}

Furthermore, to address \textbf{SA2}, we analyze the extracted invariant functions $\hat{\rf}_c$ by applying symbolic regression using PySR~\citep{cranmer2023interpretable}. As shown in Tab.~\ref{tab:symbolic_regression}, we compare the symbolic regression (SR) explanations of the extracted invariant functions $\hat{\rf}_c$ with the true invariant functions $\rf_c$. On the ME-Pendulum dataset, the frictionless pendulum function is nearly perfectly extracted, with $0.99 \omega_{t}$ matching $\omega_{t}$ and $-0.97 \alpha^2 \sin{\left(\theta_t \right)}$ closely approximating the true $- \alpha^2 \sin{\left(\theta_t \right)}$. Given the complexity of the ME-Lotka-Volterra dataset, the extracted invariant functions $\hat{\rf}_c$ are non-trivial and significantly outperform the full function $\hat{\rf}$. On the ME-SIREpidemic dataset, the near-perfect NRMSE for the invariant state indicates that the invariant function must have been correctly extracted. However, although the expression for $\frac{dR_t}{dt}$ is correct, the extracted expressions for $\frac{dS_t}{dt}$ and $\frac{dI_t}{dt}$ do not precisely match the expected $\rf_c$. Specifically, the coefficient from $\frac{dS_t}{dt}$ does not equal the inverse of the coefficient from $\frac{dI_t}{dt}$, though this discrepancy should be constant. These mismatches attribute to the limitations of PySR given the large number of variables and samples.\footnote{\rebuttal{Superior PySR explanations indicate great invariant function learning results, but effective invariant function learning results might not lead to good PySR explanations.}} Future work could explore incorporating stronger inductive biases, similar to physics-informed machine learning (PIML) methods~\citep{moulimetaphysica,yin2021augmenting,cranmer2020discovering}, to address these challenges. {Please refer to Appx.~\ref{app:symbolic_explanation} for symbolic comparisons with all baseline.}

\vspace{-0.1cm}
\section{Limitations}\label{sec:limitations}
\vspace{-0.1cm}

While this work provides a foundation for invariant function learning in dynamical systems, several limitations and opportunities for future exploration remain. These include extending the framework to more complex entanglements, exploring applications in PDE systems and developing comprehensive benchmarks. Additionally, broader applications such as generalizable physics learning and foundational model development represent exciting directions for further research. Please refer to Appx.~\ref{app:limitations_and_future_work} for more discussions.

\vspace{-0.1cm}
\section{Conclusion}
\vspace{-0.1cm}

In this work, we target addressing the challenge of the invariant mechanism discovery in ODE dynamical systems by extending invariant learning into function spaces. We introduce a new task, invariant function learning, which aims to extract the invariant dynamics across all environments with different environment-specific function forms. We design a causal analysis based disentanglement framework DIF to expose the underlying invariant functions. Additionally, we propose an invariant function learning principle with theoretical guarantees to optimize the framework and ensure effective invariant function discovery. Our experiments, including invariant trajectory validations, visualizations, ablation studies, and symbolic regression analyses, demonstrate the effectiveness of our method. Finally, as discussed in Sec.~\ref{sec:limitations}, the introduced invariant function learning task has wide application scenarios and many challenges remain to be addressed. We expect that our work will shed light on numerous future explorations in this field.

\section*{Acknowledgments}
This work was supported in part by National Science Foundation under grant CNS-2328395 and ARPA-H under grant 1AY1AX000053.

\section*{Impact Statement}

This paper presents work whose goal is to advance the field of Machine Learning. There are many potential societal consequences of our work, none which we feel must be specifically highlighted here.

\bibliography{reference}

\begin{thebibliography}{59}
\providecommand{\natexlab}[1]{#1}
\providecommand{\url}[1]{\texttt{#1}}
\expandafter\ifx\csname urlstyle\endcsname\relax
  \providecommand{\doi}[1]{doi: #1}\else
  \providecommand{\doi}{doi: \begingroup \urlstyle{rm}\Url}\fi

\bibitem[Ahmad(1993)]{ahmad1993nonautonomous}
Ahmad, S.
\newblock On the nonautonomous volterra-lotka competition equations.
\newblock \emph{Proceedings of the American Mathematical Society}, 117\penalty0 (1):\penalty0 199--204, 1993.

\bibitem[Ahuja et~al.(2021)Ahuja, Caballero, Zhang, Gagnon-Audet, Bengio, Mitliagkas, and Rish]{ahuja2021invariance}
Ahuja, K., Caballero, E., Zhang, D., Gagnon-Audet, J.-C., Bengio, Y., Mitliagkas, I., and Rish, I.
\newblock Invariance principle meets information bottleneck for out-of-distribution generalization.
\newblock \emph{Advances in Neural Information Processing Systems}, 34:\penalty0 3438--3450, 2021.

\bibitem[Arjovsky et~al.(2019)Arjovsky, Bottou, Gulrajani, and Lopez-Paz]{arjovsky2019invariant}
Arjovsky, M., Bottou, L., Gulrajani, I., and Lopez-Paz, D.
\newblock Invariant risk minimization.
\newblock \emph{arXiv preprint arXiv:1907.02893}, 2019.

\bibitem[Aussem(1999)]{aussem1999dynamical}
Aussem, A.
\newblock Dynamical recurrent neural networks towards prediction and modeling of dynamical systems.
\newblock \emph{Neurocomputing}, 28\penalty0 (1-3):\penalty0 207--232, 1999.

\bibitem[Auzina et~al.(2023)Auzina, Y{\i}ld{\i}z, Magliacane, Bethge, and Gavves]{auzina2023modulated}
Auzina, I.~A., Y{\i}ld{\i}z, {\c{C}}., Magliacane, S., Bethge, M., and Gavves, E.
\newblock Modulated neural odes.
\newblock \emph{Advances in Neural Information Processing Systems}, 36:\penalty0 44572--44594, 2023.

\bibitem[Becker et~al.(2023)Becker, Klein, Neitz, Parascandolo, and Kilbertus]{becker2023predicting}
Becker, S., Klein, M., Neitz, A., Parascandolo, G., and Kilbertus, N.
\newblock Predicting ordinary differential equations with transformers.
\newblock In \emph{International Conference on Machine Learning}, pp.\  1978--2002. PMLR, 2023.

\bibitem[Brunton et~al.(2016)Brunton, Proctor, and Kutz]{brunton2016discovering}
Brunton, S.~L., Proctor, J.~L., and Kutz, J.~N.
\newblock Discovering governing equations from data by sparse identification of nonlinear dynamical systems.
\newblock \emph{Proceedings of the national academy of sciences}, 113\penalty0 (15):\penalty0 3932--3937, 2016.

\bibitem[Chen et~al.(2025)Chen, Yao, Pervez, Alistarh, and Locatello]{chen2025scalable}
Chen, J., Yao, D., Pervez, A., Alistarh, D., and Locatello, F.
\newblock Scalable mechanistic neural networks.
\newblock In \emph{The Thirteenth International Conference on Learning Representations}, 2025.
\newblock URL \url{https://openreview.net/forum?id=Oazgf8A24z}.

\bibitem[Chen et~al.(2018)Chen, Rubanova, Bettencourt, and Duvenaud]{chen2018neural}
Chen, R.~T., Rubanova, Y., Bettencourt, J., and Duvenaud, D.~K.
\newblock Neural ordinary differential equations.
\newblock \emph{Advances in neural information processing systems}, 31, 2018.

\bibitem[Cho(2014)]{cho2014learning}
Cho, K.
\newblock Learning phrase representations using rnn encoder-decoder for statistical machine translation.
\newblock \emph{arXiv preprint arXiv:1406.1078}, 2014.

\bibitem[Cranmer(2023)]{cranmer2023interpretable}
Cranmer, M.
\newblock Interpretable machine learning for science with pysr and symbolicregression. jl.
\newblock \emph{arXiv preprint arXiv:2305.01582}, 2023.

\bibitem[Cranmer et~al.(2020)Cranmer, Sanchez~Gonzalez, Battaglia, Xu, Cranmer, Spergel, and Ho]{cranmer2020discovering}
Cranmer, M., Sanchez~Gonzalez, A., Battaglia, P., Xu, R., Cranmer, K., Spergel, D., and Ho, S.
\newblock Discovering symbolic models from deep learning with inductive biases.
\newblock \emph{Advances in neural information processing systems}, 33:\penalty0 17429--17442, 2020.

\bibitem[d'Ascoli et~al.(2023)d'Ascoli, Becker, Mathis, Schwaller, and Kilbertus]{d2023odeformer}
d'Ascoli, S., Becker, S., Mathis, A., Schwaller, P., and Kilbertus, N.
\newblock Odeformer: Symbolic regression of dynamical systems with transformers.
\newblock \emph{arXiv preprint arXiv:2310.05573}, 2023.

\bibitem[Finn et~al.(2017)Finn, Abbeel, and Levine]{finn2017model}
Finn, C., Abbeel, P., and Levine, S.
\newblock Model-agnostic meta-learning for fast adaptation of deep networks.
\newblock In \emph{International conference on machine learning}, pp.\  1126--1135. PMLR, 2017.

\bibitem[Ganin et~al.(2016)Ganin, Ustinova, Ajakan, Germain, Larochelle, Laviolette, March, and Lempitsky]{ganin2016domain}
Ganin, Y., Ustinova, E., Ajakan, H., Germain, P., Larochelle, H., Laviolette, F., March, M., and Lempitsky, V.
\newblock Domain-adversarial training of neural networks.
\newblock \emph{Journal of machine learning research}, 17\penalty0 (59):\penalty0 1--35, 2016.

\bibitem[Gehring et~al.(2017)Gehring, Auli, Grangier, Yarats, and Dauphin]{gehring2017convolutional}
Gehring, J., Auli, M., Grangier, D., Yarats, D., and Dauphin, Y.~N.
\newblock Convolutional sequence to sequence learning.
\newblock In \emph{International conference on machine learning}, pp.\  1243--1252. PMLR, 2017.

\bibitem[Giannakis(2019)]{giannakis2019data}
Giannakis, D.
\newblock Data-driven spectral decomposition and forecasting of ergodic dynamical systems.
\newblock \emph{Applied and Computational Harmonic Analysis}, 47\penalty0 (2):\penalty0 338--396, 2019.

\bibitem[Goodfellow et~al.(2016)Goodfellow, Bengio, Courville, and Bengio]{goodfellow2016deep}
Goodfellow, I., Bengio, Y., Courville, A., and Bengio, Y.
\newblock \emph{Deep learning}, volume~1.
\newblock MIT Press, 2016.

\bibitem[Goodfellow et~al.(2020)Goodfellow, Pouget-Abadie, Mirza, Xu, Warde-Farley, Ozair, Courville, and Bengio]{goodfellow2020generative}
Goodfellow, I., Pouget-Abadie, J., Mirza, M., Xu, B., Warde-Farley, D., Ozair, S., Courville, A., and Bengio, Y.
\newblock Generative adversarial networks.
\newblock \emph{Communications of the ACM}, 63\penalty0 (11):\penalty0 139--144, 2020.

\bibitem[Gupta et~al.(2021)Gupta, Xiao, and Bogdan]{gupta2021multiwavelet}
Gupta, G., Xiao, X., and Bogdan, P.
\newblock Multiwavelet-based operator learning for differential equations.
\newblock \emph{Advances in neural information processing systems}, 34:\penalty0 24048--24062, 2021.

\bibitem[Ha et~al.(2016)Ha, Dai, and Le]{ha2016hypernetworks}
Ha, D., Dai, A., and Le, Q.~V.
\newblock Hypernetworks.
\newblock \emph{arXiv preprint arXiv:1609.09106}, 2016.

\bibitem[Hu et~al.(2021)Hu, Shen, Wallis, Allen-Zhu, Li, Wang, Wang, and Chen]{hu2021lora}
Hu, E.~J., Shen, Y., Wallis, P., Allen-Zhu, Z., Li, Y., Wang, S., Wang, L., and Chen, W.
\newblock Lora: Low-rank adaptation of large language models.
\newblock \emph{arXiv preprint arXiv:2106.09685}, 2021.

\bibitem[Kirchmeyer et~al.(2022)Kirchmeyer, Yin, Don{\`a}, Baskiotis, Rakotomamonjy, and Gallinari]{Kirchmeyer2022coda}
Kirchmeyer, M., Yin, Y., Don{\`a}, J., Baskiotis, N., Rakotomamonjy, A., and Gallinari, P.
\newblock Generalizing to new physical systems via context-informed dynamics model.
\newblock In \emph{International Conference on Machine Learning}, pp.\  11283--11301. PMLR, 2022.

\bibitem[Kochkov et~al.(2021)Kochkov, Smith, Alieva, Wang, Brenner, and Hoyer]{kochkov2021machine}
Kochkov, D., Smith, J.~A., Alieva, A., Wang, Q., Brenner, M.~P., and Hoyer, S.
\newblock Machine learning--accelerated computational fluid dynamics.
\newblock \emph{Proceedings of the National Academy of Sciences}, 118\penalty0 (21):\penalty0 e2101784118, 2021.

\bibitem[Kovachki et~al.(2023)Kovachki, Li, Liu, Azizzadenesheli, Bhattacharya, Stuart, and Anandkumar]{kovachki2023neural}
Kovachki, N., Li, Z., Liu, B., Azizzadenesheli, K., Bhattacharya, K., Stuart, A., and Anandkumar, A.
\newblock Neural operator: Learning maps between function spaces with applications to pdes.
\newblock \emph{Journal of Machine Learning Research}, 24\penalty0 (89):\penalty0 1--97, 2023.

\bibitem[Krueger et~al.(2021)Krueger, Caballero, Jacobsen, Zhang, Binas, Zhang, Le~Priol, and Courville]{krueger2021out}
Krueger, D., Caballero, E., Jacobsen, J.-H., Zhang, A., Binas, J., Zhang, D., Le~Priol, R., and Courville, A.
\newblock Out-of-distribution generalization via risk extrapolation ({REx}).
\newblock In \emph{International Conference on Machine Learning}, pp.\  5815--5826. PMLR, 2021.

\bibitem[Li et~al.(2017)Li, Zhou, Chen, and Li]{li2017meta}
Li, Z., Zhou, F., Chen, F., and Li, H.
\newblock Meta-sgd: Learning to learn quickly for few shot learning.
\newblock \emph{CoRR}, abs/1707.09835, 2017.
\newblock URL \url{http://arxiv.org/abs/1707.09835}.

\bibitem[Li et~al.(2020)Li, Kovachki, Azizzadenesheli, Liu, Bhattacharya, Stuart, and Anandkumar]{li2020fourier}
Li, Z., Kovachki, N., Azizzadenesheli, K., Liu, B., Bhattacharya, K., Stuart, A., and Anandkumar, A.
\newblock Fourier neural operator for parametric partial differential equations.
\newblock \emph{arXiv preprint arXiv:2010.08895}, 2020.

\bibitem[Li et~al.(2021)Li, Liu-Schiaffini, Kovachki, Azizzadenesheli, Liu, Bhattacharya, Stuart, and Anandkumar]{li2021learning}
Li, Z., Liu-Schiaffini, M., Kovachki, N., Azizzadenesheli, K., Liu, B., Bhattacharya, K., Stuart, A., and Anandkumar, A.
\newblock Learning dissipative dynamics in chaotic systems.
\newblock \emph{arXiv preprint arXiv:2106.06898}, 2021.

\bibitem[Lippe et~al.(2023)Lippe, Veeling, Perdikaris, Turner, and Brandstetter]{lippe2023pde}
Lippe, P., Veeling, B.~S., Perdikaris, P., Turner, R.~E., and Brandstetter, J.
\newblock Pde-refiner: Achieving accurate long rollouts with neural pde solvers.
\newblock \emph{arXiv preprint arXiv:2308.05732}, 2023.

\bibitem[Liu et~al.(2024)Liu, Magliacane, Kofinas, and Gavves]{liu2024amortized}
Liu, Y., Magliacane, S., Kofinas, M., and Gavves, S.
\newblock Amortized equation discovery in hybrid dynamical systems.
\newblock In \emph{International Conference on Machine Learning}, pp.\  31645--31668. PMLR, 2024.

\bibitem[Lu et~al.(2021{\natexlab{a}})Lu, Wu, Hern{\'a}ndez-Lobato, and Sch{\"o}lkopf]{lu2021invariant}
Lu, C., Wu, Y., Hern{\'a}ndez-Lobato, J.~M., and Sch{\"o}lkopf, B.
\newblock Invariant causal representation learning for out-of-distribution generalization.
\newblock In \emph{International Conference on Learning Representations}, 2021{\natexlab{a}}.

\bibitem[Lu et~al.(2021{\natexlab{b}})Lu, Pestourie, Yao, Wang, Verdugo, and Johnson]{lu2021physics}
Lu, L., Pestourie, R., Yao, W., Wang, Z., Verdugo, F., and Johnson, S.~G.
\newblock Physics-informed neural networks with hard constraints for inverse design.
\newblock \emph{SIAM Journal on Scientific Computing}, 43\penalty0 (6):\penalty0 B1105--B1132, 2021{\natexlab{b}}.

\bibitem[Lusch et~al.(2018)Lusch, Kutz, and Brunton]{lusch2018deep}
Lusch, B., Kutz, J.~N., and Brunton, S.~L.
\newblock Deep learning for universal linear embeddings of nonlinear dynamics.
\newblock \emph{Nature communications}, 9\penalty0 (1):\penalty0 4950, 2018.

\bibitem[Mouli et~al.(2024)Mouli, Alam, and Ribeiro]{moulimetaphysica}
Mouli, S.~C., Alam, M., and Ribeiro, B.
\newblock Metaphysica: Improving ood robustness in physics-informed machine learning.
\newblock In \emph{The Twelfth International Conference on Learning Representations}, 2024.

\bibitem[Ortiz et~al.(2023)Ortiz, Guttag, and Dalca]{ortiz2023nonproportional}
Ortiz, J. J.~G., Guttag, J., and Dalca, A.
\newblock Non-proportional parametrizations for stable hypernetwork learning.
\newblock \emph{arXiv:2304.07645}, 2023.

\bibitem[Paszke et~al.(2019)Paszke, Gross, Massa, Lerer, Bradbury, Chanan, Killeen, Lin, Gimelshein, Antiga, et~al.]{paszke2019pytorch}
Paszke, A., Gross, S., Massa, F., Lerer, A., Bradbury, J., Chanan, G., Killeen, T., Lin, Z., Gimelshein, N., Antiga, L., et~al.
\newblock Pytorch: An imperative style, high-performance deep learning library.
\newblock \emph{Advances in neural information processing systems}, 32, 2019.

\bibitem[Pearl(2009)]{pearl2009causality}
Pearl, J.
\newblock \emph{Causality}.
\newblock Cambridge university press, 2009.

\bibitem[Perez et~al.(2018)Perez, Strub, De~Vries, Dumoulin, and Courville]{perez2018film}
Perez, E., Strub, F., De~Vries, H., Dumoulin, V., and Courville, A.
\newblock Film: Visual reasoning with a general conditioning layer.
\newblock In \emph{Proceedings of the AAAI conference on artificial intelligence}, volume~32, 2018.

\bibitem[Pervez et~al.(2024)Pervez, Locatello, and Gavves]{pervez2024mechanistic}
Pervez, A., Locatello, F., and Gavves, E.
\newblock Mechanistic neural networks for scientific machine learning.
\newblock In \emph{Proceedings of the 41st International Conference on Machine Learning}, pp.\  40484--40501, 2024.

\bibitem[Peters et~al.(2016)Peters, B{\"u}hlmann, and Meinshausen]{peters2016causal}
Peters, J., B{\"u}hlmann, P., and Meinshausen, N.
\newblock Causal inference by using invariant prediction: identification and confidence intervals.
\newblock \emph{Journal of the Royal Statistical Society: Series B (Statistical Methodology)}, 78\penalty0 (5):\penalty0 947--1012, 2016.

\bibitem[Rosenfeld et~al.(2020)Rosenfeld, Ravikumar, and Risteski]{rosenfeld2020risks}
Rosenfeld, E., Ravikumar, P., and Risteski, A.
\newblock The risks of invariant risk minimization.
\newblock \emph{arXiv preprint arXiv:2010.05761}, 2020.

\bibitem[Rusu et~al.(2018)Rusu, Rao, Sygnowski, Vinyals, Pascanu, Osindero, and Hadsell]{rusu2018meta}
Rusu, A.~A., Rao, D., Sygnowski, J., Vinyals, O., Pascanu, R., Osindero, S., and Hadsell, R.
\newblock Meta-learning with latent embedding optimization.
\newblock \emph{arXiv preprint arXiv:1807.05960}, 2018.

\bibitem[Sagawa et~al.(2019)Sagawa, Koh, Hashimoto, and Liang]{sagawa2019distributionally}
Sagawa, S., Koh, P.~W., Hashimoto, T.~B., and Liang, P.
\newblock Distributionally robust neural networks for group shifts: On the importance of regularization for worst-case generalization.
\newblock \emph{arXiv preprint arXiv:1911.08731}, 2019.

\bibitem[Seifner et~al.(2024)Seifner, Cvejoski, Körner, and Sánchez]{seifner2024foundationalinferencemodelsdynamical}
Seifner, P., Cvejoski, K., Körner, A., and Sánchez, R.~J.
\newblock Foundational inference models for dynamical systems, 2024.
\newblock URL \url{https://arxiv.org/abs/2402.07594}.

\bibitem[Shi et~al.(2022)Shi, Ding, Cao, Liu, Li, et~al.]{shi2022learning}
Shi, H., Ding, J., Cao, Y., Liu, L., Li, Y., et~al.
\newblock Learning symbolic models for graph-structured physical mechanism.
\newblock In \emph{The Eleventh International Conference on Learning Representations}, 2022.

\bibitem[Singh et~al.(2012)Singh, James, and Rudary]{singh2012predictive}
Singh, S., James, M., and Rudary, M.
\newblock Predictive state representations: A new theory for modeling dynamical systems.
\newblock \emph{arXiv preprint arXiv:1207.4167}, 2012.

\bibitem[Stachenfeld et~al.(2021)Stachenfeld, Fielding, Kochkov, Cranmer, Pfaff, Godwin, Cui, Ho, Battaglia, and Sanchez-Gonzalez]{stachenfeld2021learned}
Stachenfeld, K., Fielding, D.~B., Kochkov, D., Cranmer, M., Pfaff, T., Godwin, J., Cui, C., Ho, S., Battaglia, P., and Sanchez-Gonzalez, A.
\newblock Learned coarse models for efficient turbulence simulation.
\newblock In \emph{International Conference on Learning Representations}, 2021.

\bibitem[Sudhakaran(2022)]{sudhakaran2022}
Sudhakaran, S.~S.
\newblock hyper-nn.
\newblock \url{https://github.com/shyamsn97/hyper-nn}, 2022.

\bibitem[Vaswani(2017)]{vaswani2017attention}
Vaswani, A.
\newblock Attention is all you need.
\newblock \emph{Advances in Neural Information Processing Systems}, 2017.

\bibitem[von Oswald et~al.(2020)von Oswald, Henning, Grewe, and Sacramento]{oshg2019hypercl}
von Oswald, J., Henning, C., Grewe, B.~F., and Sacramento, J.
\newblock Continual learning with hypernetworks.
\newblock In \emph{International Conference on Learning Representations}, 2020.
\newblock URL \url{https://arxiv.org/abs/1906.00695}.

\bibitem[Wang et~al.(2021)Wang, Maddix, Faloutsos, Wang, and Yu]{wang2021bridging}
Wang, R., Maddix, D., Faloutsos, C., Wang, Y., and Yu, R.
\newblock Bridging physics-based and data-driven modeling for learning dynamical systems.
\newblock In \emph{Learning for dynamics and control}, pp.\  385--398. PMLR, 2021.

\bibitem[Wang et~al.(2022)Wang, Walters, and Yu]{wang2022meta}
Wang, R., Walters, R., and Yu, R.
\newblock Meta-learning dynamics forecasting using task inference.
\newblock \emph{Advances in Neural Information Processing Systems}, 35:\penalty0 21640--21653, 2022.

\bibitem[Wang et~al.(2016)Wang, Lai, and Grebogi]{wang2016data}
Wang, W.-X., Lai, Y.-C., and Grebogi, C.
\newblock Data based identification and prediction of nonlinear and complex dynamical systems.
\newblock \emph{Physics Reports}, 644:\penalty0 1--76, 2016.

\bibitem[Wu et~al.(2022)Wu, Maruyama, Zhao, Wetzstein, and Leskovec]{wu2022learning}
Wu, T., Maruyama, T., Zhao, Q., Wetzstein, G., and Leskovec, J.
\newblock Learning controllable adaptive simulation for multi-scale physics.
\newblock In \emph{NeurIPS 2022 AI for Science: Progress and Promises}, 2022.
\newblock URL \url{https://openreview.net/forum?id=PhktEpJHU3}.

\bibitem[Yeo \& Melnyk(2019)Yeo and Melnyk]{yeo2019deep}
Yeo, K. and Melnyk, I.
\newblock Deep learning algorithm for data-driven simulation of noisy dynamical system.
\newblock \emph{Journal of Computational Physics}, 376:\penalty0 1212--1231, 2019.

\bibitem[Yin et~al.(2021{\natexlab{a}})Yin, Ayed, de~B{\'e}zenac, Baskiotis, and Gallinari]{yin2021leads}
Yin, Y., Ayed, I., de~B{\'e}zenac, E., Baskiotis, N., and Gallinari, P.
\newblock Leads: Learning dynamical systems that generalize across environments.
\newblock \emph{Advances in Neural Information Processing Systems}, 34:\penalty0 7561--7573, 2021{\natexlab{a}}.

\bibitem[Yin et~al.(2021{\natexlab{b}})Yin, Le~Guen, Dona, de~B{\'e}zenac, Ayed, Thome, and Gallinari]{yin2021augmenting}
Yin, Y., Le~Guen, V., Dona, J., de~B{\'e}zenac, E., Ayed, I., Thome, N., and Gallinari, P.
\newblock Augmenting physical models with deep networks for complex dynamics forecasting.
\newblock \emph{Journal of Statistical Mechanics: Theory and Experiment}, 2021\penalty0 (12):\penalty0 124012, 2021{\natexlab{b}}.

\bibitem[Zintgraf et~al.(2019)Zintgraf, Shiarli, Kurin, Hofmann, and Whiteson]{zintgraf2019fast}
Zintgraf, L., Shiarli, K., Kurin, V., Hofmann, K., and Whiteson, S.
\newblock Fast context adaptation via meta-learning.
\newblock In \emph{International Conference on Machine Learning}, pp.\  7693--7702. PMLR, 2019.

\end{thebibliography}
\bibliographystyle{icml2025}

\clearpage

\appendix
\onecolumn

{
\centering
{\LARGE Appendix of Discovering Physics Laws of Dynamical Systems via Invariant Function Learning\par}
}

\etocdepthtag.toc{mtappendix}
\etocsettagdepth{mtchapter}{none}
\etocsettagdepth{mtappendix}{subsubsection}
\tableofcontents

\clearpage

\FloatBarrier  

\section{Notations}\label{app:notations}

For the ease of reading, we providing a table including major notations below for reference.

\begin{table}[h]
    \centering
    \caption{\textbf{Notation table}}
    \label{tab:notation}
    \begin{tabular}{p{1.25in}p{3.25in}}
    \toprule[1pt]
    \textbf{Notation} & \textbf{Explanation} \\
    \midrule
    $\displaystyle \vx_t$ & Dynamical system states at time $t$ \\
    $\displaystyle \gX$ & Dynamical system state space \\
    $\displaystyle T\gX$ & Tangent state space \\
    $\displaystyle \R$ & Real number space \\
    $\displaystyle d$ & The number of state \\
    $\displaystyle t$ & Time \\
    $\displaystyle T_c$ & The first future time step \\
    $\displaystyle X$ & A trajectory; A state matrix with T-step states \\
    $\displaystyle X_p$ & Past states before time step $T_c$ \\
    $\displaystyle X^c$ & An invariant trajectory\\
    $\displaystyle \hat{X}, \hat{X}_p, \hat{X}^c$  & The predicted trajectory of $X$, $X_p$, and $X^c$ (by a model) \\
    $\displaystyle \rmX, \rmX_p, \rmX^c$ & The matrix-valued random variable of $X$, $X_p$, and $X^c$ \\
    $\displaystyle \hat{\rmX}, \hat{\rmX}_p, \hat{\rmX}^c$  & The predicted matrix-valued random variable of $X$, $X_p$, and $X^c$ \\
    $\displaystyle f$ & A derivative function (of an underlying dynamical system) \\
    $\displaystyle f_c$ & An invariant derivative function\\
    $\displaystyle f_e$ & An environment derivative function \\
    $\displaystyle \hat{f}, \hat{f}_c, \hat{f}_e$ & The predicted functions of $f$, $f_c$, and $f_e$ (by a model) \\
    $\displaystyle \rf, \rf_c, \rf_e$ & The derivative function random variable of $f$, $f_c$, and $f_e$ \\
    $\displaystyle \hat{\rf}, \hat{\rf}_c, \hat{\rf}_e$ & The predicted derivative function random variable of $f$, $f_c$, and $f_e$ \\
    $\displaystyle \gF$ & Function space/Functional vector space \\
    $\displaystyle \hat{z}/\hat{z}_c/\hat{z}_e$ & A predicted full/invariant/environment hidden function representation \\
    $\displaystyle \gZ$ & Hidden function space/Hidden functional vector space \\
    $\displaystyle h$ & A hypernetwork \\
    $\displaystyle \gH$ & Hypernetwork function space \\
    $\displaystyle p(\cdot)$ & A probability distribution over a random variable \\
    $\displaystyle I(\cdot;\cdot)$ & Mutual information between random variables \\
    $\displaystyle H(\rmX) $ & Shannon entropy of the matrix-valued random variable $\rmX$\\
    $\displaystyle  \E_{\rmX\sim P} [ f(X) ]\text{ or } \E f(X)$ & Expectation of $f(X)$ with respect to $p(\rmX)$ \\
    $\displaystyle \rmX \sim P$ & Random variable $\rmX$ has distribution $P$\\
    $\displaystyle X \sim p(\rmX)$ & $X$ is sampled from distribution $p(\rmX)$ \\
    $\displaystyle  \E_{\rmX\sim P} [ f(\rmX) ]\text{ or } \E f(\rmX)$ & Expectation of $f(\rmX)$ with respect to $p(\rmX)$ \\
    $\displaystyle \{\cdot\}$ & An assignment/A substitution rule \\
    $\displaystyle \{\alpha \rightarrow \}$ & A symbol without an assigned value \\
    $\displaystyle \{\alpha \rightarrow a\}$ & A symbol with an assigned value $a$\\
    \bottomrule[1pt]
    \end{tabular}
\end{table}


\clearpage
\section{FAQ \& Discussions}\label{app:faq}
\label{app:limitations_and_future_work}

To facilitate the reviewing process, we summarize the answers to the questions that arose during the discussion of an earlier version of this paper.

The major updates of this version are reorganized theoretical studies, causal graph details, more experimental analyses. We include more related field comparisons to distinguish different settings. We also cover the position of this paper in literature and the main claims of this paper. Finally, we will frankly acknowledge the limitations of this paper, explain and justify the scope of coverage, and provide possible future directions.

\noindent\textbf{Q1: Can this method be applied to PDE systems?}

\noindent\textbf{A:} While this method has the potential to be extended to PDE systems, three critical challenges currently prevent its direct application:

\begin{enumerate}
    \item \textbf{Lack of multi-environment PDE datasets}: Unlike domain adaptation and generalization tasks, multi-environment datasets for PDE systems are not yet available. Although we constructed multi-environment datasets for ODE systems, extending this to PDEs is significantly more complex and requires domain-specific expertise. Unfortunately, designing such datasets is beyond the scope of this paper.
    
    \item \textbf{PDE-specific challenges}: Due to the continuous nature of PDEs, applying invariant function learning to PDE systems requires addressing multi-scale and multi-resolution problems. Scaling up to PDEs also introduces different training dynamics, which may necessitate additional techniques to stabilize and accelerate training.
    
    \item \textbf{Interpretability challenges}: As demonstrated in Appendix~\ref{app:symbolic_explanation}, we employ symbolic regression to provide conceptually understandable explanations for sanity checks. However, this approach does not extend naturally to PDE systems, which would require the development of new interpretability methods tailored to invariant function learning in PDEs.
\end{enumerate}

\noindent\textbf{Q2: Is $\rf_c$ one deterministic true function? What is the difference between $X$ and $\rmX$; $f$ and $\rf$? }

\noindent\textbf{A:} 
\begin{enumerate}
    \item \textbf{Is $\rf_c$ a deterministic function?} No. $\rf_c$ is a random variable sampled from the structural causal model (SCM) (see Fig.~\ref{fig:app_causal_model}), obeying the Markov property of graphical causal models. Consequently, all information-theoretic analyses in this work are non-trivial.
    
    \item \textbf{Difference between $X$ and $\rmX$; $f$ and $\rf$.} $X \in \mathbb{R}^{d \times T}$ represents a single realization sampled from the matrix-shaped random variable $\rmX$, \ie, one trajectory. Similarly, $f$ is a realization of the random function $\rf$.
    
    \item \textbf{Other notation-related questions.} Our notation follows ICLR standards. Please refer to our notation table (Table~\ref{tab:notation}). It is crucial to distinguish between random variables and their realizations.
\end{enumerate}

\noindent\textbf{Q3: What are the differences between coefficient environments and function environments? }

\noindent\textbf{A:} The primary difference lies in which factors vary across multiple environments.

\begin{enumerate}
    \item \textbf{Example}: Consider the pendulum motion as an example. A \textbf{coefficient environment} includes only a \textbf{single} function. Any change in the rope length or friction coefficient defines a new environment. In contrast, a \textbf{function environment} encompasses all functions that share the same functional form but differ in parameters such as rope length and friction coefficient. As long as these pendulums experience the same \textbf{type} of friction, they belong to the same function environment.
    
    \item \textbf{Notation}: Since a coefficient environment contains only one function, we use $f$ to represent that function. Conversely, a function environment consists of multiple functions. Theoretically, the number of functions within a function environment is \textbf{infinite}. Thus, we use the random variable $\rf$ to capture the function distribution within a function environment.
\end{enumerate}

It is important to emphasize that a single function does not correspond to a single trajectory. Even with the same $T$-step discretization, a function can generate infinitely many trajectories depending on different initial conditions.

\noindent\textbf{Q4: Why don't we fit trajectories with symbolic regressions and observe the invariant function?}

\noindent\textbf{A:} 

Regarding alternative strategies like find symbolic expression in different environments and then obtaining the same part $f_c$ through \textit{observing}, the most difficult part is \textit{observing}. The reason is direct: the fitting of f tends to capture both invariant and environment-specific aspects, which leads to spurious correlations, so that the final equation forms from different environments \textbf{look significantly different}. That is, after extracting equations from different environments, it is challenging to find the common part directly since the common parts are blended with environment parts and look like not disentangable.

Our DIF method explicitly enforces the separation of invariant dynamics from environment-specific factors, providing a more reliable basis for scientific discovery. This advantage become especially important in scenarios with complex environment effects or when specific invariant mechanisms need to be identified.

\noindent\textbf{Q5: Why traditional invariant learning cannot be applied directly?}

\noindent\textbf{A:}

Current invariant learning methods~\citep{arjovsky2019invariant,lu2021invariant,rosenfeld2020risks,krueger2021out,sagawa2019distributionally} follow the framework of invariant risk minimization (IRM)~\citep{arjovsky2019invariant}, which was inspired by invariant causal predictor~\citep{peters2016causal}. This invariant learning framework aims to learn a hidden invariant representation or invariant causal mechanism~\cite{pearl2009causality} that generalizes across multiple environments, ensuring out-of-distribution performance. However, this approach cannot work on dynamical forecast tasks due to the lack of invariant function definition and the violation of the categorical data assumption. To be more specific, first, invariant functions cannot be naturally defined in the real number vector space. Second, invariant learning commonly assumes the prediction results are categorical, where a single invariant representation can fully determine the corresponding label. However, this assumption is violated in dynamical system forecasting, where the invariant mechanism is only partially responsible for the output. In this case, the IRM principle can not hold even when the invariant function ground truth is provided. 
To address the issues, we introduce the causal assumption (see Fig.~\ref{fig:causal_graph}) that defines the invariant function space, and propose the corresponding invariant function learning principle and implementation.

\noindent\textbf{Q6: What is the position of this paper? What can be covered by this paper?}

\noindent\textbf{A:}  

\textbf{Position}: This paper introduces the concept of invariant function learning, motivated by the function learning requirements in physical systems.

\textbf{Scope of this paper}: The primary focus of this work is to establish invariant function learning in ODE systems from multiple perspectives.

\begin{enumerate}
    \item \textbf{Model design}: We propose the first invariant function learning method.
    
    \item \textbf{Theoretical analysis}: This paper builds upon the well-known invariant learning principle. We establish the foundation of invariant function learning, propose an ODE-based invariant learning structural causal model (SCM) with minimal assumptions, and provide theoretical guarantees for invariant function discovery.
    
    \item \textbf{Function environments and datasets}: We introduce the concept of function environments. To systematically evaluate invariant function learning, we construct multiple multi-environment ODE systems.
    
    \item \textbf{Adapted baselines}: We detail the design and implementation of adapted baselines from invariant learning and meta-learning frameworks.
    
    \item \textbf{Empirical analysis}: We conduct extensive empirical studies, including quantitative comparisons, visual analyses, interpretability assessments, and multi-perspective ablation studies on ODE systems (see Appx.~\ref{app:supplementary_experiments}).
\end{enumerate}

\noindent\textbf{Q7: What cannot be covered by this paper? What are the limitations of this paper?}

\noindent\textbf{A:} This paper aims to establish a solid foundation for invariant function learning from multiple perspectives, as described in Q4. However, exploring invariant function learning is a complex and expansive task that cannot be fully addressed in a single work. Below, we outline the key limitations of this paper to guide future research directions.

\begin{enumerate}
    \item \textbf{Invariant function learning in PDE systems}: This work focuses solely on invariant function learning in ODE systems. Extending this approach to PDE systems remains an open challenge. The reasons and current obstacles in this setting have been elaborated in FAQ Q1.

    \item \textbf{High dimensional system with graph structures}: One interesting direction considering graph-like interactions has been partially explored by~\citet{shi2022learning, cranmer2020discovering}. These systems are generally nosier and prone to affected by environment effects. Therefore, extending our methods to this topic is a meaningful future direction.
    
    \item \textbf{Broader range of applications}: Unlike meta-parameters, the learned invariant functions exhibit broader adaptability. For instance, a discovered physical law can generalize across various systems. Future research could explore applications of invariant function learning, such as extracting generalizable physics laws from videos or designing physics-aware agents. Additionally, it would be interesting to investigate whether invariant function learning can contribute to the development of foundational models in physics.
\end{enumerate}

\clearpage

\section{Invariant Function Learning foundation}\label{app:invariant_function_learning_function}

\subsection{Structural Causal Model}\label{app:structural_causal_model}

\begin{figure}[t]
    \centering
    \resizebox{0.4\textwidth}{!}{\includegraphics[width=1\textwidth]{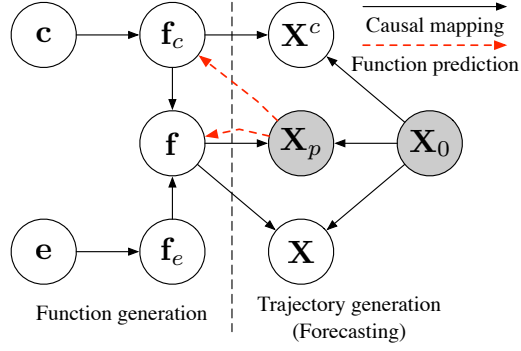}}
    \caption{Structural causal model.}
    \label{fig:app_causal_model}
\end{figure}

{ In this section, we discuss the trajectory generation process under the Structural Causal Model (SCM) assumption in Fig.~\ref{fig:app_causal_model}. To begin with, this SCM is a directed acyclic graph (DAG) with the following components:

\begin{itemize}
    \item Exogenous variables $U=\{\rc, \re, \rmX_0, \epsilon_c, \epsilon_e, \epsilon_{p}, \epsilon\}$ are not caused by any variables within the model and are from their own independent distributions. Here $\epsilon_c, \epsilon_e, \epsilon_{p}, \epsilon$ are noise terms introduced during the function generation and trajectory integral.
    \item Endogenous variables $V=\{\rf_c, \rf_e, \rf, \rmX^c, \rmX_p, \rmX\}$ are caused by the causal mappings within the model.
    \item Structural equations define the direct causation in the model.
    \begin{itemize}
        \item $\rf_c:=g_c(\rc, \epsilon_c)$
        \item $\rf_e:=g_e(\re, \epsilon_e)$
        \item $\rf:=g_{comp}(\rf_c, \rf_e)$
        \item $\rmX^c:=g_{int}(\rf_c, \rmX_0) + \epsilon$
        \item $\rmX_p:=g^{T_c}_{int}(\rf, \rmX_0) + \epsilon_{p}$
        \item $\rmX:=g_{int}(\rf, \rmX_0) + \epsilon$
    \end{itemize}
    where $g_{int}$ is a $T$-step integrator, while $g^{T_c}_{int}$ only applies integral for $T_c$ steps. While $g_c$, $g_e$, $g_{comp}$ are assumed to be unknown, $g_{int}$ is assumed to be an ideal integrator with an extra random noise $\epsilon$ to model the real world situations. Note that the cause of $\rmX$ can be written as $\rf_c$ and $\rf_e$ without side effects, so $\rf$ is used for analytical purposes. Therefore, $g_{comp}$ is a conceptual function composition without introducing noises.
\end{itemize}
}

\subsection{Proof of Invariant Function Learning Principle}\label{proof:invariant_function_learning_principle}


{\begin{theorem}[Invariant Function Learning Principle~\ref{thm:invariant_function_learning_principle}]
    Given the causal graph in Fig.~\ref{fig:causal_graph} and the predicted function random variable $\hat{\rf}_c = h_{\theta_c}(\rmX_p)$, the true invariant function random variable is $\rf_c = h_{\theta^*_c}(\rmX_p)$, where $\theta^*_c$ is the solution to the following optimization problem:
    \begin{equation}
        \theta^*_c = \argmax_{\theta_c} I(h_{\theta_c}(\rmX_p); \rf | \rmX_0) \quad \text{subject to} \quad h_{\theta_c}(\rmX_p) \indep \re,
    \end{equation}
    where $I(\cdot;\cdot)$ denotes mutual information, which quantifies the informational overlap between the predicted invariant function $h_{\theta_c}(\rmX_p)$ and the true full-dynamics function $\rf$.
\end{theorem}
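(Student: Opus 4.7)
The plan is to prove feasibility and optimality of $\rf_c$ in the constrained program, and then to conclude recoverability from any optimum. I would proceed in three stages: (i) feasibility of $\rf_c$ as a hypernetwork output satisfying the independence constraint; (ii) decomposition of the objective along the SCM to obtain an upper bound; and (iii) an equality-attainment argument giving identifiability of $\rf_c$ from the optimal $h_{\theta^*_c}(\rmX_p)$.

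For step~(i), reading off the structural equations in Appx.~\ref{app:structural_causal_model}, $\rf_c=g_c(\rc,\epsilon_c)$ is a function of exogenous variables that are independent of $\re$, hence $\rf_c\indep \re$. Because $\rmX_p=g^{T_c}_{int}(\rf,\rmX_0)+\epsilon_p$ carries enough information about $\rf$ to recover $\rf_c$ under a mild identifiability assumption on the integrator restricted to the hypernetwork's function class, there exists a parameter $\theta_c^{\dagger}$ with $h_{\theta_c^{\dagger}}(\rmX_p)=\rf_c$ almost surely, so $\rf_c$ is in the feasible set.

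For step~(ii), for any feasible $Y=h_{\theta_c}(\rmX_p)$ I would use the chain rule together with the deterministic composition $\rf=g_{comp}(\rf_c,\rf_e)$ to write
\[
I(Y;\rf\mid \rmX_0)=I(Y;\rf_c\mid \rmX_0)+I(Y;\rf_e\mid \rf_c,\rmX_0).
\]
The first summand is at most $H(\rf_c\mid \rmX_0)$, with equality when $\rf_c$ is a measurable function of $(Y,\rmX_0)$; for the second summand the plan is to push the constraint $Y\indep \re$ through the SCM. Since $\rf_e=g_e(\re,\epsilon_e)$ with $\epsilon_e$ exogenous, any residual dependence of $Y$ on $\rf_e$ given $\rf_c$ would have to route either through $\re$ (blocked by the constraint) or through $\epsilon_e$ (blocked by noise independence); a d-separation argument on the noise-augmented SCM then gives $I(Y;\rf_e\mid \rf_c,\rmX_0)=0$, yielding the clean bound $I(Y;\rf\mid \rmX_0)\le H(\rf_c\mid \rmX_0)$, saturated at $Y=\rf_c$.

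For step~(iii), any optimum $Y^{*}=h_{\theta^*_c}(\rmX_p)$ must attain equality in the first summand, i.e.\ $H(\rf_c\mid Y^{*},\rmX_0)=0$, so $\rf_c$ is almost surely a measurable function of $(Y^{*},\rmX_0)$ --- precisely the sense in which $\rf_c$ can be inferred from $h_{\theta^*_c}(\rmX_p)$. \textbf{The main obstacle} is verifying the vanishing of $I(Y;\rf_e\mid \rf_c,\rmX_0)$: the marginal independence $Y\indep \re$ does not automatically upgrade to conditional independence of $Y$ and $\rf_e$ given $\rf_c$, because $\rmX_p$ inherits $\epsilon_e$ through $\rf_e$. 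The cleanest way forward is to work with the SCM augmented by explicit noise nodes and verify d-separation there; a fallback, if needed, is a mild assumption that the $\epsilon_e$-induced part of $\rf_e$ is independent of $Y$ given $\rf_c$, which still suffices for the identifiability conclusion.
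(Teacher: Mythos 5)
Your decomposition $I(Y;\rf\mid\rmX_0)=I(Y;\rf_c\mid\rmX_0)+I(Y;\rf_e\mid\rf_c,\rmX_0)$ is essentially a cleaner packaging of the same entropy bookkeeping the paper performs in Appx.~\ref{proof:invariant_function_learning_principle} (there phrased as two contradiction arguments built on $H(\rf\mid\rf_c)=H(\rf_e)$ and $H(\rf\mid\rf_c^\prime)\ge H(\rf_e\mid\rf_c^\prime)$), and the obstacle you flag is genuine: $Y\indep\re$ does not by itself give $I(Y;\rf_e\mid\rf_c,\rmX_0)=0$, because $\rf_e=g_e(\re,\epsilon_e)$ has an exogenous-noise component that $\rmX_p$ inherits, so a feasible $Y$ could in principle encode the $\epsilon_e$-part of $\rf_e$ and exceed $\rf_c$ in the objective. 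You should know that the paper's own proof makes exactly this leap implicitly: it derives $H(\rf_e)>H(\rf_e\mid\rf_c^\prime)$ and declares a contradiction with ``$\rf_c^\prime\indep\re$,'' which is really a statement about independence from $\rf_e$ rather than from $\re$. So your fallback assumption (equivalently, folding $\epsilon_e$ into the environment variable, or assuming the $\epsilon_e$-component of $\rf_e$ cannot be extracted from $\rmX_p$ while staying independent of $\re$) is in effect what the paper assumes without saying so; a pure d-separation argument on the augmented SCM will not rescue you, because the constraint $Y\indep\re$ is distributional, not graphical.

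The more substantive shortfall is in your step (iii): saturating the bound only yields $H(\rf_c\mid Y^{*},\rmX_0)=0$, i.e., recoverability of $\rf_c$ from the optimum, whereas the statement under review asserts the equality $\rf_c=h_{\theta^*_c}(\rmX_p)$. Your argument leaves open optima of the form ``$\rf_c$ plus surplus information,'' e.g., extra bits about $\rmX_0$ (which the objective neither rewards nor penalizes, since it conditions on $\rmX_0$, yet which a function of $\rmX_p$ can easily carry) or about the $\epsilon_e$-part of $\rf_e$ (the same obstacle as above). The paper's uniqueness argument closes this gap by showing both $H(\rf_c\mid\rf_c^\prime)=0$ and $H(\rf_c^\prime\mid\rf_c)=0$ for any maximizer $\rf_c^\prime$ --- the latter by arguing that surplus information would propagate to the forecast and violate the MSE-minimization condition --- and then upgrading the resulting bijection between $\rf_c$ and $\rf_c^\prime$ to the identity via the derivative-fitting objective. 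To match the statement as written you need an analogous step excluding over-informative optima (or you must weaken the conclusion to the ``can be inferred from'' phrasing used in the main-text version of Thm.~\ref{thm:invariant_function_learning_principle}, which is what your proof actually delivers).
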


\begin{proof}
    \emph{Existence:}

    We first prove the existence of a solution $\theta_c^*$ to the optimization problem, such that $\rf_c = h_{\theta_c^*}(\rmX_p)$. To establish this, we proceed by contradiction. Assume no such $\theta_c^*$ exists, implying that $I(\rf_c; \rf | \rmX_0)$ is not maximized. Then, there must exist some $\rf_c^\prime$ such that $\rf_c^\prime \indep \re$ and $I(\rf_c^\prime; \rf | \rmX_0) > I(\rf_c; \rf | \rmX_0)$. Given the mutual information expression $I(\rf_c; \rf | \rmX_0) = H(\rf | \rmX_0) - H(\rf | \rf_c, \rmX_0)$, this inequality implies:
    \begin{equation}\label{eq:compare_entropy}
        H(\rf | \rf_c) > H(\rf | \rf_c^\prime).
    \end{equation}

    Since $\rmX_0$ is independent of $\rf$ and $\rf_c$ (as per Fig.~\ref{fig:causal_graph}), and given that $\rf = g_\rf(\rf_c, \rf_e)$ implies $H(\rf | \rf_c, \rf_e) = 0$, we derive:
    \begin{equation}
        H(\rf | \rf_c) = H(\rf_e).
    \end{equation}

    Similarly, for $\rf_c^\prime$, we have:
    \begin{equation}
        H(\rf | \rf_c^\prime) \geq H(\rf_e | \rf_c^\prime).
    \end{equation}

    Combining these results with Eq.~\ref{eq:compare_entropy}, we obtain:
    \begin{equation}
        H(\rf_e) > H(\rf_e | \rf_c^\prime),
    \end{equation}
    which contradicts the independence condition $\rf_c^\prime \indep \re$, as this would require $H(\rf_e) = H(\rf_e | \rf_c^\prime)$. Therefore, a solution $\theta_c^*$ exists, satisfying $\rf_c = h_{\theta_c^*}(\rmX_p)$.

    \emph{Uniqueness:} We now prove that for any solution $\theta_c^*$ of the optimization process, it holds that $\rf_c = h_{\theta_c^*}(\rmX_p)$. 

We use a proof by contradiction. Assume that there exists another solution $\rf_c^\prime \neq \rf_c$ that satisfies the independence constraint and achieves the maximum mutual information. By assumption, we have $H(\re) = H(\re | \rf_c^\prime)$ and $I(\rf_c^\prime; \rf | \rmX_0) = I(\rf_c; \rf | \rmX_0)$, which implies $H(\rf | \rf_c) = H(\rf | \rf_c^\prime)$. 

Since $\rf = g_\rf(\rf_c, \rf_e)$, we expand the entropy terms:
\begin{equation}
    H(\rf | \rf_c) = H(\rf_c, \rf_e | \rf_c) = H(\rf_e),
\end{equation}
and
\begin{equation}
    H(\rf | \rf_c^\prime) = H(\rf_c, \rf_e | \rf_c^\prime) = H(\rf_c | \rf_c^\prime) + H(\rf_e | \rf_c^\prime) = H(\rf_c | \rf_c^\prime) + H(\rf_e).
\end{equation}

Substituting $H(\rf | \rf_c) = H(\rf | \rf_c^\prime)$ into these equations, we find:
\begin{equation}
    H(\rf_c | \rf_c^\prime) = 0.
\end{equation}

Based on this, we now aim to prove that $\rf_c = \rf_c^\prime$. First, given $H(\rf_c^\prime | \rf_c) \geq 0$, we need to show that $H(\rf_c^\prime | \rf_c) = 0$. Assume that $H(\rf_c^\prime | \rf_c) > 0$. In this case, $\rf_c^\prime$ can determine $\rf_c$ while containing more information than $\rf_c$, all while remaining independent of $\rf_e$. This would imply that $H(\rf^\prime | \rf) > 0$, where $\rf^\prime = g_{\rf}(\rf^\prime, \rf_e)$, which would in turn affect the corresponding prediction, leading to $H(\rmX^\prime | \rmX) > 0$. Such a situation would violate the MSE minimization condition. 

Therefore, we must have both $H(\rf_c^\prime | \rf_c) = 0$ and $H(\rf_c | \rf_c^\prime) = 0$. This implies that $\rf_c$ and $\rf_c^\prime$ are isomorphic functions, i.e., there exists a bijective function $g_b$ such that $\rf_c = g_b(\rf_c^\prime)$. 

Furthermore, since minimizing the MSE of $X$ is equivalent to minimizing the MSE of $\frac{dX}{dt}$ given $X_0$, this minimization ensures that, for the same $\rf_e$, $\rf_c(X) = \rf_c^\prime(X)$ for all $X$. As a result, the bijective function $g_b$ must be the identity mapping, and we conclude that $\rf_c^\prime = \rf_c$ in the support of $p(\rmX)$. 

Thus, for any solution $\theta_c^*$ of the optimization process, it follows that $\rf_c = h_{\theta_c^*}(\rmX_p)$.

\end{proof}}

\subsection{Proof of ODE Cross-entropy Minimization}\label{proof:ODE_cross_entropy_minimizaion}

{\begin{lemma}[ODE cross-entropy minimization~\ref{lemma:ODE_cross_entropy_minimizaion}]
    Given our forecasting model $p(\rmX|h_\theta(\rmX_p), \rmX_0)$, it follows that the cross-entropy minimization between the data distribution $p(\rmX)$ and $p(\rmX|h_\theta(\rmX_p), \rmX_0)$ is equivalent to minimizing mean square error $\min_\theta \E_{X\sim p}\|X - \hat{X}\|^2_2$, where $\hat{X}$ is sampled from $p(\rmX|h_\theta(X_p), X_0)$.
\end{lemma}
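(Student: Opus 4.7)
The plan is to exploit the explicit Gaussian form of the forecasting model stated earlier in the excerpt, namely $p(\rmX \mid \hat f, X_0) = \mathcal{N}\!\bigl(\rmX;\,g_{int}(\hat f, X_0),\,\sigma^2 I\bigr)$ with $\hat f = h_\theta(X_p)$, and reduce the cross-entropy to a (scaled, shifted) squared error. First I would write the cross-entropy objective as
\begin{equation*}
H\!\bigl(p(\rmX),\,p(\rmX\mid h_\theta(\rmX_p),\rmX_0)\bigr)
= -\,\E_{(X,X_p,X_0)\sim p}\bigl[\log p(X\mid h_\theta(X_p),X_0)\bigr],
\end{equation*}
using that the joint $(X,X_p,X_0)$ is drawn from the SCM of Fig.~\ref{fig:causal_graph} and that $X_p,X_0$ are deterministic functions of $X$ in the trajectory representation.

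Next I would substitute the Gaussian log-density. Since $p(\rmX\mid h_\theta(X_p),X_0)$ is $\mathcal N\!\bigl(g_{int}(h_\theta(X_p),X_0),\sigma^2 I\bigr)$, its negative log-density satisfies
\begin{equation*}
-\log p\!\bigl(X\mid h_\theta(X_p),X_0\bigr)
= \frac{1}{2\sigma^2}\,\bigl\|X - g_{int}(h_\theta(X_p),X_0)\bigr\|_2^2 \;+\; C(\sigma,d,T),
\end{equation*}
where $C$ is a normalizing constant independent of $\theta$. Taking expectations and dropping $\theta$-independent terms, minimizing cross-entropy over $\theta$ is equivalent to minimizing $\E\bigl\|X - g_{int}(h_\theta(X_p),X_0)\bigr\|_2^2$.

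Finally I would reconcile this with the MSE written in the lemma, which uses a \emph{sample} $\hat X \sim p(\rmX\mid h_\theta(X_p),X_0)$ rather than the Gaussian mean $g_{int}(h_\theta(X_p),X_0)$. Decomposing $\hat X = g_{int}(h_\theta(X_p),X_0)+\epsilon$ with $\epsilon\sim\mathcal N(0,\sigma^2 I)$ independent of $X$ and $\theta$, the bias--variance split gives
\begin{equation*}
\E\|X - \hat X\|_2^2
= \E\bigl\|X - g_{int}(h_\theta(X_p),X_0)\bigr\|_2^2 + \E\|\epsilon\|_2^2,
\end{equation*}
and the second term is a $\theta$-independent constant. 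Hence $\argmin_\theta \E\|X-\hat X\|_2^2$ coincides with the cross-entropy minimizer, completing the proof.

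The only subtlety, and the step I expect to be most easily mis-stated, is this last bookkeeping: one must be explicit that $\hat X$ differs from the model's conditional mean only by integrator noise that is independent of $\theta$, so that averaging over that noise merely adds a constant and preserves the argmin. No other step requires more than routine Gaussian computations.
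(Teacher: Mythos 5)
Your proposal is correct and follows essentially the same route as the paper's proof: rewrite the cross-entropy as a negative Gaussian log-likelihood, drop the $\theta$-independent normalizing constant to obtain the squared error to the integrator mean $g_{int}(h_\theta(X_p),X_0)$, and then account for the sampled $\hat{X}$ by showing the noise contributes only an additive constant. If anything, your bias--variance bookkeeping (writing $\hat{X}=g_{int}(h_\theta(X_p),X_0)+\epsilon$ with $\epsilon$ independent of $X$ and $\theta$, so the cross term vanishes exactly) is stated more carefully than the paper's corresponding cancellation step, which treats $X-\hat{X}$ as independent of $\epsilon$.
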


\begin{proof}
    The forecasting optimization goal is to use our framework to approximate the data distribution $p(\rmX)$, parameterized as $p(\rmX|h_\theta(\rmX_p), \rmX_0)$, where we apply the cross-entropy minimization, \ie,$H(p(\rmX), p(\rmX|h_\theta(\rmX_p), \rmX_0)) = -\E_{\rmX\sim p}\left[\log p(X|h_\theta(X_p), X_0)\right]$. Furthermore, this negative log-likelihood optimization can be further reduced to the common mean squared error (MSE).

    \begin{equation}
        \min_\theta \E_{\rmX\sim p}\|X - \hat{X}\|^2_2,
    \end{equation}

    where $\hat{X}\sim p(\rmX|h_\theta(X_p), X_0)$. Since the distribution $p(\rmX|h_\theta(X_p), X_0)$ is modeled as a Gaussian $\mathcal{N}\left(\mathbf{X} ; g_{int}(h_\theta(X_p), X_0), \sigma^2 I\right)$ (Sec.~\ref{subsection:hypernetwork_design}), we have $\hat{X}=\mu + \epsilon$, where $\epsilon \sim \mathcal{N}\left(0, \sigma^2 I\right)$.
    \begin{equation}
        \begin{aligned}
        &\min_\theta -\mathbb{E}_{\mathbf{X}\sim p}[\log p(X|h_\theta(X_p), X_0)] \\
        =& \min_\theta \mathbb{E}_{\mathbf{X}\sim p}\left[\frac{n}{2}\log(2\pi\sigma^2) + \frac{1}{2\sigma^2}\|X-\mu\|_2^2\right] \\
        =& \min_\theta \mathbb{E}_{\mathbf{X}\sim p}\left[\frac{1}{2\sigma^2}\|X-\mu\|_2^2\right] + \frac{n}{2}\log(2\pi\sigma^2) \\
        \end{aligned}
    \end{equation}

    Since $\frac{n}{2}\log(2\pi\sigma^2)$ is a constant, we ignore it in the minimization process.
    \begin{equation}
        \begin{aligned}
        &\min_\theta \mathbb{E}_{\mathbf{X}\sim p}\left[\frac{1}{2\sigma^2}\|X-\mu\|_2^2\right] + \frac{n}{2}\log(2\pi\sigma^2) \\
        =& \min_\theta \mathbb{E}_{\mathbf{X}\sim p}\left[\frac{1}{2\sigma^2}\|X-(\hat{X}-\epsilon)\|_2^2\right] \\
        =& \min_\theta \mathbb{E}_{\mathbf{X}\sim p}\left[\frac{1}{2\sigma^2}\|X-\hat{X}+\epsilon\|_2^2\right]\\
        =& \min_\theta \mathbb{E}_{\mathbf{X}\sim p}\left[\frac{1}{2\sigma^2}(\|X-\hat{X}\|_2^2 + \|\epsilon\|_2^2 + 2(X-\hat{X})^T\epsilon)\right]\\
        =& \min_\theta \mathbb{E}_{\mathbf{X}\sim p}\left[\frac{1}{2\sigma^2}\|X-\hat{X}\|_2^2  \right] + \frac{1}{2\sigma^2}\mathbb{E}_{\epsilon}[\|\epsilon\|_2^2] + \frac{1}{\sigma^2}\mathbb{E}_{\mathbf{X}\sim p}(X-\hat{X})^T\mathbb{E}_{\epsilon}[\epsilon]\\
        \end{aligned}
    \end{equation}
    Here, $\frac{1}{2\sigma^2}\mathbb{E}_{\rmX\sim p}[\|\epsilon\|_2^2]$ is a constant; $\mathbb{E}_{\mathbf{X}\sim p}\left[\frac{1}{\sigma^2}(X-\hat{X})^T\epsilon\right]=\frac{1}{\sigma^2}\mathbb{E}_{\mathbf{X}\sim p}(X-\hat{X})^T\mathbb{E}_{\epsilon}[\epsilon]=0$ since $\epsilon$ is independently sampled with a zero mean. Therefore,
    \begin{equation}
        \begin{aligned}
        &\min_\theta \mathbb{E}_{\mathbf{X}\sim p}\left[\frac{1}{2\sigma^2}\|X-\hat{X}\|_2^2  \right] + \frac{1}{2\sigma^2}\mathbb{E}_{\epsilon}[\|\epsilon\|_2^2] + \frac{1}{\sigma^2}\mathbb{E}_{\mathbf{X}\sim p}(X-\hat{X})^T\mathbb{E}_{\epsilon}[\epsilon]\\
        =& \min_\theta \mathbb{E}_{\mathbf{X}\sim p}\left[\frac{1}{2\sigma^2}\|X-\hat{X}\|_2^2\right] \\
        =& \min_\theta \mathbb{E}_{\mathbf{X}\sim p}\left[\|X-\hat{X}\|_2^2\right]
        \end{aligned}
    \end{equation}

This reduction connects the information theory and the practical MSE optimization, which further helps us to transform the mutual information maximization into a similar MSE optimization below.
\end{proof}}

\subsection{Proof of ODE Conditional Mutual Information Maximization}\label{proof:ODE_conditional_information_maximization}

{\begin{proposition}[Proof of ODE conditional mutual information maximization~\ref{proposition:ODE_conditional_information_maximization}]
    Given forecasting model $p(\rmX|h_{\theta_c}(X_p), X_0)$, it follows that the conditional mutual information maximization $\max_{\theta_c} I(h_{\theta_c}(\rmX_p); \rf|\rmX_0)$ is equivalent to minimizing mean square error $\min_{\theta_c} \E_{\rmX\sim p}\|X - \hat{X}^c\|^2_2$, where $\hat{X}^c$ is the predicted trajectory sampled from $p(\rmX|h_{\theta_c}(X_p), X_0)$.
\end{proposition}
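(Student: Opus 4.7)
The plan is to reduce the conditional mutual information maximization to the cross-entropy (and hence MSE) minimization already established in Lemma~\ref{lemma:ODE_cross_entropy_minimizaion}, by using the SCM in Fig.~\ref{fig:causal_graph} to bridge between the unobserved target $\rf$ and the observed trajectory $\rmX$.

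First I would expand $I(h_{\theta_c}(\rmX_p); \rf \mid \rmX_0) = H(\rf \mid \rmX_0) - H(\rf \mid h_{\theta_c}(\rmX_p), \rmX_0)$. Since $H(\rf \mid \rmX_0)$ does not depend on $\theta_c$, the maximization is equivalent to $\min_{\theta_c} H(\rf \mid h_{\theta_c}(\rmX_p), \rmX_0)$. The next task is to translate this into an entropy on the observable $\rmX$. The SCM in Appx.~\ref{app:structural_causal_model} gives $\rmX = g_{int}(\rf, \rmX_0) + \epsilon$ with $\epsilon$ an independent Gaussian noise; assuming the ODE solution map $g_{int}(\cdot, X_0)$ is injective (which holds under standard Lipschitz assumptions), we obtain $H(\rmX \mid \rf, \rmX_0) = H(\epsilon)$. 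A short chain-rule calculation then shows that $H(\rf \mid h_{\theta_c}(\rmX_p), \rmX_0)$ and $H(\rmX \mid h_{\theta_c}(\rmX_p), \rmX_0)$ differ only by a $\theta_c$-independent constant, so the problem reduces to $\min_{\theta_c} H(\rmX \mid h_{\theta_c}(\rmX_p), \rmX_0)$.

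I would then apply the variational upper bound $H(\rmX \mid h_{\theta_c}(\rmX_p), \rmX_0) \leq -\E_{X \sim p}[\log p(X \mid h_{\theta_c}(X_p), X_0)]$, with equality attained inside our Gaussian model family $\mathcal{N}(g_{int}(h_{\theta_c}(X_p), X_0), \sigma^2 I)$ specified in Sec.~\ref{subsection:hypernetwork_design} when $h_{\theta_c}(\rmX_p)$ agrees with the true derivative. Minimizing this cross-entropy over $\theta_c$ is, by a verbatim re-run of the Gaussian computation in the proof of Lemma~\ref{lemma:ODE_cross_entropy_minimizaion} (with $h_\theta$ replaced by $h_{\theta_c}$ and $\hat{X}$ by $\hat{X}^c$), equivalent to $\min_{\theta_c} \E_{X \sim p} \| X - \hat{X}^c \|_2^2$, which is the claim.

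The main obstacle I anticipate is the middle step: passing cleanly from $H(\rf \mid \cdot, \rmX_0)$ to $H(\rmX \mid \cdot, \rmX_0)$ while isolating a $\theta_c$-independent additive constant. This requires the injectivity of the ODE flow map $g_{int}(\cdot, X_0)$ and the mutual independence of $\epsilon$ from $\rf$, $\rmX_0$, and the noise $\epsilon_p$ driving $\rmX_p$ (and therefore from $h_{\theta_c}(\rmX_p)$); both are already implicit in the SCM but must be invoked explicitly. The first and last steps are essentially bookkeeping and a direct reuse of Lemma~\ref{lemma:ODE_cross_entropy_minimizaion}.
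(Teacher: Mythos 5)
Your proposal follows essentially the same route as the paper's proof: expand the conditional mutual information, drop the $\theta_c$-independent term $H(\rf\mid\rmX_0)$, use the SCM relation $\rmX=g_{int}(\rf,\rmX_0)+\epsilon$ with the chain rule to trade $H(\rf\mid h_{\theta_c}(\rmX_p),\rmX_0)$ for $H(\rmX\mid h_{\theta_c}(\rmX_p),\rmX_0)$ up to constants, and then reuse Lemma~\ref{lemma:ODE_cross_entropy_minimizaion} to land on the MSE objective. The only small difference is that you invoke injectivity of the flow map, which the paper's argument does not actually need (independence of $\epsilon$ and the fact that $H(\rf\mid\rmX,\rmX_0)$ is $\theta_c$-independent suffice), and your explicit variational-bound step is a slightly more careful phrasing of the equality the paper writes directly.
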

\begin{proof}
    According to Lemma~\ref{lemma:ODE_cross_entropy_minimizaion}, we can reduce a negative log-likelihood minimization $\min_{\theta_c} - \E_{\rmX \sim p} \log{p(X|h_{\theta_c}(X_p),X_0)}$ to $\min_{\theta_c} \E_{\rmX\sim p}\|X - \hat{X}^c\|^2_2$.

    Therefore, we only need to prove the equivalence between the $- \E_{\rmX \sim p} \log{p(X|h_{\theta_c}(X_p),X_0)}$ minimization and the $I(h_{\theta_c}(\rmX_p); \rf|\rmX_0)$ maximization. It follows that
    \begin{equation}
        \begin{aligned}
            & \max_{\theta_c} I(h_{\theta_c}(\rmX_p); \rf|\rmX_0) \\
            =&\max_{\theta_c} H(\rf|\rmX_0) - H(\rf|h_{\theta_c}(\rmX_p),\rmX_0).
        \end{aligned}
    \end{equation}
    Since $H(\rf|\rmX_0)$ is a constant, we ignore it in the maximization process and obtain
    \begin{equation}
        \begin{aligned}
             &\max_{\theta_c} H(\rf|\rmX_0) - H(\rf|h_{\theta_c}(\rmX_p),\rmX_0) \\
            =&\max_{\theta_c} - H(\rf|h_{\theta_c}(\rmX_p),\rmX_0) \\
            =&\min_{\theta_c} H(\rf|h_{\theta_c}(\rmX_p),\rmX_0). \\
        \end{aligned}
    \end{equation}
    Since $\rmX=g_{int}(\rf, \rmX_0) + \epsilon$ where $\epsilon$ is an indepedent random noise, $H(\rmX|h_{\theta_c}(\rmX_p),\rmX_0, \rf)=H(\epsilon|h_{\theta_c}(\rmX_p))=H(\epsilon)$.
    Since given specific $\theta_c$, $H(h_{\theta_c}(\rmX_p)| \rmX)=0$, with the conditional entropy chain rule, it follows that
    \begin{equation}
        \begin{aligned}
            &\min_{\theta_c} H(\rf|h_{\theta_c}(\rmX_p),\rmX_0) \\
            =&\min_{\theta_c} H(\rf, \rmX|h_{\theta_c}(\rmX_p),\rmX_0) - H(\rmX|h_{\theta_c}(\rmX_p),\rmX_0, \rf) \\
            =&\min_{\theta_c} H(\rf, \rmX|h_{\theta_c}(\rmX_p),\rmX_0) - H(\epsilon)\\
            =&\min_{\theta_c} H(\rf, \rmX|h_{\theta_c}(\rmX_p),\rmX_0)\\
            =&\min_{\theta_c} H(\rf|h_{\theta_c}(\rmX_p),\rmX_0, \rmX) +  H(\rmX|h_{\theta_c}(\rmX_p),\rmX_0)\\
            =&\min_{\theta_c} H(\rf|\rmX_0, \rmX) +  H(\rmX|h_{\theta_c}(\rmX_p),\rmX_0)\\
            =&\min_{\theta_c} H(\rmX|h_{\theta_c}(\rmX_p),\rmX_0) \\
            =&\min_{\theta_c} - \E_{\rmX, \rmX_p, \rmX_0 \sim p} \log{p(X|h_{\theta_c}(X_p),X_0)} \\
            =&\min_{\theta_c} - \E_{\rmX \sim p} \log{p(X|h_{\theta_c}(X_p),X_0)} \\
        \end{aligned}
    \end{equation}
    where $H(\rf|\rmX_0, \rmX)$ is a constant.
    Here, we finish building the equivalence between the function conditional mutual information maximization and the trajectory negative log-likelihood minimization. Using the proof in Lemma~\ref{lemma:ODE_cross_entropy_minimizaion}, our final optimization goal can be reduced to
    \begin{equation}
        \min_{\theta_c} \E_{\rmX\sim p}\|X - \hat{X}^c\|^2_2.
    \end{equation}
\end{proof}

\subsection{Theoretical Justification for Adversarial Training}\label{app:independence_training}

To incorporate the independence constraint, we enforce the condition \( \hat{\rf}_c \indep \re \), where $\hat{\rf}_c=h_{\theta_c}(\rmX_p)$ is the predicted function random variable, not a realization. Since \( \hat{\rf}_c \indep \re \) is equivalent to \( I(\re;\hat{\rf}_c)=0 \), and \( I(\re;\hat{\rf}_c)\ge 0 \), the objective \( I(\re;\hat{\rf}_c) \) reaches its minimum when and only when \( \hat{\rf}_c \indep \re \). This leads us to define minimizing \( I(\re;\hat{\rf}_c) \) as our training criterion.

\begin{definition}
    The environment independence training criterion is
    \begin{equation}
        \theta^*_c = \argmin_{\theta_c} I(\re;\hat{\rf}_c).
    \end{equation}
\end{definition}

This training criterion can be used directly, since the mutual information \( I(\re;\hat{\rf}_c) = \mathbb{E} \left[ \log{\frac{P(\re|\hat{\rf}_c)}{P(\re)}} \right] \) while \( P(\re|\hat{\rf}_c) \) is unknown. Following Proposition 1 in GAN~\cite{goodfellow2020generative}, we introduce an optimal discriminator \( g_\phi: \mathcal{F} \mapsto \mathcal{E} \) with parameters \( \phi \) to approximate the unknown \( P(\re|\hat{\rf}_c) \) as \( P_{\phi}(\re|\hat{\rf}_c) \), minimizing the negative log-likelihood \( -\mathbb{E} \left[ \log{P_{\phi}(\re|\hat{\rf}_c)} \right] \). We then have the following two propositions:

\begin{proposition}\label{proposition:1}
    For \( \theta_c \) fixed, the optimal discriminator \( \phi \) is
    \begin{equation}
        \phi^* = \argmin_{\phi} -\mathbb{E} \left[ \log{P_{\phi}(\re|\hat{\rf}_c)} \right].
    \end{equation}
\end{proposition}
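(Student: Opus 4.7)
The plan is to adapt the classical argument from Proposition 1 of Goodfellow et al.'s GAN paper to our conditional (environment-prediction) setting. First I would expand the expectation over the joint distribution $p(\re, \hat{\rf}_c)$ induced by the causal graph with $\theta_c$ held fixed, obtaining
\begin{equation}
-\mathbb{E}\bigl[\log P_\phi(\re\mid \hat{\rf}_c)\bigr] = -\int p(\hat{\rf}_c)\left[\int p(\re\mid \hat{\rf}_c)\log P_\phi(\re\mid \hat{\rf}_c)\, d\re\right] d\hat{\rf}_c.
\end{equation}
Because $p(\hat{\rf}_c)\geq 0$ and does not depend on $\phi$, minimizing the overall objective reduces to minimizing the inner bracket pointwise for every $\hat{\rf}_c$ in the support of $p(\hat{\rf}_c)$.

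Next I would apply the standard cross-entropy decomposition, adding and subtracting $p(\re\mid \hat{\rf}_c)\log p(\re\mid \hat{\rf}_c)$ inside the inner integral:
\begin{equation}
-\int p(\re\mid \hat{\rf}_c)\log P_\phi(\re\mid \hat{\rf}_c)\, d\re \;=\; H(\re\mid \hat{\rf}_c) \;+\; \mathrm{KL}\bigl(p(\re\mid \hat{\rf}_c)\,\|\,P_\phi(\re\mid \hat{\rf}_c)\bigr).
\end{equation}
The first term depends only on the data distribution and the fixed $\theta_c$, so it is constant with respect to $\phi$. Gibbs' inequality then yields that the KL term is nonnegative and vanishes if and only if $P_\phi(\re\mid \hat{\rf}_c) = p(\re\mid \hat{\rf}_c)$. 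Consequently, under sufficient capacity of $g_\phi$, the pointwise minimum is attained by the discriminator that recovers the true conditional $p(\re\mid \hat{\rf}_c)$, and this same $\phi$ simultaneously minimizes the outer $p(\hat{\rf}_c)$-weighted integral, giving the claimed $\phi^*$.

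The main obstacle I anticipate is not analytical but conceptual: the statement implicitly assumes that $g_\phi$ has enough capacity to represent $p(\re\mid \hat{\rf}_c)$ exactly, mirroring the nonparametric assumption used in GAN-style proofs. I would state this capacity assumption explicitly at the start of the proof and note that, without it, the argument still shows $\phi^*$ yields the best-in-class approximation of $p(\re\mid \hat{\rf}_c)$ in KL divergence. A minor secondary issue is measure-theoretic: I would restrict attention to the support of $p(\hat{\rf}_c)$ (which depends on $\theta_c$) so that the pointwise minimization is well-defined, and invoke Fubini to exchange the order of integration in the expansion above.
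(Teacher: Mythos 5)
Your argument is correct and follows essentially the same route as the paper, which simply invokes the cross-entropy training criterion for Proposition~\ref{proposition:1} and spells out the identical decomposition (cross-entropy equals conditional entropy plus a nonnegative KL term, constant terms dropped for fixed \(\theta_c\)) inside the proof of Proposition~\ref{proposition:2}. Your version merely makes explicit what the paper leaves implicit — the sufficient-capacity assumption on \(g_\phi\), Gibbs' inequality, and the restriction to the support of \(p(\hat{\rf}_c)\) — which is a reasonable tightening rather than a different approach.
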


This proposition can be proved straightforwardly by applying the cross-entropy training criterion.

\begin{proposition}\label{proposition:2}
    Denoting KL-divergence as \( \mathrm{KL}[\cdot\|\cdot] \), for \( \theta_c \) fixed, the optimal discriminator \( \phi \) is \( \phi^* \), such that
    \begin{equation}
        \mathrm{KL}\left[P(\re|\hat{\rf}_c) \| P_{\phi^*}(\re|\hat{\rf}_c)\right] = 0.
    \end{equation}
\end{proposition}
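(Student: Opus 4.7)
The plan is to leverage the standard identity that relates cross-entropy to KL divergence and then invoke the non-negativity and equality conditions of KL divergence. Specifically, I would first rewrite the cross-entropy objective from Proposition~\ref{proposition:1} by adding and subtracting $\log P(\re|\hat{\rf}_c)$ inside the expectation. Taking the expectation over the joint distribution of $(\re,\hat{\rf}_c)$, the tower property of conditional expectation gives the decomposition
\begin{equation}
-\mathbb{E}\bigl[\log P_\phi(\re|\hat{\rf}_c)\bigr] \;=\; \mathbb{E}_{\hat{\rf}_c}\!\bigl[H(P(\re|\hat{\rf}_c))\bigr] \;+\; \mathbb{E}_{\hat{\rf}_c}\!\Bigl[\mathrm{KL}\bigl[P(\re|\hat{\rf}_c)\,\|\,P_\phi(\re|\hat{\rf}_c)\bigr]\Bigr],
\end{equation}
where the first term is the (conditional) Shannon entropy of the true conditional distribution and is independent of $\phi$.

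Next, since minimizing the left-hand side over $\phi$ is equivalent to minimizing only the second term, and since $\mathrm{KL}\bigl[P(\re|\hat{\rf}_c)\,\|\,P_\phi(\re|\hat{\rf}_c)\bigr]\ge 0$ with equality if and only if $P_\phi(\re|\hat{\rf}_c) = P(\re|\hat{\rf}_c)$ almost surely, I would argue that the global minimizer $\phi^*$ (in the non-parametric sense where the discriminator family is sufficiently expressive) must satisfy $P_{\phi^*}(\re|\hat{\rf}_c) = P(\re|\hat{\rf}_c)$. Substituting back yields $\mathrm{KL}\bigl[P(\re|\hat{\rf}_c)\,\|\,P_{\phi^*}(\re|\hat{\rf}_c)\bigr] = 0$, as claimed.

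The main obstacle I anticipate is the expressiveness assumption: the proposition implicitly requires that the parametric family $\{P_\phi\}_\phi$ contains (or can arbitrarily approximate) the true conditional $P(\re|\hat{\rf}_c)$. Without this, the infimum of KL can remain strictly positive even at the optimal $\phi^*$. I would therefore state this universal-approximation condition explicitly in the proof (mirroring the standard assumption in Proposition~1 of GAN~\citep{goodfellow2020generative}) so that the non-negativity-plus-realizability argument cleanly forces the KL divergence to zero. Beyond this, the remaining steps are purely routine manipulations of expectations and logarithms.
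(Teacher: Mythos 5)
Your proposal follows essentially the same route as the paper's proof: both decompose the negative log-likelihood $-\mathbb{E}[\log P_\phi(\re|\hat{\rf}_c)]$ into a $\phi$-independent constant (the paper writes it via $I(\re;\hat{\rf}_c)$ and $H(\re)$, you via the conditional entropy $H(\re|\hat{\rf}_c)$, which is the same quantity) plus the expected KL divergence, and then invoke non-negativity of KL at the minimizer. Your explicit statement of the expressiveness/realizability assumption on the discriminator family is a worthwhile refinement, since the paper's final step from ``KL is bounded below by $0$'' to ``$\mathrm{KL}[P(\re|\hat{\rf}_c)\|P_{\phi^*}(\re|\hat{\rf}_c)]=0$'' implicitly relies on exactly that assumption (in the spirit of the optimal-discriminator argument of GANs).
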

\begin{proof}

Given a fixed \( \theta_c \), both \( I(\re;\hat{\rf}_c) \) and \( H(\re) \) are constants. Therefore, we have:

\begin{equation}
    \begin{aligned}
        \phi^* &= \argmin_{\phi} - \mathbb{E} \left[ \log{P_{\phi}(\re|\hat{\rf}_c)} \right] \\
        &= \argmin_{\phi} I(\re;\hat{\rf}_c) - \mathbb{E} \left[ \log{P_{\phi}(\re|\hat{\rf}_c)} \right] - H(\re) \\
        &= \argmin_{\phi} \mathrm{KL}\left[P(\re|\hat{\rf}_c) \| P_{\phi}(\re|\hat{\rf}_c)\right].
    \end{aligned}
\end{equation}

Thus, minimizing the negative log-likelihood of \( P_{\phi}(\re|\hat{\rf}_c) \) is equivalent to minimizing the KL divergence between \( P(\re|\hat{\rf}_c) \) and its approximation \( P_{\phi}(\re|\hat{\rf}_c) \). Since KL divergence is bounded by 0, we have $\mathrm{KL}\left[P(\re|\hat{\rf}_c) \| P_{\phi^*}(\re|\hat{\rf}_c)\right] = 0$. This concludes the proof.
\end{proof}

With these propositions, the mutual information can be computed with the help of the optimal discriminator \( \phi^* \). According to Proposition~\ref{proposition:2}, we have:
\begin{equation}
    \begin{aligned}
        I(\re;\hat{\rf}_c) &= \mathbb{E} \left[ \log{P_{\phi^*}(\re|\hat{\rf}_c)} \right] + H(\re) + \mathrm{KL}\left[P(\re|\hat{\rf}_c) \| P_{\phi^*}(\re|\hat{\rf}_c)\right] \\
        &= \mathbb{E} \left[ \log{P_{\phi^*}(\re|\hat{\rf}_c)} \right] + H(\re) + 0.
    \end{aligned}
\end{equation}

Thus, by disregarding the constant \( H(\re) \), the training criterion becomes:
\begin{equation}\label{eq:re_indep}
    \begin{aligned}
        \theta^*_c &= \argmin_{\theta_c} I(\re;\hat{\rf}_c) \\
        &= \argmin_{\theta_c} \mathbb{E} \left[ \log{P_{\phi^*}(\re|\hat{\rf}_c)} \right] \\
        &= \argmin_{\theta_c} \left\{ \max_{\phi} \mathbb{E} \left[ \log{P_{\phi}(\re|\hat{\rf}_c)} \right] \right\},
    \end{aligned}
\end{equation}

where $P_{\phi}(\re|\hat{\rf}_c)$ is the probability modeling of $g_\phi$. Therefore, the log-likelihood adversarial training can enforce the independence $h_{\theta_c}(\rmX_p) \indep \re$.

}

\clearpage

\section{Datasets}\label{app:datasets}

\subsection{Basic Setup}

We conduct experiments on the proposed three multi-environment datasets ME-Pendulum, ME-Lotka-Volterra, and ME-SIREpidemic. Each of these datasets includes 1000 samples, where 800 and 200 samples are assigned to training set and test set, respectively. Each training set has 4 environments where 200 samples are generated in each environment. Each sample is observed over 10 units of time, and each time is discretized by regularly-spaced discrete time steps from $t_0$ to $t_T$, where $T=99$, \ie, there are 100 time intervals of 0.1 unit of time each. A sample generation process is controlled by a set of ODEs with common parameters $\rmW^c\sim\gU(W^c_{low}, W^c_{high})$ and environment-specific parameters $\rmW^e\sim\gU(W^e_{low}, W^e_{high})$ sampled from their uniform distributions, \eg, in the pendulum system, $\rmW^c=\{\bm{\alpha}\}$ and $\rmW^e=\{\bm{\rho}\}$, where $\bm{\alpha}\sim\gU(\alpha_{low}, \alpha_{high})$ and $\bm{\rho}\sim\gU(\rho_{low}, \rho_{high})$. Note that each environment has its specific function form with its environment-specific parameters $\rmW^e$.

The environment split is the same in the test set. The only difference is that in the test set each sample $X$ has one additional prediction target $X^c$ with only the invariant dynamics. For example, the invariant trajectory $X^c$ of the one generated by $-\alpha^2 \sin{\theta_t} - \rho \omega_t$ will be created by $-\alpha^2 \sin{\theta_t}$.

\subsection{ME-Pendulum}

ME-Pendulum is motivated by the DampedPendulum system~\citep{yin2021augmenting}. The state $X_t=[\theta_t, \omega_t] \in \R^2$ are the angle and angular velocity of the pendulum at time $t$, where we have $\rmW^c=\{\bm{\alpha}\}$, $\rmW^e=\{\bm{\rho}\}$, and $T_C=\frac{T}{3}$. The underlying invariant ODE is $\frac{d \theta_t}{d t}=\omega_t, \frac{d \omega_t}{d t}=-\bm{\alpha}^{2} \sin \left(\theta_t\right)$. As shown in Tab.~\ref{tab:me-pendulum}, this invariant ODE is entangled with different environmental factors, forming four environments, namely, \emph{damped}, \emph{powered}, \emph{spring}, \emph{air}.

\begin{table}[h!]
\centering
\caption{\textbf{ME-Pendulum ODEs} with $\theta_0\sim \gU(0, \frac{\pi}{2})$ and $\omega_0\sim \gU(-1, 0)$.}\label{tab:me-pendulum}
\resizebox{1\textwidth}{!}{
\begin{tabular}{c|c|c|c}
\toprule
\textbf{Environment} & \textbf{ODE for} $\theta_t$ & \textbf{ODE for} $\omega_t$ & \textbf{Distribution of Parameters} \\ 
\midrule
Damped & $\frac{d \theta_t}{d t} = \omega_t$ & $\frac{d \omega_t}{d t} = -\bm{\alpha}^{2} \sin \left(\theta_t\right) - \bm{\rho} \omega_t$ & \multirow{5}{*}{
    \begin{tabular}{l}
        $\bm{\alpha} \sim \gU(1.0, 2.0)$ \\
        $\bm{\rho} \sim \gU(0.2, 0.4)$
    \end{tabular}} \\ 
Powered & $\frac{d \theta_t}{d t} = \omega_t$ & $\frac{d \omega_t}{d t} = -\bm{\alpha}^{2} \sin \left(\theta_t\right) + \bm{\rho} \frac{\omega_t}{|\omega_t|}$ &  \\ 
Spring & $\frac{d \theta_t}{d t} = \omega_t$ & $\frac{d \omega_t}{d t} = -\bm{\alpha}^{2} \sin \left(\theta_t\right) - \bm{\rho} \theta_t$ &  \\ 
Air & $\frac{d \theta_t}{d t} = \omega_t$ & $\frac{d \omega_t}{d t} = -\bm{\alpha}^{2} \sin \left(\theta_t\right) - \bm{\rho} |\omega_t| \omega_t$ &  \\ 
\cmidrule{1-3}
Invariant & $\frac{d \theta_t}{d t} = \omega_t$ & $\frac{d \omega_t}{d t} = -\bm{\alpha}^{2} \sin \left(\theta_t\right)$ &  \\ 
\bottomrule
\end{tabular}
}
\end{table}

\subsection{ME-Lotka-Volterra}

Motivated by the Lotka-Volterra system~\citep{ahmad1993nonautonomous}, the state $X_t=[p_t, q_t] \in \R^2$ are the population of preys and predators at time $t$, where we have $\rmW^c=\{\bm{\alpha}, \bm{\beta}, \bm{\gamma}, \bm{\delta}\}$, $\rmW^e=\{\bm{\alpha}', \bm{\beta}', \bm{\gamma}', \bm{\delta}'\}$, and $T_C=\frac{T}{2}$. The underlying invariant ODEs are $\frac{d p}{d t}=\bm{\alpha} p-\bm{\beta} p q, \frac{d q}{d t}=\bm{\delta} p q-\bm{\gamma} q$. As shown in Tab.~\ref{tab:me_lotka_volterra}, these invariant ODEs are entangled in 4 environments, \ie, \emph{save}, \emph{fight}, \emph{resource}, \emph{omnivore}. The save environment ODEs simulate the decrease of food wastage along with the increase of predators. The fight environment ODEs simulate the decrease of hunting efficiency along with the increase of predators. The resource environment ODEs limit the increase rate of the prey population. In the omnivore environment ODEs, the predators are omnivores that can build the population without preys under certain resource limits.

We plot the trajectories $X$ in the training set, and the invariant trajectories $X^c$ in the test set in Fig.~\ref{fig:me_pendulum}.

\begin{figure}[!h]
    \centering
    \begin{subfigure}[b]{0.794\linewidth}
        \includegraphics[width=1\linewidth]{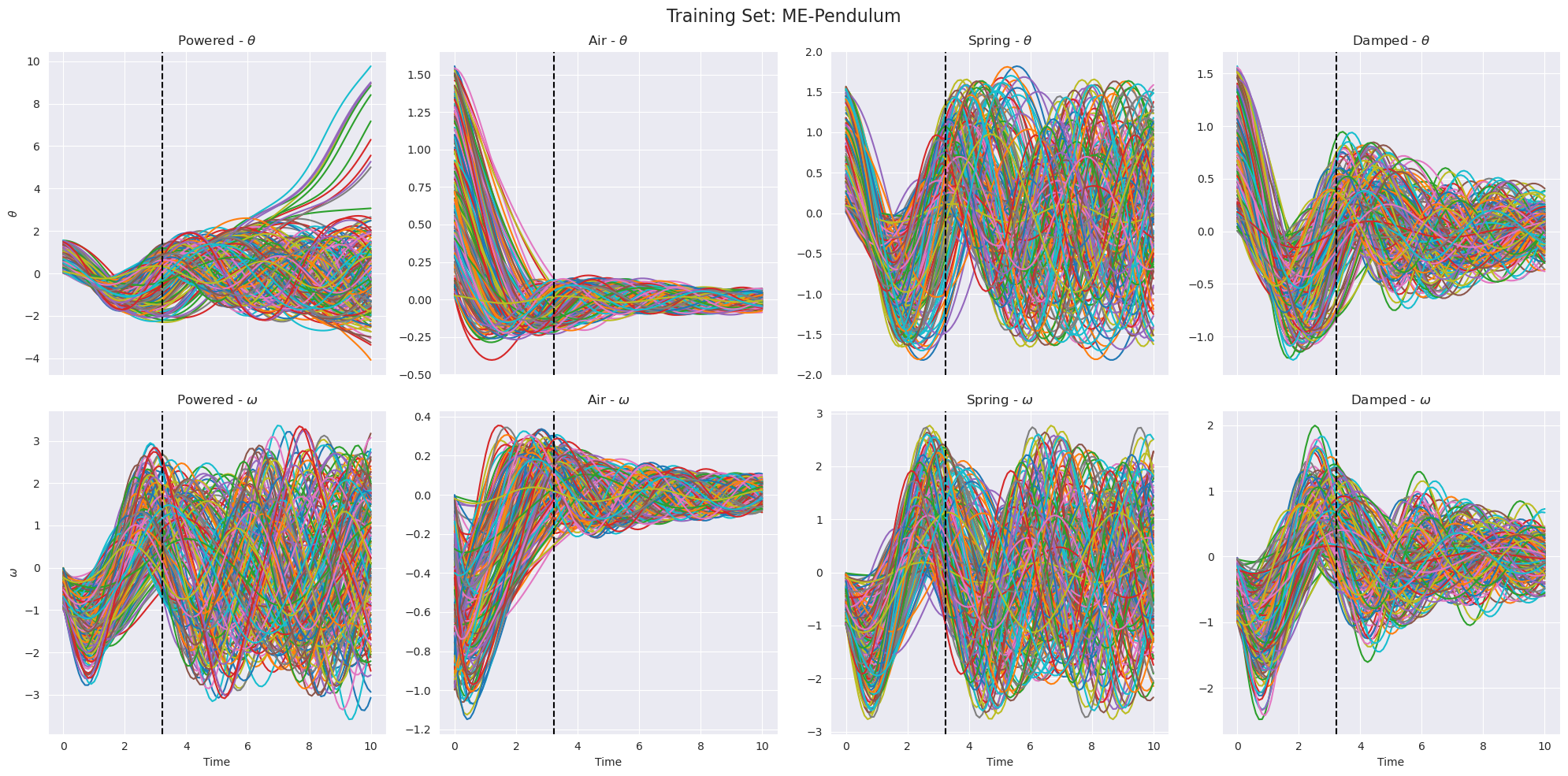}
    \end{subfigure}
    \begin{subfigure}[b]{0.196\linewidth}
        \includegraphics[width=1\linewidth]{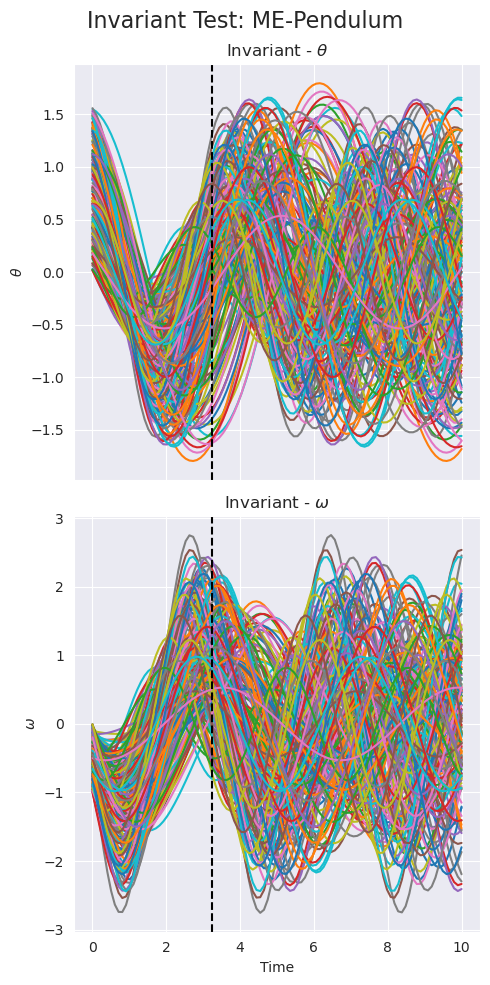}
    \end{subfigure}
    \caption{\textbf{ME-Pendulum trajectories.}}
        \label{fig:me_pendulum}
\end{figure}

\begin{table}[h!]
\centering
\caption{\textbf{ME-Lotka-Volterra ODEs} with $p_0\sim \gU(1000, 2000)$ and $q_0\sim \gU(10, 20).$}
\label{tab:me_lotka_volterra}
\resizebox{1\textwidth}{!}{
\begin{tabular}{c|c|c|c|c}
\toprule
\textbf{Environment} & \textbf{ODE for} $p_t$ & \textbf{ODE for} $q_t$ & \textbf{Distribution of Parameters} \\ 
\midrule
Save & $\frac{d p}{d t} = \bm{\alpha} p - \bm{\beta} p q - \bm{\beta}' p q \cdot 10 \exp{\left(-\frac{q}{10}\right)}$ & $\frac{d q}{d t} = \bm{\delta} p q - \bm{\gamma} q$ & \multirow{5}{*}{
    \begin{tabular}{l}
        $\bm{\alpha}, \bm{\alpha}' \sim \gU(1.2, 2.4)$ \\
        $\bm{\beta}, \bm{\beta}' \sim \gU(6e-2, 1.2e-1)$ \\
        $\bm{\gamma}, \bm{\gamma}' \sim \gU(0.48, 0.96)$ \\
        $\bm{\delta}, \bm{\delta}' \sim \gU(4.8e-4, 9.6e-4)$
    \end{tabular}} \\ 
Fight & $\frac{d p}{d t} = \bm{\alpha} p - \bm{\beta} p q$ & $\frac{d q}{d t} = \bm{\delta} p q + \bm{\delta}' p q \cdot 10 \exp{\left(-\frac{q}{10}\right)} - \bm{\gamma} q$ &  \\ 
Resource & $\frac{d p}{d t} = \bm{\alpha} p - \bm{\alpha}' \frac{p^2}{2000} - \bm{\beta} p q$ & $\frac{d q}{d t} = \bm{\delta} p q - \bm{\gamma} q$ &  \\ 
Omnivore & $\frac{d p}{d t} = \bm{\alpha} p - \bm{\beta} p q$ & $\frac{d q}{d t} = \bm{\delta} p q + 20 \bm{\gamma}' \left(1 - \frac{q}{100}\right) - \bm{\gamma} q$ &  \\ 
\cmidrule{1-3}
Invariant & $\frac{d p}{d t} = \bm{\alpha} p - \bm{\beta} p q$ & $\frac{d q}{d t} = \bm{\delta} p q - \bm{\gamma} q$ &  \\ 
\bottomrule
\end{tabular}}
\end{table}

We plot the trajectories $X$ in the training set, and the invariant trajectories $X^c$ in the test set in Fig.~\ref{fig:me_lotka_volterra}.

\begin{figure}[!h]
    \centering
    \begin{subfigure}[b]{0.794\linewidth}
        \includegraphics[width=1\linewidth]{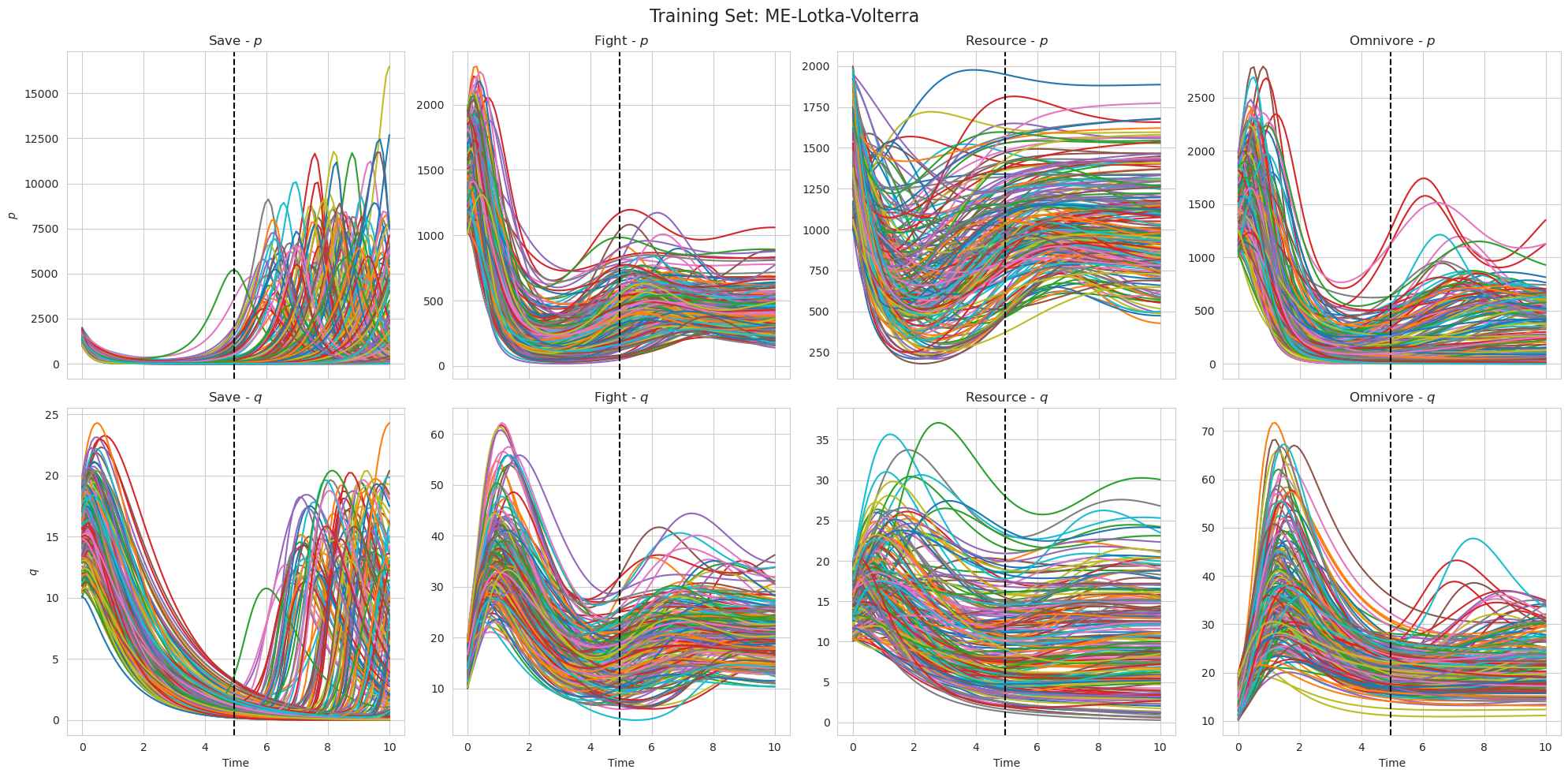}
    \end{subfigure}
    \begin{subfigure}[b]{0.196\linewidth}
        \includegraphics[width=1\linewidth]{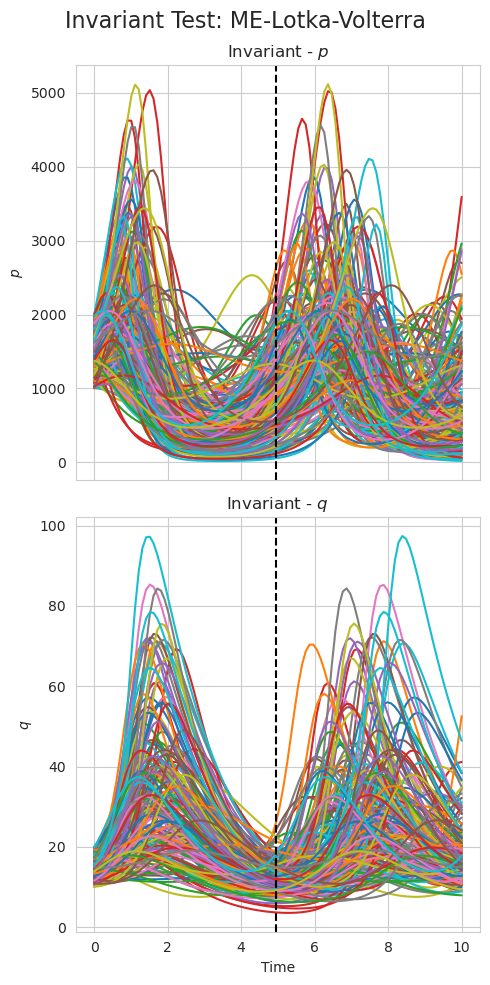}
    \end{subfigure}
    \caption{\textbf{ME-Lotka-Volterra trajectories.}}
        \label{fig:me_lotka_volterra}
\end{figure}

\subsection{ME-SIREpidemic}

In the SIREpidemic~\citep{wang2021bridging} system, the states $X_t=[S_t, I_t, R_t]\in \R^3$ are the susceptible, infected, and recovered individuals at time $t$, respectively. In this adapted ME-SIREpidemic system, we have $\rmW^c=\{\bm{\beta}\}$, $\rmW^e=\{\bm{\gamma}\}$, and $T_c=\frac{T}{2}$. The underlying invariant ODEs are $\frac{d S}{d t}=-\beta \frac{S I}{S+I+R}, \frac{d I}{d t}=\beta \frac{S I}{S+I+R}, \frac{d R}{d t}=0$, where we only care about the $S$ to $I$ transformation relationship. As shown in Tab.~\ref{tab:me-sirepidemic}, we introduce 4 environments, \emph{origin}, \emph{enlarge}, \emph{loop}, \emph{negative}. These four environments describe four different models, where some of them are only for math modeling. The origin environment is the same as the original SIREpidemic model. The enlarge environment ODEs expand the epidemic range. The loop environment ODEs include deaths and second-time infections. The negative environment ODEs is a pure math model allowing negative numbers.

\begin{table}[h!]
\centering
\caption{\textbf{ME-SIREpidemic ODEs} with $S_0\sim\gU(9, 10)$, $I_0\sim\gU(1, 5)$, and $R_0=0$.}
\label{tab:me-sirepidemic}
\resizebox{1\textwidth}{!}{
\begin{tabular}{c|c|c|c|c}
\toprule
\textbf{Environment} & \textbf{ODE for} $S_t$ & \textbf{ODE for} $I_t$ & \textbf{ODE for} $R_t$ & \textbf{Distribution of Parameters} \\ 
\midrule
Origin & $\frac{d S}{d t} = -\beta \frac{S I}{S+I+R}$ & $\frac{d I}{d t} = \beta \frac{S I}{S+I+R} - \bm{\gamma} I$ & $\frac{d R}{d t} = \bm{\gamma} I$ & \multirow{5}{*}{
    \begin{tabular}{l}
        $\bm{\beta} \sim \gU(4, 8)$ \\
        $\bm{\gamma} \sim \gU(0.4, 0.8)$
    \end{tabular}} \\ 
Enlarge & $\frac{d S}{d t} = -\beta \frac{S I}{S+I+R} + \bm{\gamma} I$ & $\frac{d I}{d t} = \beta \frac{S I}{S+I+R} - \bm{\gamma} I$ & $\frac{d R}{d t} = \bm{\gamma} I$ &  \\ 
Loop & $\frac{d S}{d t} = -\beta \frac{S I}{S+I+R} + \bm{\gamma} I + \bm{\gamma} R$ & $\frac{d I}{d t} = \beta \frac{S I}{S+I+R} - 2 \bm{\gamma} I$ & $\frac{d R}{d t} = \bm{\gamma} I - \bm{\gamma} R$ &  \\ 
Negative & $\frac{d S}{d t} = -\beta \frac{S I}{S+I+R}$ & $\frac{d I}{d t} = \beta \frac{S I}{S+I+R} + \bm{\gamma} \log{I}$ & $\frac{d R}{d t} = -\bm{\gamma} \log{I}$ &  \\ 
\cmidrule{1-4}
Invariant & $\frac{d S}{d t} = -\beta \frac{S I}{S+I+R}$ & $\frac{d I}{d t} = \beta \frac{S I}{S+I+R}$ & $\frac{d R}{d t} = 0$ &  \\ 
\bottomrule
\end{tabular}
}
\end{table}

We plot the trajectories $X$ in the training set, and the invariant trajectories $X^c$ in the test set in Fig.~\ref{fig:me_sirepidemic}.

\begin{figure}[!h]
    \centering
    \begin{subfigure}[b]{0.794\linewidth}
        \includegraphics[width=1\linewidth]{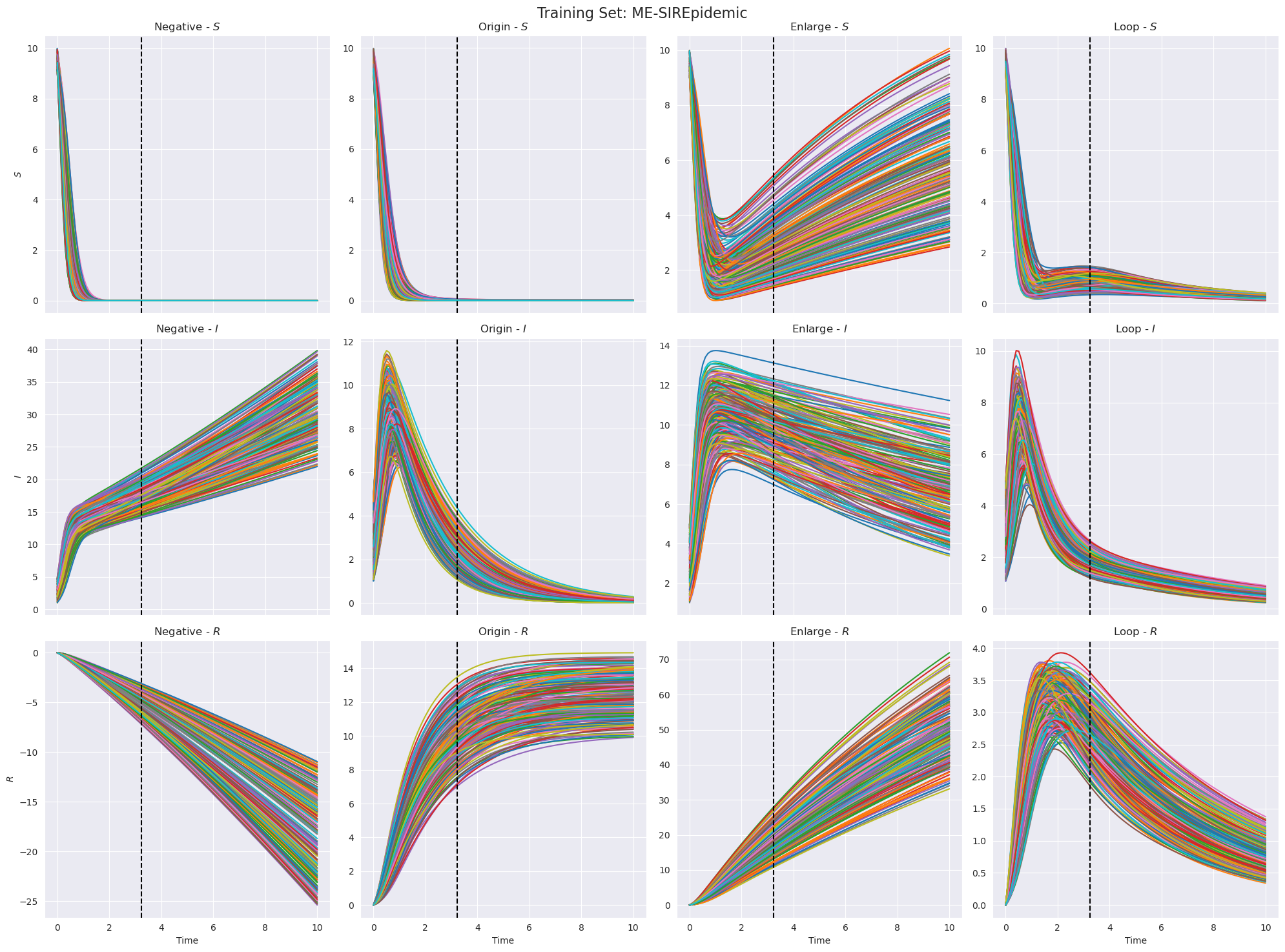}
    \end{subfigure}
    \begin{subfigure}[b]{0.196\linewidth}
        \includegraphics[width=1\linewidth]{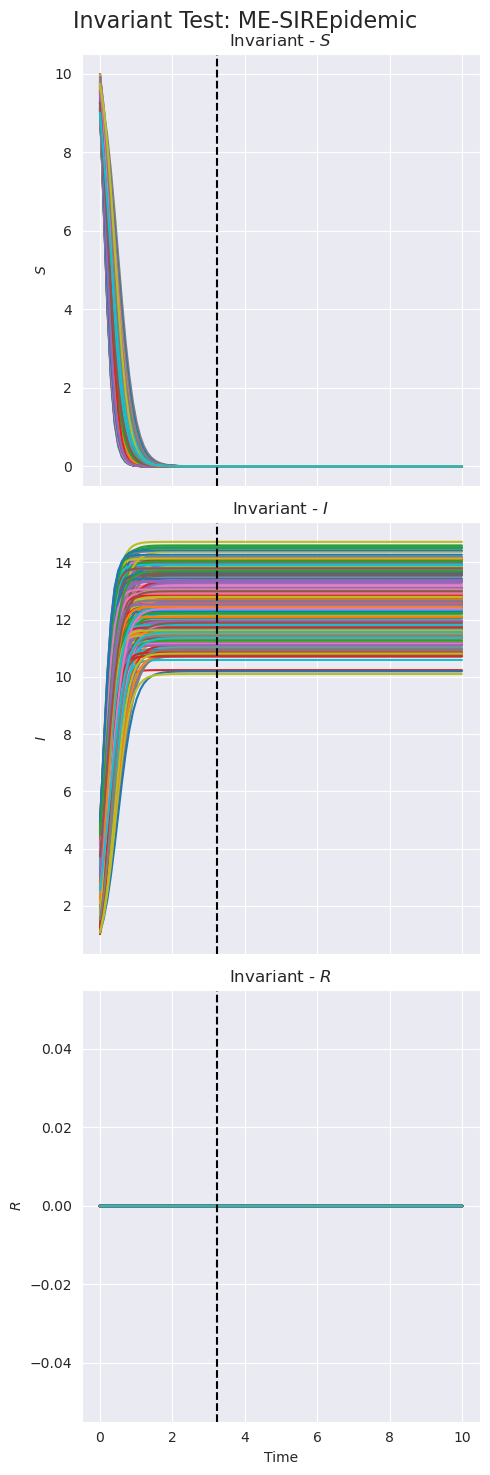}
    \end{subfigure}
    \caption{\textbf{ME-SIREpidemic trajectories.}}
        \label{fig:me_sirepidemic}
\end{figure}

\section{Experimental Details}\label{app:experimental_details}

We conduct experiments on 800-sample training sets with a training batch size of 32, which leads to 25 iterations per epoch. For each run, we optimize the neural network with 2,000 epochs, which is equivalent to 50,000 iterations. Given fixed learning iterations, the learning rate is selected from $\gU(1e-4, 1e-3)$. 

\subsection{Baselines}\label{app:baselines}

In our experiments, we design four adapted baselines, since this new task has never been explored before. The selection of baselines is based on the following two questions.

\begin{itemize}
    \item As invariant learning has been successfully applied in many representation learning tasks, can general invariant learning principle still work for invariant function learning? 
    \item Meta-learning techniques has been well designed to solved problems in dynamical systems due to their quick adaptation characteristics. However, there is no evidence that the quick adaptable functions are the invariant function that shares across environments. Are they?
\end{itemize}

For the first question, since our method is based on enforcing independence which shares a similar philosophy as domain adversarial neural network~\cite{ganin2016domain}, we consider other invariant methods going different ways. There are many methods well-known in the field of invariant learning, but considering our invariant function learning formulation, we found the discovery of invariant function requires independence that is different from the most typical "well performed across all environments" invariant learning requirements. Therefore, as a validation of our guess, we adapt two widely used and known invariant learning methods, IRM~\cite{arjovsky2019invariant} and VREx~\cite{krueger2021out}. These two techniques are directly applied to our framework with the same architecture and their typical hyper-parameters searching spaces:

\begin{itemize}
    \item $\lambda_{irm} \sim \gU(1e-2, 1e2)$
    \item $\lambda_{vrex} \sim \gU(1e-1, 1e3)$
\end{itemize}

As shown in our result plot~\ref{fig:quantitative_results}, the results are not surprising. That means that the function can perform well across multiple environments is not the invariant function.

For the second question, our initial guess is that meta-learning is very sensitive to the distribution of the training set. If certain pattern exists in multiple environments (not all), the meta-learning methods are prone to capture it as a part of the meta-function, which satisfies their quick adaptation goals. In our experiments, we choose MAML~\cite{finn2017model} and CoDA~\cite{Kirchmeyer2022coda} as our adapted baselines. CoDA is adapted since it uses hypernetwork as a full network adaptor which is similar to our framework. However, CoDA is applied on coefficient-environments and requires test-time adaptation without a trajectory encoder, leading to significant architecture differences. Therefore, we apply CoDA as meta-learning techniques focusing on its low-dimension (2-dimension) environment representation and regularization. MAML is selected as the most typical meta-learning baseline, which does not require the use of hypernetwork, and only learns a meta function used to predict invariant trajectories. Their typical hyper-parameters searching spaces are shown as follows.

\begin{itemize}
    \item $\lambda_{coda}\sim \gU(1e-5, 1e-3)$
    \item $\lambda_{maml}\sim \gU(1e-3, 1)\ \ \ \ \ \ \ \mbox{(Meta learning rate)}$
\end{itemize}

\subsubsection{Baseline details}

For fair comparisons, we tried our best to make baselines as similar to our architecture as possible. So all the performance gains come from our IFL principle.

\textbf{Architecture:}

To better distinguish the baselines, let's denote our DIF framework without $\hat{e}$ predictions (Figure~\ref{fig:method_framework} and remove the $g_\phi$ MLP and $\hat{e}$ branch) as DIF-Base.

\begin{itemize}
\item \textbf{MAML}: Only Forecastor in the DIF-Base (with MAML optimization) since MAML does meta-learning at the optimization level.
\item \textbf{CoDA}: DIF-Base and set the dimension of $\hat{z}_e=2$ (2 is the original paper setting).
\item \textbf{IRM}: DIF-Base. IRM regularization is at the loss level so no architecture change.
\item \textbf{VREx}: DIF-Base. VREx regularization is also at the loss level.
\end{itemize}

\textbf{Inputs and Outputs}:

\begin{itemize}
\item \textbf{MAML} takes $X_p$ and outputs the forecasting trajectory directly.

\item \textbf{CoDA, IRM, VREx} take $X_p$ as input and output function representations $\hat{z}_c$ and $\hat{z}_e$, then finally output the forcasting trajectory. The same as DIF.
\end{itemize}

\textbf{Training}:

\begin{itemize}
\item \textbf{MAML}: MAML optimization on Forcaster.
\item \textbf{CoDA, IRM, VREx} the same as DIF.
\end{itemize}

\textbf{Inference}: (\textbf{Decoding $X^c$ for the baselines})

\begin{itemize}
\item \textbf{MAML}: use the Forcaster with meta-parameters.
\item \textbf{CoDA, IRM, VREx}: the same as DIF: use the $f_c$ branch.
\end{itemize}

\subsection{Disentanglement of Invariant Function Setup}

\subsubsection{Architecture}\label{app:architecture}

\textbf{Transformer.} For the trajectory encoder in our hypernetwork, we apply a 6-layer 8-head 256-dimension FFN transformer~\cite{vaswani2017attention} with frequency positional encoding~\cite{gehring2017convolutional}. We tried different architectures like GRUs~\cite{cho2014learning}, but the transformer encoder can provide the best in-domain test performance easily. We also sweeped to the depth, width, and number of heads, and found that 6-layer 8-head 256-dimension FFN transformer is strong enough for our ODE systems without making training difficult.

\textbf{Function embedding.} In the hidden function embedding space, we select the function embedding dimension to be 32 or 64, while these two selections perform quite similar. For the MLPs used to disentangle and decode hidden function embedding, we use 3-layer MLPs with ReLU as the activations. While for the decoder, the last layer projects the hidden function embedding to parameterized function space $\R^m$ where $m$ is the number of parameters in the derivative neural network.

\textbf{Derivative function network.} The derivative neural network is a 4-layer or 5-layer MLP with width 16 or 32, which takes $X_t\in \R^d$ as input and output $\frac{d X_t}{d t} \in \R^d$. This neural network is transformed to be a functional in our implementation.

\textbf{Discriminator.} Our discriminator is a 3 to 6 layer MLP with width 64 or 128. The size of discriminator can be easily chosen since the main goal of it is to discriminate the environment information in the hidden function space. Therefore, the simplest way to filter non-qualified discriminators is using the prediction entropy of $P_\phi(\re | \hat{\rf}_e)$, since $\hat{\rf}_e$ should contain rich environment information, different to the prediction from $\hat{\rf}_c$. Our experiments also validate that the failure to distinguish $\hat{\rf}_e$ will always cause the failure of invariant function discovery, which is natural, since the adversarial training is based on the optimal discriminators~\cite{goodfellow2020generative}.

Note that for all our MLPs, we apply one LayerNorm before each activation.\footnote{The code will be released upon acceptance.}

\subsubsection{Training Objectives}~\label{app:training_objectives}

We restate our three additional strategies here.
Firstly, the adversarial training of $\hat{\rf}_c$ will cause the loss of environment information in $\hat{\rf}_c$, leading to the training difficulty of the discriminator; therefore, this discriminator is not only trained on $\hat{\rf}_c$ but also on $\hat{\rf}_e$ with the same hyper-parameter $\lambda_{dis}$, \ie, $\min_{\phi, \theta} - \E_{\rmX, \re\sim P}\left[\log P_\phi(e | \hat{f}_e)\right]$.
Secondly, instead of using the large $\hat{f}_c$ and $\hat{f}_e$ as the input of the discriminator, we input the the corresponding embeddings $\vz_c$ and $\vz_e$. 
Thirdly, to avoid the use of a numerical or neural integrator which causes long training time, we follow~\cite{moulimetaphysica} to fit derivatives only. 
That is, instead of using the inference forecaster $p(X|h_{\theta_c}(X_p), X_0)$, we calculate the derivatives of $X$ using $\hat{f}$ and $\hat{f}_c$, and replace the MSE over trajectory matrices with the MSE over derivative matrices. Note that this modification only eliminates the use of integrator for stability during training and thus does not affect our analysis and optimization goal.

We introduce our hyper-parameter searching space as follows.

\begin{itemize}
    \item $\lambda_{c} \sim \gU(1e-7, 1e-4)$
    \item $\lambda_{dis} \sim \gU(1e-1, 1)$
    \item $\lambda'_{adv} \sim \gU(1e2, 1e6)$
    \item $\lambda_{adv} = \lambda_c \cdot \lambda'_{adv}$
\end{itemize}

The most critical hyper-parameters are $\lambda_{c}$ and $\lambda_{adv}$ which control the information overlap between $\rf_c$ and $\rf$. Conceptually, $\lambda_{c}$ controls the conditional mutual information (MI) maximization in our invariant function learning principle, while $\lambda_{adv}$ enforces the independence constraint. The intensity of the independence enforcing $\lambda_{adv}$ is dependent on the intensity of MI maximization $\lambda_{c}$; thus, we set $\lambda_{adv}$ according to $\lambda_{c}$, leading to $\lambda_{adv} = \lambda_c \cdot \lambda'_{adv}$.

$\lambda_{dis}$ is only discriminator training, which is relatively trivial according to our discriminator descriptions in Appx.~\ref{app:architecture}.
{
\subsubsection{Metric}

Root mean square deviation (RMSE) is a commonly used metric, but it suffers difficulties when comparing datasets with different value scales. Therefore, we normalize it using its standard deviation.

\begin{equation}
    \mbox{NRMSE} = \frac{\sqrt{\E_{\rmX \sim p} \|X - \hat{X}\|^2_2}}{\mbox{Std}(\rmX)}
\end{equation}
}

\subsection{Software and Hardware}

Our implementation is under the architecture of PyTorch~\cite{paszke2019pytorch}. The deployment environments are Ubuntu 20.04 with 48 Intel(R) Xeon(R) Silver, 4214R CPU @ 2.40GHz, 755GB RAM, and graphics cards NVIDIA RTX 2080Ti.

\section{Supplementary Experiments}\label{app:supplementary_experiments}

\subsection{Ablation Study}\label{app:ablation_study}

\begin{figure}
    \centering
    \resizebox{1\linewidth}{!}{
    $
    \begin{array}{ccc}
         \includegraphics[width=0.5\linewidth]{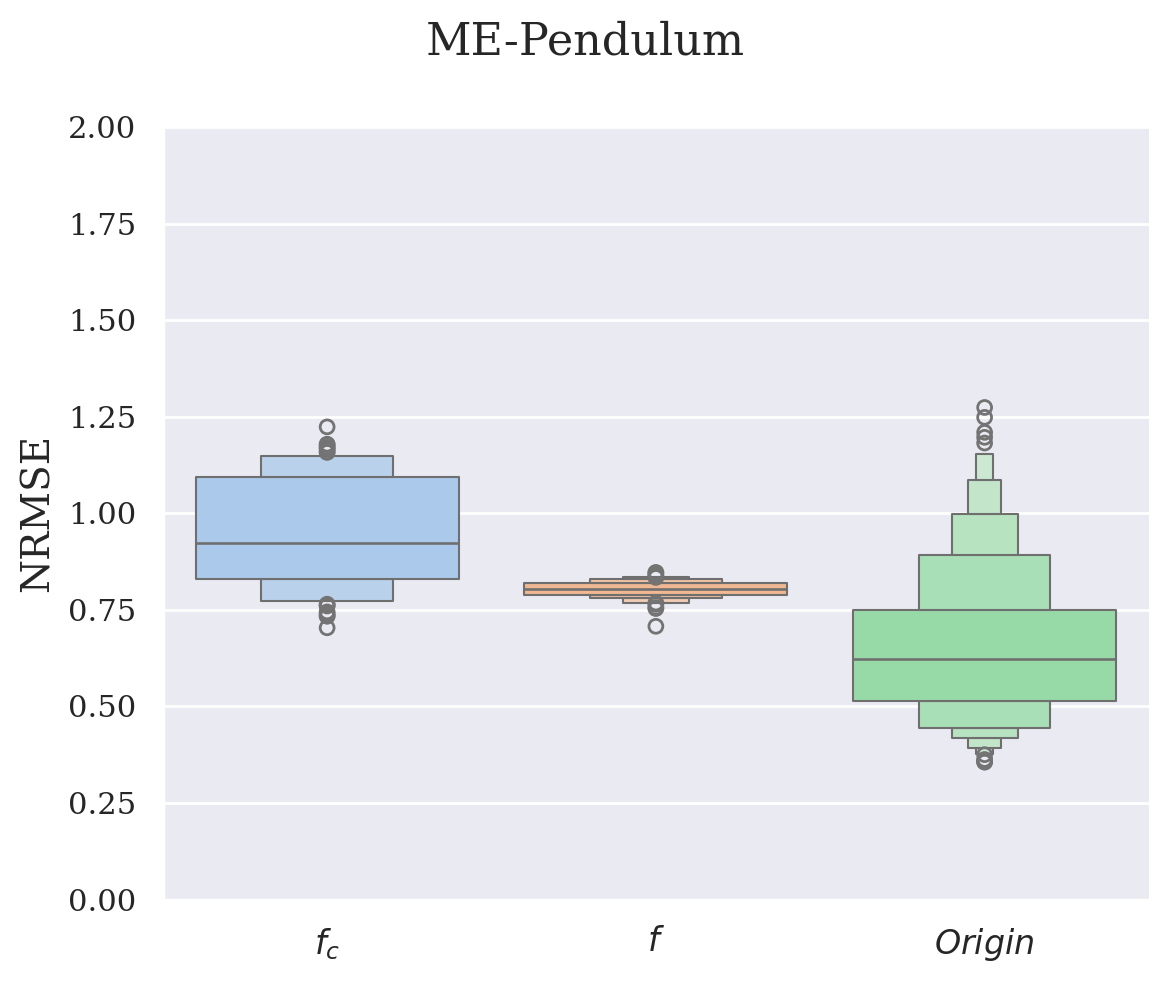} &  
         \includegraphics[width=0.5\linewidth]{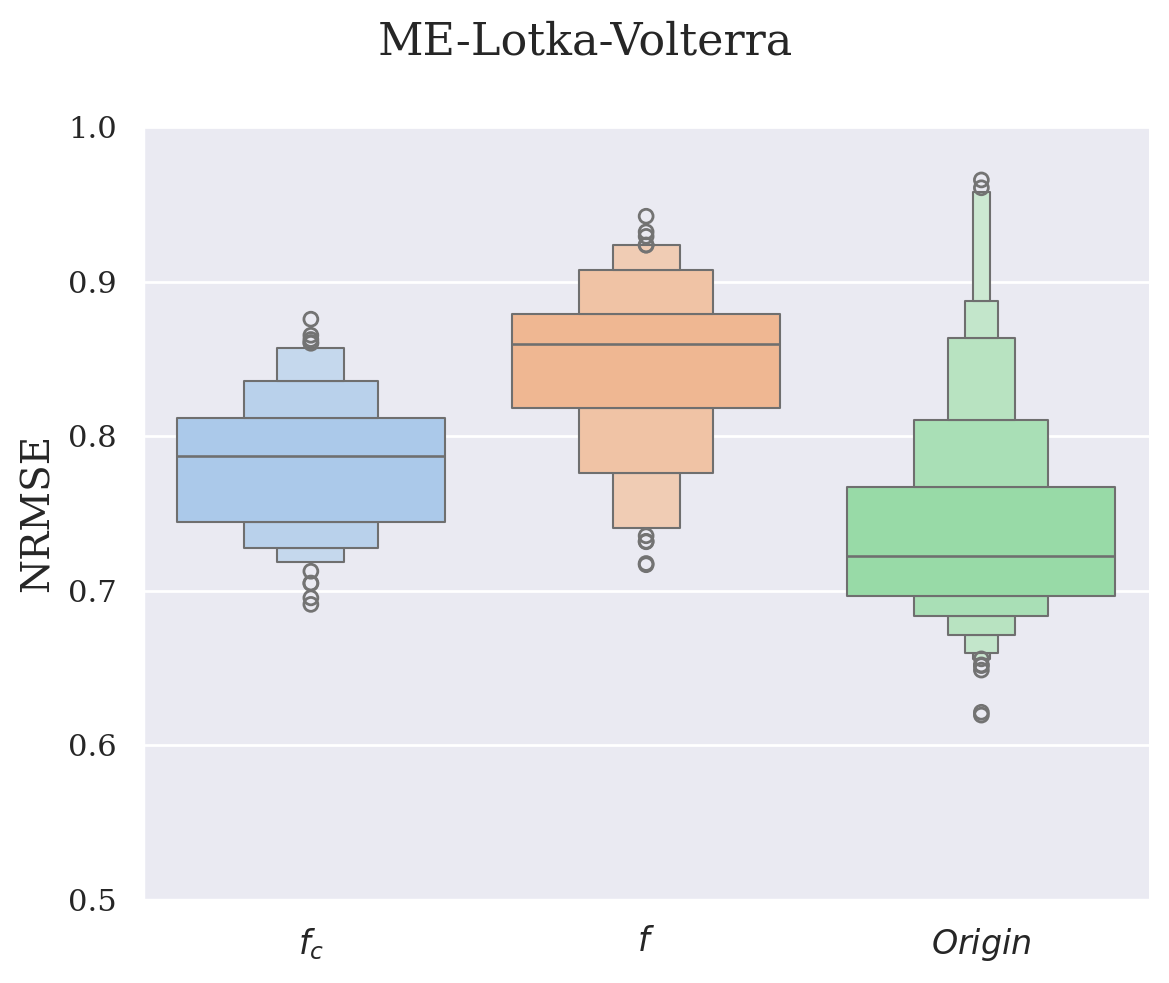} & 
         \includegraphics[width=0.5\linewidth]{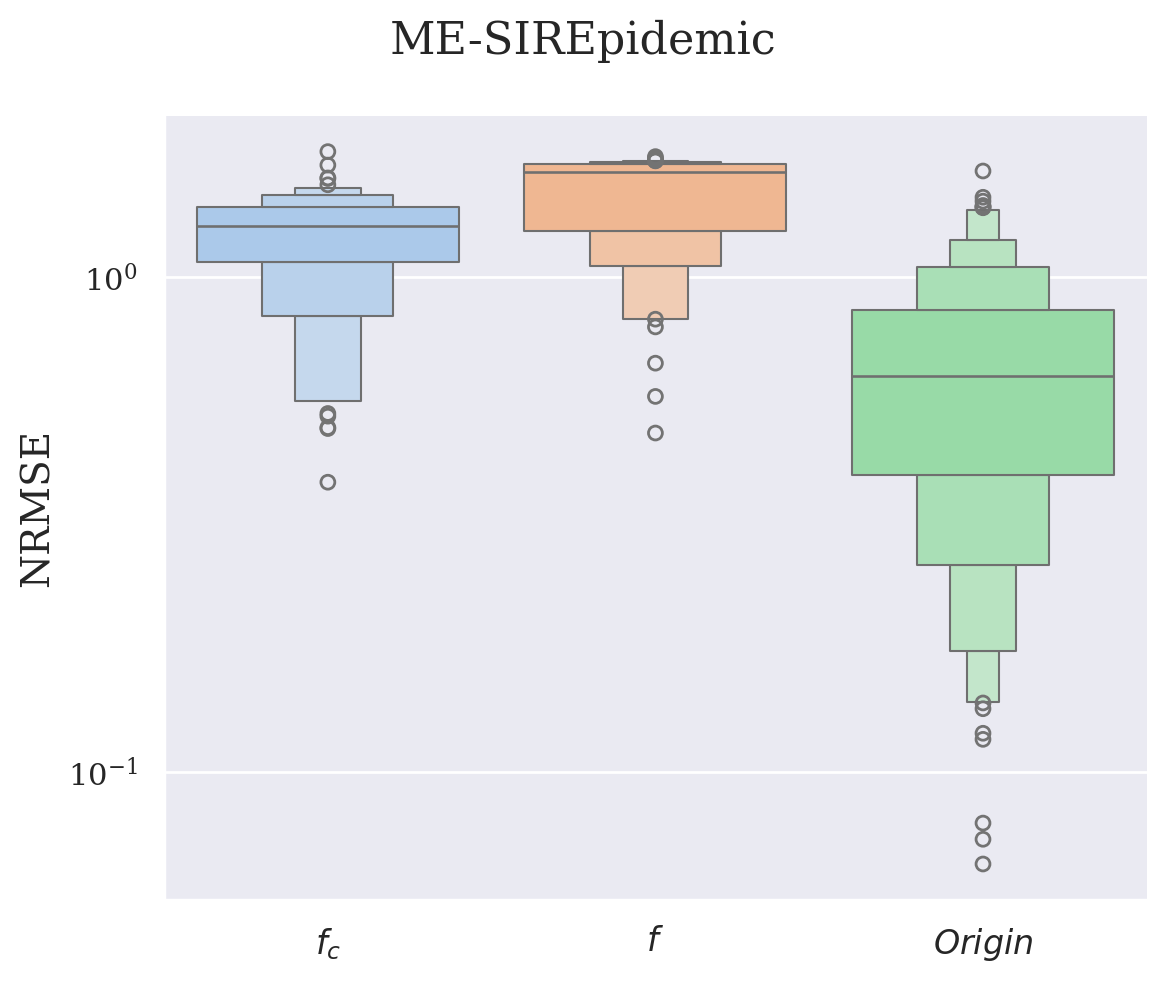}
    \end{array}
    $}
    \caption{\textbf{Ablation study} on 3 DIF variants under 3 multi-environment ODE systems. For each pipeline, we provide model candidates with 50+ random hyper-parameter selections in their searching spaces, \ie, more than 450 models in the figure.}
    \label{fig:ablation_study}
\end{figure}

As a complementary ablation study of Sec.~\ref{subsec:full_vs_inv}, we train two models by eliminating two important components from the original model, namely, $f$ pipeline and $f_c$ pipeline. The $f$ pipeline removes the discriminator and only output $\hat{f}$ to the forecasting, which neglects the disentanglement process. The $f_c$ pipeline prunes the $\hat{f}$ output while maintaining the adversarial training process. As shown in Fig.~\ref{fig:ablation_study}, both $f$ and $f_c$ pipelines fail to perform valid invariant function learning aligning with our theoretical results. Specifically, the $f$ pipeline faces difficulties in extracting invariant functions without environment information. The unsatisfactory performance of the $f_c$ pipeline is attributed to the discriminator's training failure. This is because the training of the discriminator requires the capture of environment information, but the elimination of the $\hat{f}$ part also removes the training of $\hat{z}_e$, the critical environment information captor. Therefore, the discriminator loses the most important environment information input, leading to training failure.

\subsection{Input Length and Environment Analysis}\label{app:input_length_and_environment_analysis}

\begin{figure}
    \centering
    \resizebox{1\linewidth}{!}{
    $
    \begin{array}{ccc}
         \includegraphics[width=0.5\linewidth]{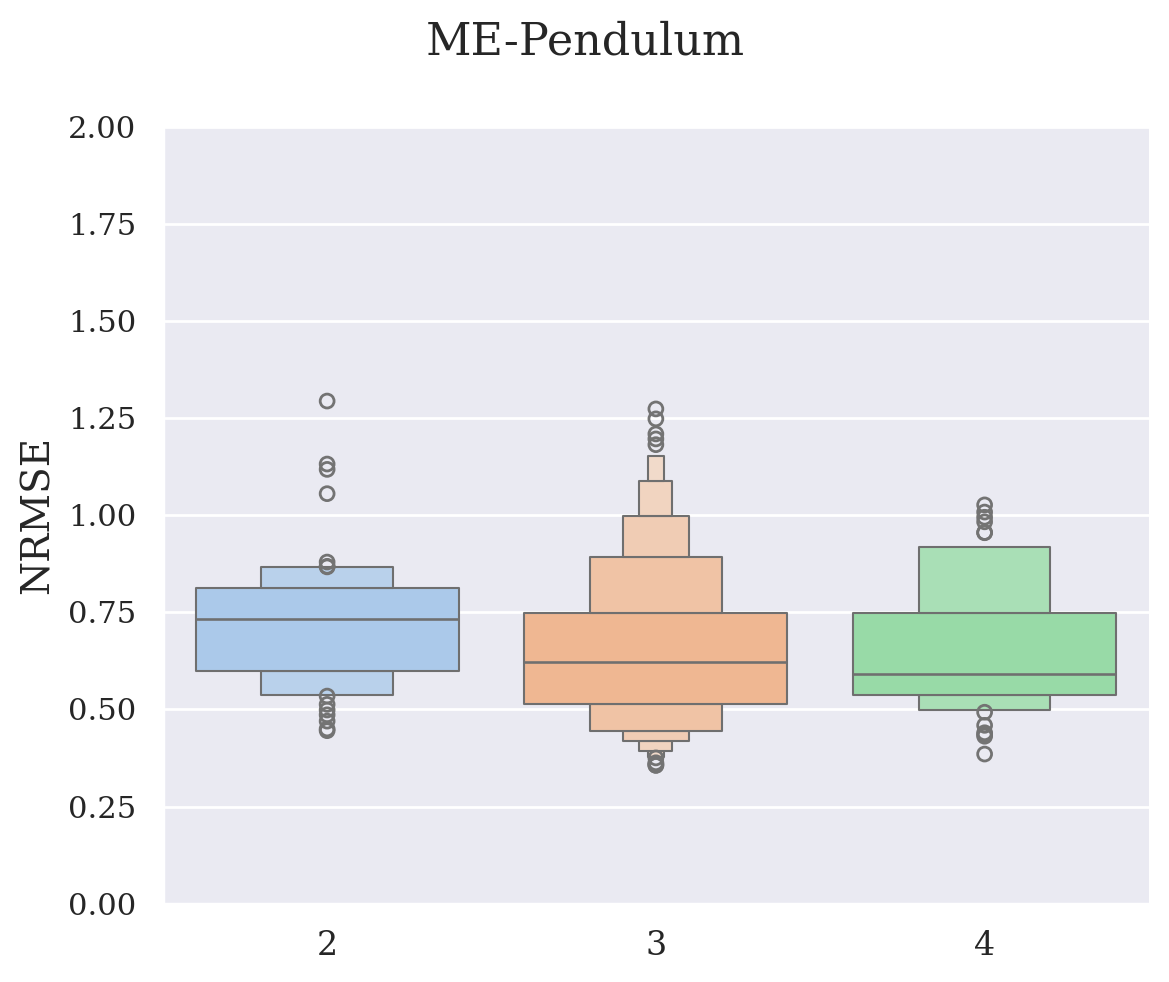} &  
         \includegraphics[width=0.5\linewidth]{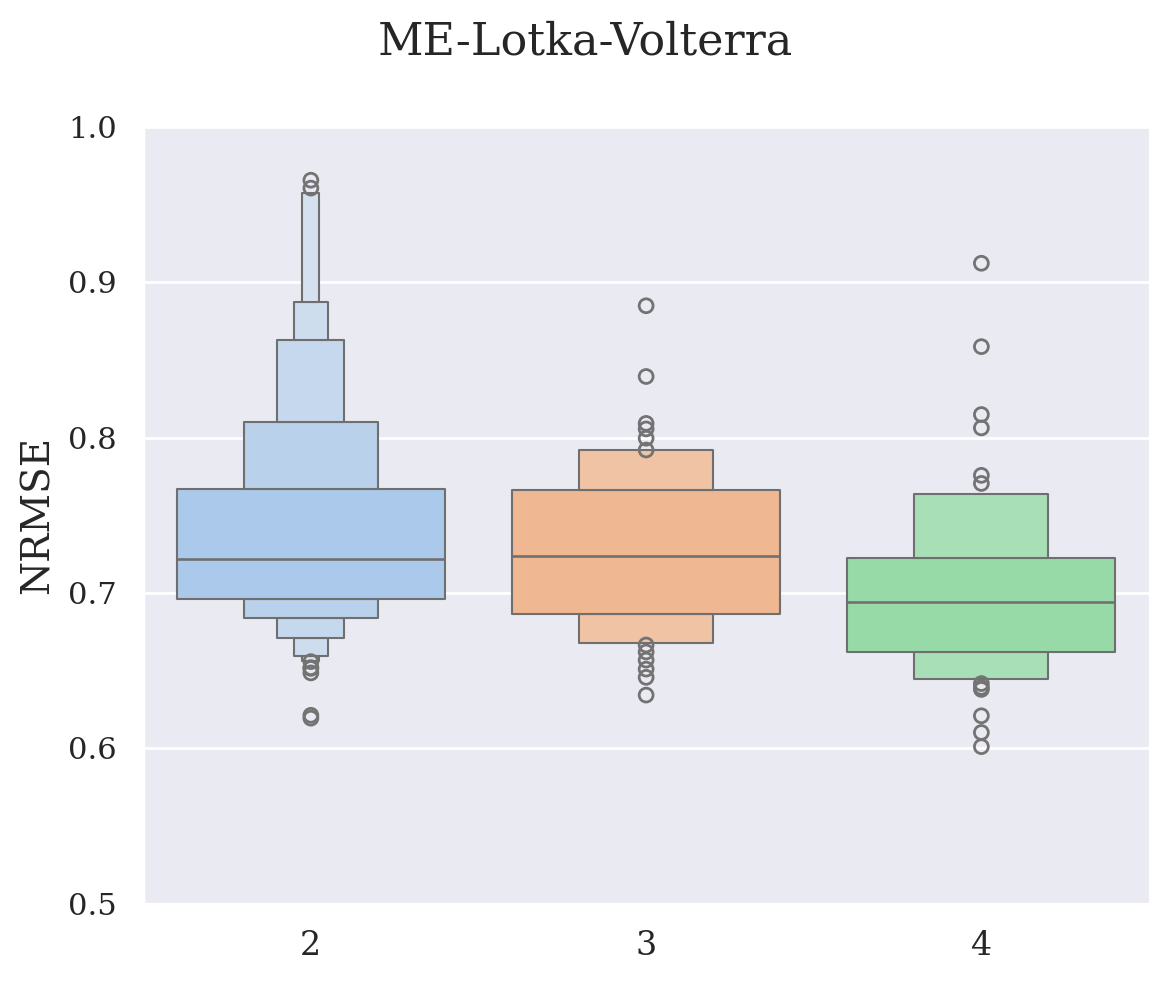} & 
         \includegraphics[width=0.5\linewidth]{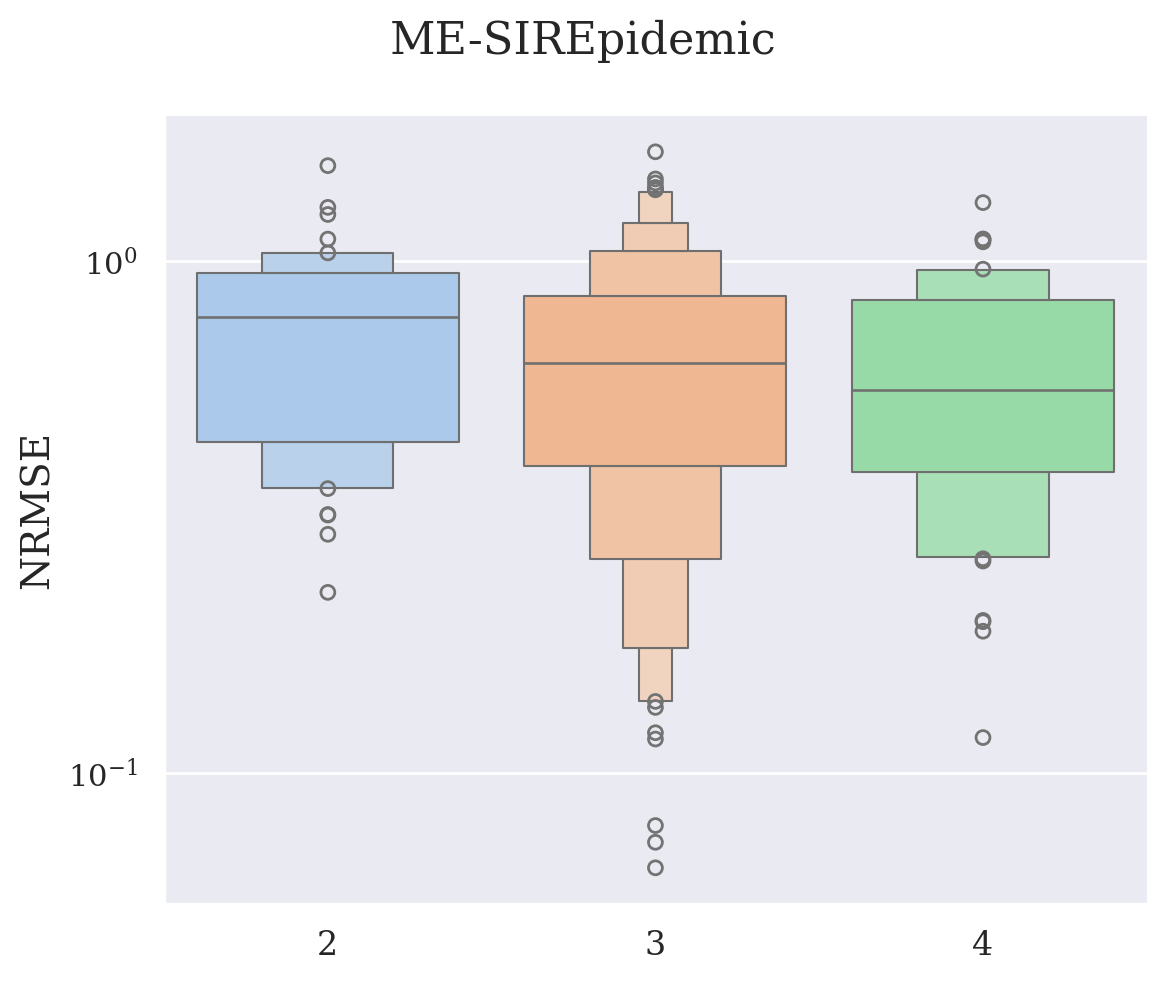}
    \end{array}
    $}
    \caption{\textbf{Trajectory input length study} on models trained with different training input length factors under 3 multi-environment ODE systems.}
    \label{fig:input_length}
\end{figure}

\begin{figure}
    \centering
    \resizebox{1\linewidth}{!}{
    $
    \begin{array}{ccc}
         \includegraphics[width=0.5\linewidth]{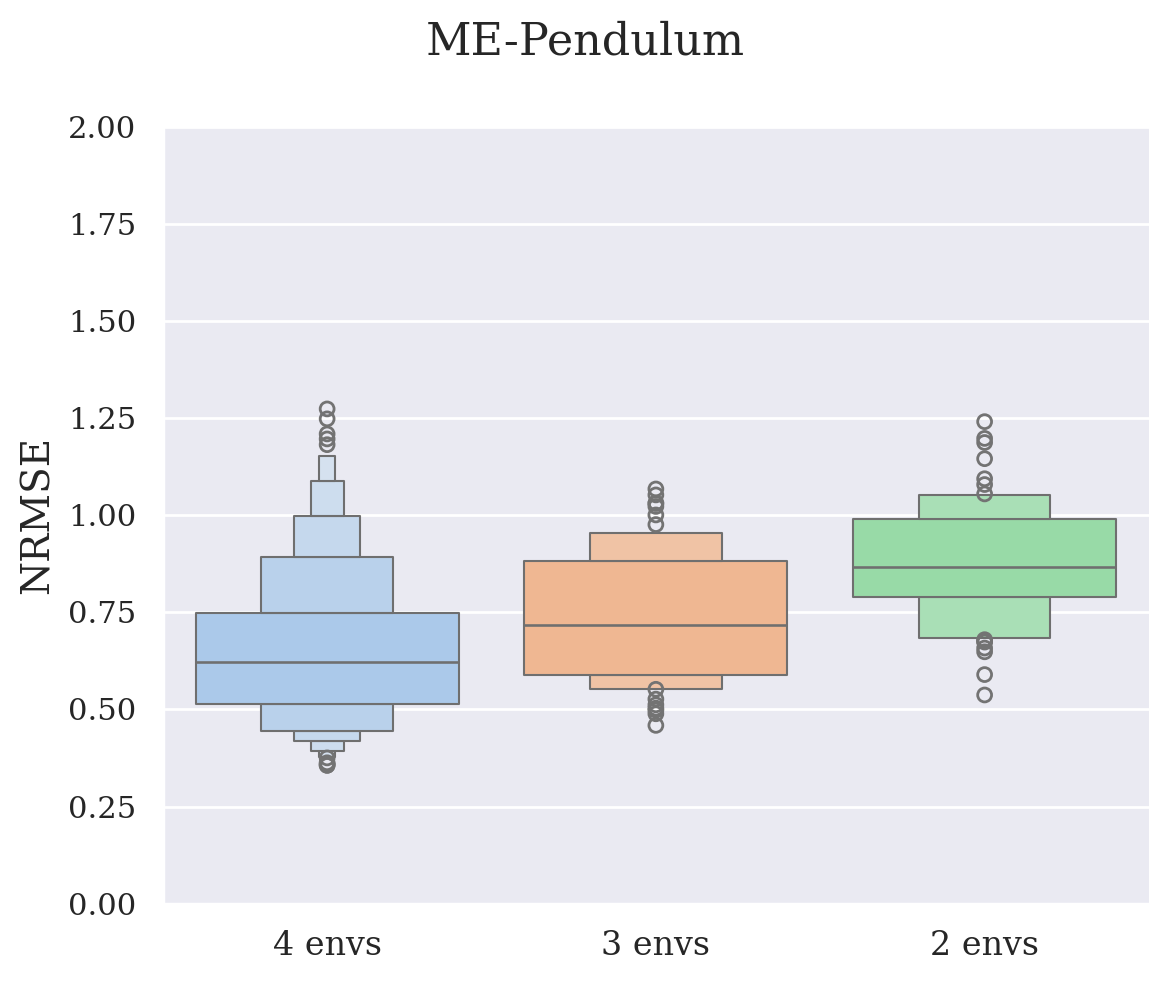} &  
         \includegraphics[width=0.5\linewidth]{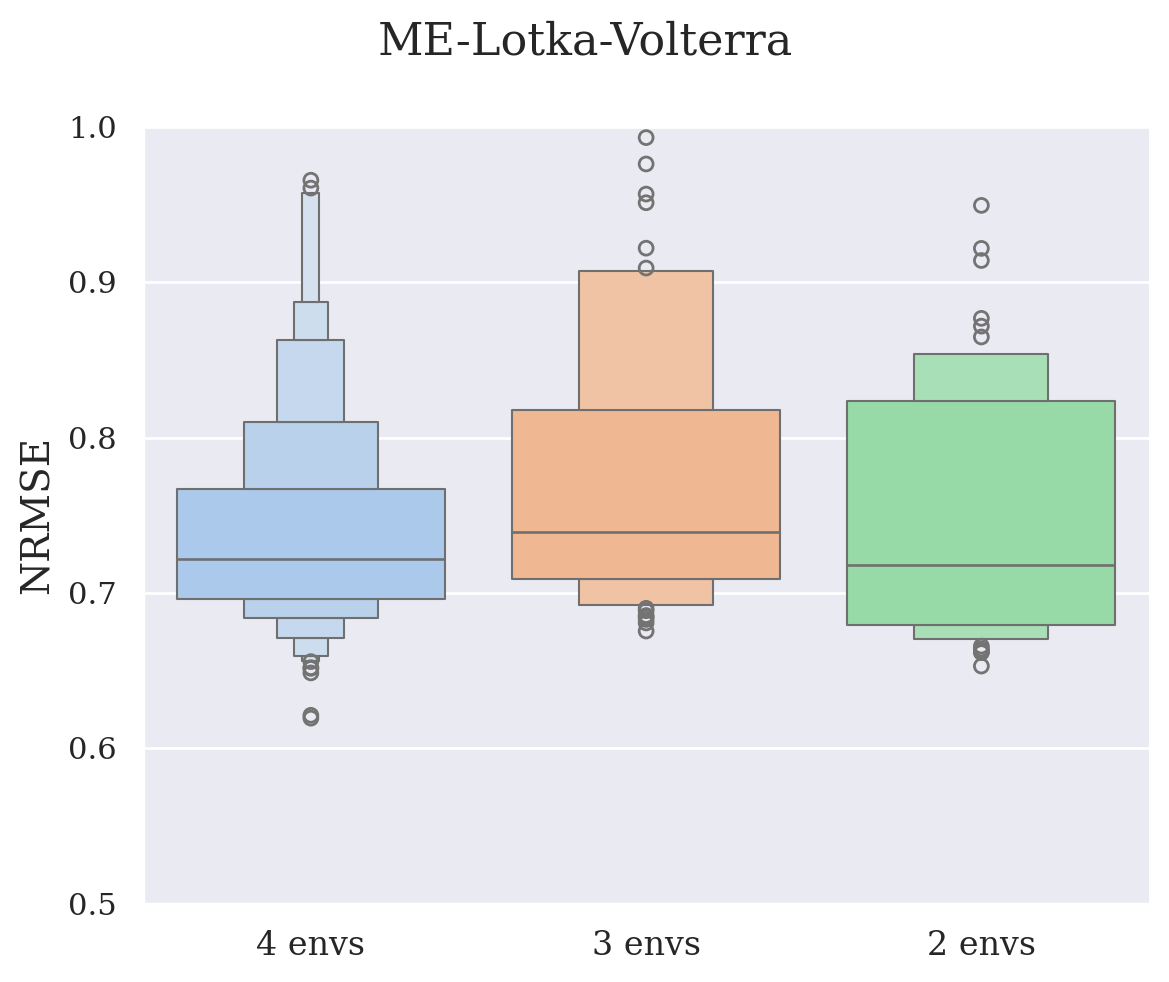} & 
         \includegraphics[width=0.5\linewidth]{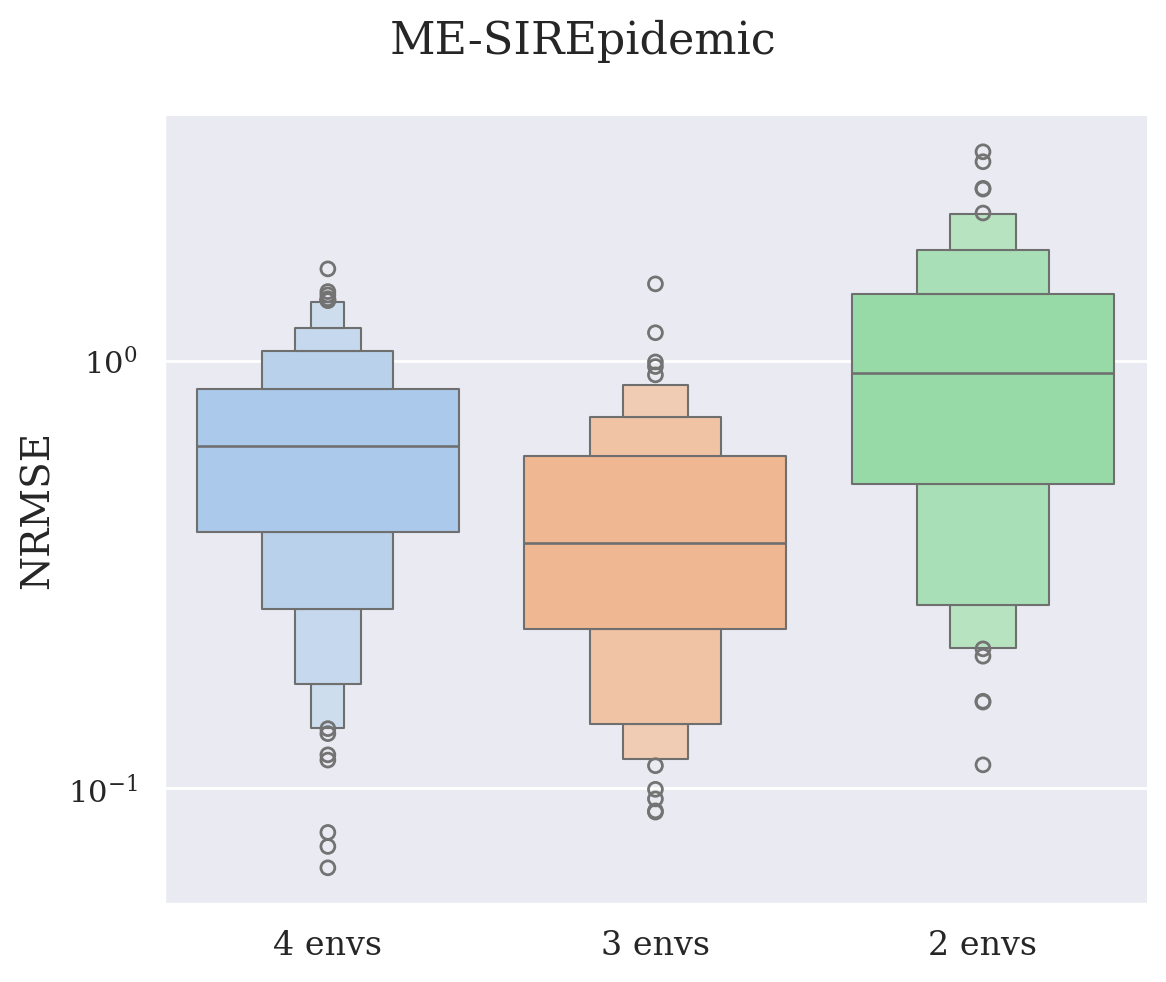}
    \end{array}
    $}
    \caption{\textbf{Environment analysis} on models trained with different numbers of training environments under 3 multi-environment ODE systems.}
    \label{fig:types_environment}
\end{figure}

In order to ensure fairness, we fix the input length and the number of environments in our experiments. However, it is also interesting to figure out the effects of the input trajectory length $T_c$ and the number of environments on the model performance. Fig.~\ref{fig:input_length} shows the performance of DIF given different input length factor $l_t$, where $T_c= \frac{T}{l_t}$. The results indicate the input length does not affect the performance significantly, where the only variances are attributed to the training difficulty of the transformer given different input lengths. Therefore, a shorter input length can perform slightly better given the same training steps.

For the environment analysis, in addition to evaluations on the full set of environments, we benchmark model performance on datasets with three and two training environments. Specifically, we select [Powered, Air, Spring] and [Powered, Air] for ME-Pendulum; [Save, Fight, Resource] and [Save, Fight] for ME-Lotka-Volterra; and [Negative, Origin, Enlarge] and [Negative, Origin] for ME-SIREpidemic. While not all possible environment combinations are evaluated, these selections provide intriguing insights. As illustrated in Fig.~\ref{fig:types_environment}, changes in the set of environments weakly affect model performance on ME-Lotka-Volterra. For ME-Pendulum, however, the inclusion of each additional environment consistently improves model performance. On ME-SIREpidemic, the performance boost observed with "3 envs" underscores the critical role of the environment Enlarge. 

Two key observations regarding the ME-SIREpidemic are worth noting. First, the average performance degradation on "4 envs" suggests a reduced focus on the important environment Enlarge due to the addition of the final environment Loop. Second, the improvement in the best performance candidate demonstrates the additional benefits of the environment Loop. These findings illustrate that while adding environments can enhance the best possible discovery of invariant functions, it also increases the average training complexity that may cause average performance degradations.

\begin{table}[!t]\centering

\caption{\textbf{Symbolic regression explanation comparisons.}}\label{tab:symbolic_regression_comparisons}
\resizebox{1\linewidth}{!}{
\begin{tabular}{c|l|l|l|l|l|l}\toprule
\multirow{2}{*}{Method} & \multicolumn{2}{c|}{ME-Pendulum} & \multicolumn{2}{c|}{ME-Lotka-Volterra} & \multicolumn{2}{c}{ME-SIREpidemic} \\\cmidrule{2-7}
&NRMSE &SR Explanation &NRMSE &SR Explanation &NRMSE &SR Explanation \\
\midrule
MAML &0.9704 & $\begin{aligned}
        \frac{d\theta_t}{dt} &= \omega_{t} - \frac{0.036 \left(\theta_t + \omega_{t}\right)}{e^{\omega_{t}}} \\
        \frac{d\omega_t}{dt} &= \frac{\sin{\left(\theta_t \right)}}{-0.67} - 0.48 \sin{\left(\left(\theta_t + 0.68\right) \sin{\left(\omega_{t} \right)} \right)}
        \end{aligned}$ &0.6774 &$\begin{aligned}
        \frac{dp_t}{dt} &= q_{t} \frac{1}{p_t + 0.63} \left(-0.022\right) \\
        \frac{dq_t}{dt} &= \frac{p_t}{\frac{p_t}{0.17} - q_{t} + q_{t} q_{t}} + \delta
        \end{aligned}$ &0.2673 &$\begin{aligned}
        \frac{dS_t}{dt} &= \frac{- S_t I_{t} - I_{t} + \cos{\left(S_t \right)}}{0.69} \\
        \frac{dI_t}{dt} &= 0.49 S_t + e^{\sin{\left(0.45 S_t \right)}} - 0.11 \\
        \frac{dR_t}{dt} &= \cos{\left(I_{t} \sin{\left(S_t \right)} \right)} \left(-0.087\right)
        \end{aligned}$ \\ 
\midrule
CoDA &0.9695 &$\begin{aligned}
        \frac{d\theta_t}{dt} &= \omega_{t} \rho \sin{\left(\theta_t + \omega_{t} + 0.94 \right)} + \omega_{t} + 0.074 \\
        \frac{d\omega_t}{dt} &= \frac{\sin{\left(- \theta_t + \omega_{t} \left(-0.20\right) + 0.37 \right)}}{0.49} - 0.57
        \end{aligned}$ &0.7097 &$\begin{aligned}
        \frac{dp_t}{dt} &= \left(1.1 - p_t p_t\right) 0.54 \\
        \frac{dq_t}{dt} &= \left(p_t + p_t\right) \left(p_t + q_{t} \left(-0.26\right)\right) - 0.66
        \end{aligned}$ &0.3184 &$\begin{aligned}
        \frac{dS_t}{dt} &= \frac{- S_t + \sin{\left(S_t \cdot 0.44 \right)} \left(-2.1\right)}{0.44} \\
        \frac{dI_t}{dt} &= \frac{3.5 S_t}{S_t + \frac{I_{t}}{S_t - 0.54}} + 0.097 \\
        \frac{dR_t}{dt} &= \frac{S_t}{I_{t} + I_{t} + \frac{e^{S_t}}{I_{t}}} + 0.029
        \end{aligned}$ \\ 
\midrule
IRM &0.7042 &$\begin{aligned}
        \frac{d\theta_t}{dt} &= \omega_{t} \cdot 0.93 \\
        \frac{d\omega_t}{dt} &= - \theta_t \alpha + \rho
        \end{aligned}$&0.6989 &$\begin{aligned}
        \frac{dp_t}{dt} &= -0.012 \\
        \frac{dq_t}{dt} &= 0.083
        \end{aligned}$ &0.9768 &$\begin{aligned}
        \frac{dS_t}{dt} &= e^{- S_t + \sin{\left(S_t \right)}} - 1.7 \\
        \frac{dI_t}{dt} &= \sin{\left(\frac{S_t}{S_t + S_t + \frac{\beta}{S_t}} - -0.12 \right)} \\
        \frac{dR_t}{dt} &= 0.33 - 0.021 S_t
        \end{aligned}$ \\ 
\midrule
VREx &0.7274 &$\begin{aligned}
        \frac{d\theta_t}{dt} &= \omega_{t} \cdot 0.92 \\
        \frac{d\omega_t}{dt} &= \alpha \left(-1.1\right) \theta_t
        \end{aligned}$&0.6877 &$\begin{aligned}
        \frac{dp_t}{dt} &= -0.032 \\
        \frac{dq_t}{dt} &= 0.12
        \end{aligned}$ &0.4652 &$\begin{aligned}
        \frac{dS_t}{dt} &= S_t \left(- I_{t} - 0.10 \beta\right) \\
        \frac{dI_t}{dt} &= \frac{S_t \beta}{S_t + S_t + \frac{\beta}{S_t}} + 0.078 \\
        \frac{dR_t}{dt} &= 0.070 + \frac{\sin{\left(\beta \right)}}{e^{I_{t}}}
        \end{aligned}$ \\ 
\midrule
Ours &0.3561 &$\begin{aligned} \frac{d\theta_t}{dt} &= 0.99 \omega_{t} \\ \frac{d\omega_t}{dt} &= -0.97 \alpha^2 \sin{\left(\theta_t \right)}  \end{aligned}$ &0.6194 &$
\begin{aligned}
\frac{dp_t}{dt} &= 1.254 p_t -0.38 q_t p_t  \\
\frac{dq_t}{dt} &= 4.1 p_t - 0.30 q_t - \gamma
\end{aligned}
$ &0.0652 &$\begin{aligned} \frac{dS_t}{dt} &= - 1.7 S_t I_{t} \\ \frac{dI_t}{dt} &= 0.42 S_t I_{t} \\ \frac{dR_t}{dt} &= -0.0088 \end{aligned}$ \\ 
\midrule
GT & NAN & $\begin{aligned}
    \frac{d\theta_t}{dt} &= \omega_{t}  \\ 
    \frac{d\omega_t}{dt} &= - \alpha^2 \sin{\left(\theta_t \right)}
    \end{aligned}$ & NAN& $\begin{aligned}
        \frac{dp_t}{dt} &= \alpha p_t - \beta p_t q_t \\ 
        \frac{dq_t}{dt} &= \delta p_t q_t - \gamma q_t
        \end{aligned}$ & NAN& $\begin{aligned} \frac{dS_t}{dt} &= - \beta \frac{S_t I_{t}}{S_t + I_t + R_t} \\ \frac{dI_t}{dt} &= \beta \frac{S_t I_{t}}{S_t + I_t + R_t} \\ \frac{dR_t}{dt} &= 0 \end{aligned}$  \\
\bottomrule
\end{tabular}}
\vspace{-0.4cm}
\end{table}

\subsection{Symbolic Regression Explanation Comparisons}\label{app:symbolic_explanation}

To further evaluate the performance differences between the proposed method and the baselines, we apply PySR to the four baseline methods and obtain analytical explanations, as summarized in Tab.~\ref{tab:symbolic_regression_comparisons}. The symbolic regression results provide an intuitive understanding of performance in relation to different NRMSE values. Specifically, when the NRMSE approaches 1, the resulting explanations are largely meaningless. As the NRMSE decreases to around 0.7, the explanations become more interpretable but may sometimes converge to oversimplified expressions, such as the IRM and VREx results on ME-Lotka-Volterra. When the NRMSE approaches zero, the expressions become more reasonable but are not always ideal due to the inherent limitations of PySR. This suggests that strong model performance does not necessarily guarantee high-quality explanations, highlighting the performance constraints of the explainer (PySR).

\subsection{Extra Visualizations}

\begin{figure}[h]
    \centering
    \begin{subfigure}[b]{0.49\linewidth}
        \includegraphics[width=1\linewidth]{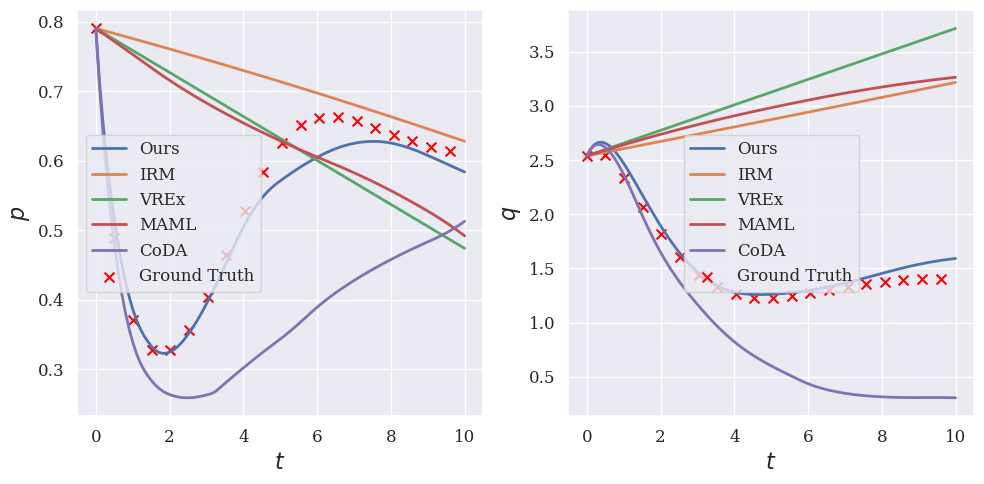}
        \caption{$X$ predictions using $\hat{f}$}
        \label{fig:results_combine_me_lv}
    \end{subfigure}
    \begin{subfigure}[b]{0.49\linewidth}
        \includegraphics[width=1\linewidth]{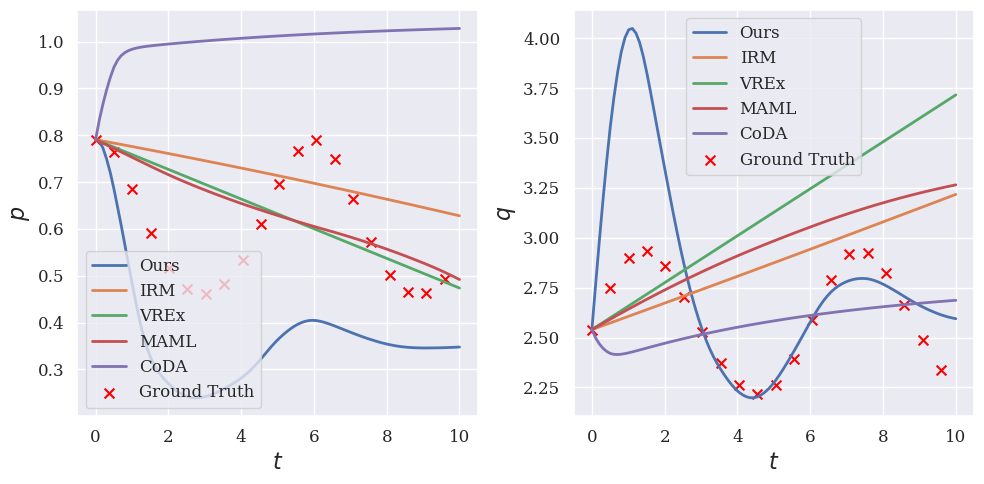}
        \caption{$X^c$ predictions using $\hat{f}_c$}
        \label{fig:results_inv_me_lv}
    \end{subfigure}
    \caption{\textbf{Visualization of trajectory predictions on ME-Lotka-Volterra}}
    \label{fig:results_me_lv}
    \vspace{-0.5cm}
\end{figure}

\begin{figure}[h]
    \centering
    \begin{subfigure}[b]{0.49\linewidth}
        \includegraphics[width=1\linewidth]{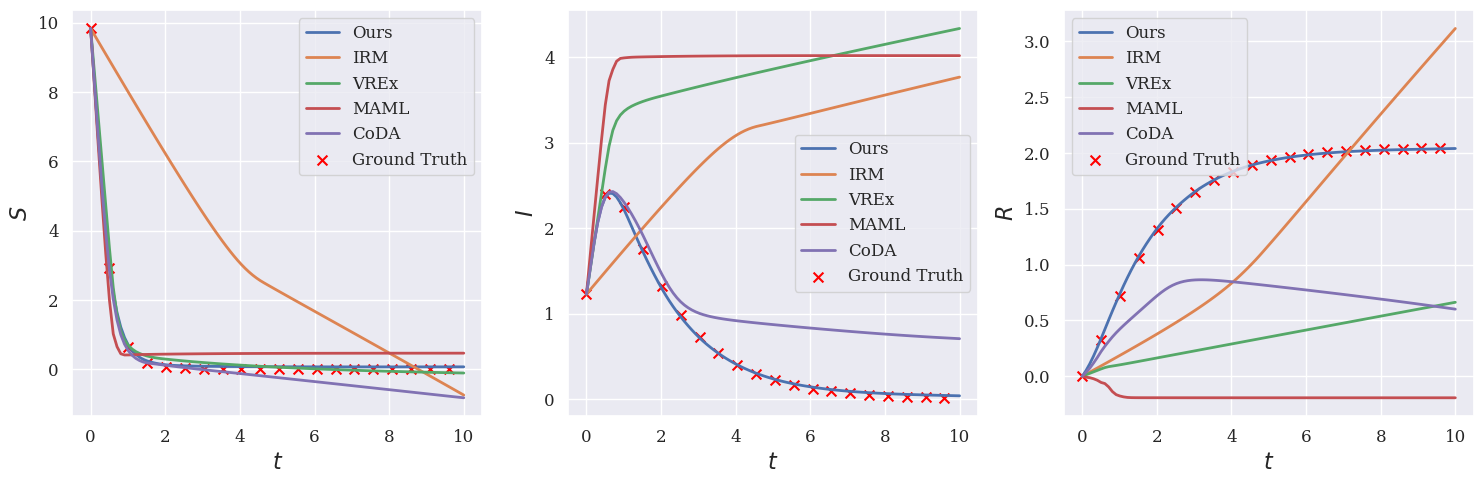}
        \caption{$X$ predictions using $\hat{f}$}
        \label{fig:results_combine_me_sir}
    \end{subfigure}
    \begin{subfigure}[b]{0.49\linewidth}
        \includegraphics[width=1\linewidth]{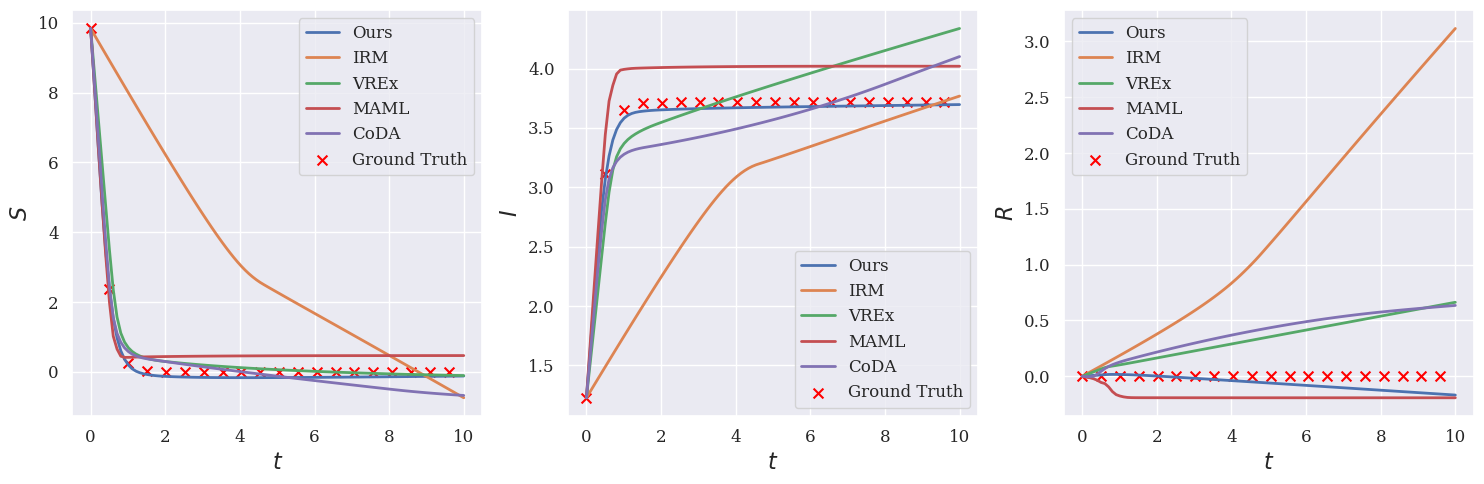}
        \caption{$X^c$ predictions using $\hat{f}_c$}
        \label{fig:results_inv_me_sir}
    \end{subfigure}
    \caption{\textbf{Visualization of trajectory predictions on ME-SIREpidemic}}
    \label{fig:results_me_sir}
    \vspace{-0.5cm}
\end{figure}

In this section, we present visualization comparisons for ME-Lotka-Volterra (Fig.~\ref{fig:results_me_lv}) and ME-SIREpidemic (Fig.~\ref{fig:results_me_sir}). The results for ME-SIREpidemic closely align with its quantitative findings. For the more challenging task of ME-Lotka-Volterra, our method's predicted trajectories remain closer to the ground truth. In the $X^c$ predictions, where most methods fail, our predicted trajectory has turning points closest to the ground truth in terms of timing, although there are deviations in magnitude (Fig.~\ref{fig:results_inv_me_lv}). The complexity of the ME-Lotka-Volterra task arises from several factors, including the introduction of exponential functions within environments, the distribution of environments, and the limited number of samples in each environment. Addressing these challenges requires carefully designed benchmarks by domain experts, which we will discuss further in the limitations section.

\section{Efficient Hypernetwork Implementation}\label{app:efficient_hypernetwork_implementation}

\begin{wrapfigure}{R}{0.3\textwidth}
    \includegraphics[width=1\linewidth]{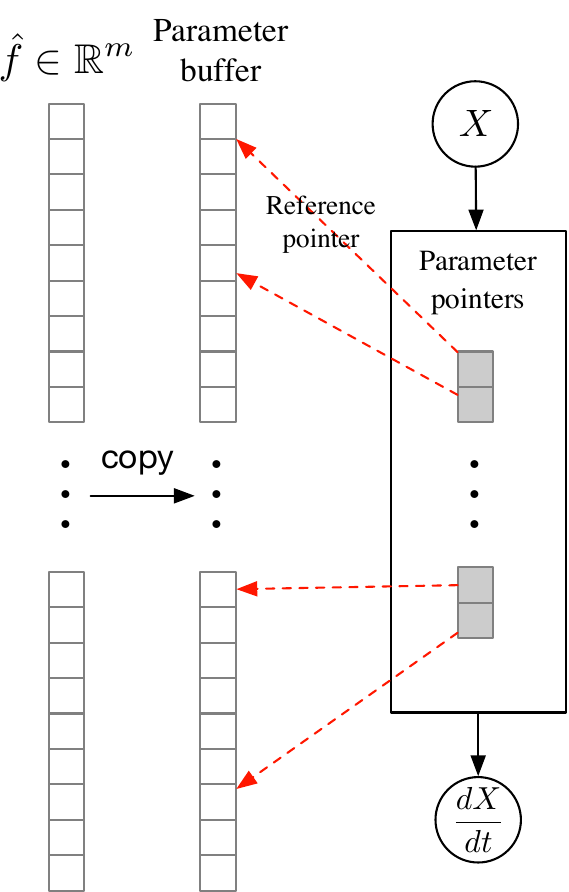}
    \caption{\textbf{Reference-based hypernetwork implementation.}}\label{fig:hypernetwork_trick}
\end{wrapfigure}
One of the major challenges that limits the usage of hypernetworks is the implementation complexity. Most current implementations requires either re-implementing basic neural networks~\citep{oshg2019hypercl} or assigning predicted weights to the main function (forecaster) one by one for each forward pass~\citep{ortiz2023nonproportional, sudhakaran2022, Kirchmeyer2022coda}. To overcome these issues, we propose a \emph{Reference-based} hypernetwork implementation technique that uses pure PyTorch without introducing any new modules or CUDA kernels. Our proposed technique does not require reassigning weights for each sample in one forward pass, \ie, for any continuous $N$ training iterations with batch size of $B$ and a forecaster with $M$ parameter variables, our computation complexity is $O(NM + BM)$, instead of $O(BMN)$ as previous implementations.

Specifically, for every forward pass, we create a function parameter vector buffer $\in \R^m$ with fixed storage space, instead of reshaping and assigning the predicted function parameters with complexity $O(BM)$. As shown in Fig.~\ref{fig:hypernetwork_trick}, we consider the derivative neural network parameter variables as storage space pointers, \ie, the network stores references instead of matrices. The fractions of function parameter vector buffer are pointed by these pointers; thus, once the buffer's values change by the predicted function parameters, \eg, $\hat{f}$, the derivative network's parameters will be changed automatically without any assignment operators. {To maintain the buffer's fixed storage space, several in-place operations are applied to maintains computational graphs and gradients.}

\subsection{Efficiency Comparisons}


\begin{table}[tb]
    \centering
    \caption{\textbf{Hypernetwork implementation efficiency comparisons}}
    \resizebox{1\textwidth}{!}{
    \begin{tabular}{l|ccc|ccc}
        \toprule
        Implementation & Vectorization & Copy & Reference & First Step Time (s) & Avg Time $\pm$ Std (s) & Speedup\\
        \midrule
        Non-vectorized & \xmark & \cmark & \xmark & 0.2466 & 0.1818 $\pm$ 0.0601 & 1x\\
        Module-based & \cmark & \cmark & \xmark & 0.1768 & 0.1513 $\pm$ 0.0737 & 1.2x\\
        Functional-based & \cmark & \xmark & \xmark & 0.2013 & 0.0198 $\pm$ 0.0007 & 9.2x\\
        Ours & \cmark & \xmark & \cmark & 0.1805 & 0.0108 $\pm$ 0.0006 & 16.8x\\
        \bottomrule
    \end{tabular}}
    \label{tab:hypernetwork_efficiency}
\end{table}

To evaluate the efficiency of our hypernetwork implementation, we compare it against several common implementation approaches. Specifically, we measure the forward pass time of our model over 200 continuous iterations in training mode, recording both the time for the first iteration and the average time for the subsequent iterations. While the first iteration typically takes a similar amount of time across all implementations, their performance diverges significantly in the subsequent iterations. As shown in Tab.~\ref{tab:hypernetwork_efficiency}, the \textit{Non-vectorized} implementation
represents methods that do not vectorize the derivative function and therefore must run different derivative functions sequentially. Approaches like CoDA~\citep{Kirchmeyer2022coda} attempt to vectorize the model by employing group-based convolution networks. However, these module-based implementations rely on stateful PyTorch modules, requiring the derivative function module to be replicated during each forward pass, which slows down the process. While these \textit{Module-based} implementations offer a slight improvement over non-vectorized methods due to the vectorization benefits, the performance gain is limited. 

In contrast, our vectorized \textit{Functional-based} implementation leverages PyTorch's functional methods, achieving 9.2x speedup by avoiding the overhead associated with stateful modules. Note that vectorizing hypernetworks using libraries such as hypnettorch~\citep{oshg2019hypercl} can deliver similar speedups. Finally, our \textit{Reference-based} implementation, which eliminates parameter assignment after the first iteration, nearly doubles the forward pass speed (16.8x) compared to implementations that require such assignments. Notably, this optimization remains applicable for potential future CUDA-based parallel hypernetwork implementations.

\end{document}